\documentclass{article}

\usepackage{microtype}
\usepackage{graphicx}
\usepackage{subfigure}
\usepackage{booktabs} 
\usepackage{pgfplots}
\usepgfplotslibrary{fillbetween}
\usetikzlibrary{patterns}
\pgfplotsset{compat=1.5.1}
\usetikzlibrary{calc}
\usepackage{nicefrac}

\usepackage{hyperref}

\usepackage[algo2e, inoutnumbered, algoruled,vlined]{algorithm2e} %

\usepackage{macros}
\usepackage{thm-restate}

\usepackage[accepted]{icml2021}

\newif\iftodo
\todofalse

\newif\ifappendix
\appendixfalse

\icmltitlerunning{On Limited-Memory Subsampling Strategies for Bandits}


\begin{document}

\twocolumn[
\icmltitle{On Limited-Memory Subsampling Strategies for Bandits}

\icmlsetsymbol{equal}{*}

\begin{icmlauthorlist}
\icmlauthor{Dorian Baudry}{equal,lille} %
\icmlauthor{Yoan Russac}{equal,ens} %
\icmlauthor{Olivier Cappé}{ens} %
\end{icmlauthorlist}

\icmlaffiliation{lille}{Univ. Lille, CNRS, Inria, Centrale Lille, UMR 9198-CRIStAL, F-59000 Lille, France}
\icmlaffiliation{ens}{DI ENS, CNRS, Inria, ENS, Université PSL, Paris, France}

\icmlcorrespondingauthor{Dorian Baudry}{dorian.baudry@inria.fr}
\icmlcorrespondingauthor{Yoan Russac}{yoan.russac@ens.fr}

\icmlkeywords{Bandits, Non-stationarity, sliding window, sub-sampling strategies}

\vskip 0.3in
]


\printAffiliationsAndNotice{\icmlEqualContribution} 

\begin{abstract}
There has been a recent surge of interest in non-parametric bandit algorithms
based on subsampling. One drawback however of these approaches is the
additional complexity required by random subsampling and the storage
of the full history of rewards. Our first contribution is to show 
that a simple deterministic subsampling rule, proposed in the recent work of
\citet{baudry2020sub} under the name of “last-block subsampling”, is 
asymptotically optimal in one-parameter exponential families. In 
addition, we prove that these guarantees also hold when limiting 
the algorithm memory to a polylogarithmic function of the 
time horizon. These findings open up new perspectives, in 
particular for non-stationary scenarios in which the arm 
distributions evolve over time. We propose a variant of the 
algorithm in which only the most recent observations are 
used for subsampling, achieving optimal regret guarantees 
under the assumption of a known number of abrupt changes. Extensive 
numerical simulations highlight the merits of this approach, particularly 
when the changes are not only affecting the means of the rewards.
\end{abstract}

\section{Introduction}
\label{introduction}

In the $K$-armed stochastic bandit model, the learner repeatedly picks an action
among $K$ available alternatives and only observes
the rewards associated with her actions. By interacting
with the environment, the learner aims at maximizing her expected sum
of rewards and needs to sequentially adapt her decision
strategy in light of the information gained 
up to now. 
In this model, over-confident policies are provably
suboptimal and a proper trade-off between exploitation and
exploration has to be found.

Multi-armed bandits models have been used to address a wide range of
sequential optimization tasks under uncertainty: online recommendation
\cite{li2011unbiased, li2016collaborative}, strategic pricing
\cite{bergemann1996learning} or clinical trials \cite{zelen1969play,
  vermorel2005multi} to name a few. In its standard formulation the
multi-armed bandit model postulates that the distributions of the
rewards obtained when drawing the different arms remain constant over
time. However, in some scenarios the stationary assumption is not
realistic.  In clinical trials, the disease to defeat may mutate and
the initially optimal treatment could become suboptimal compared to
another candidate \cite{gorre2001clinical}. In strategic pricing
problems, the price maximizing the profit of a given asset can evolve 
with the introduction of a new product on the market
\cite{eliashberg1986impact}. For online recommendation systems, the
preferences of the users are likely to evolve \cite{wu2018learning}
and collected data becomes progressively obsolete. 

During the past ten years, several works have considered
non-stationary variants of the multi-armed bandit model, 
proposing methods that can be grouped into 
two main categories: they either
actively try to detect modifications in the distribution of the arms
with changepoint detection algorithms \cite{liu2017change, cao2019nearly,
auer2019adaptively,chen2019new,besson2020efficient}
or they passively forget past information \cite{garivier2011upper,raj2017taming,trovo2020sliding}. To some extent,
all of these methods require some knowledge on the distribution to
obtain theoretical guarantees.

To balance exploration and exploitation, the algorithms
mentioned so far are based on one of the two standard building blocks
introduced in the bandit literature: \textit{Upper
  Confidence Bound} (UCB) constructions \cite{auer2002finite} or
\textit{Thompson Sampling} (TS) \cite{thompson1933likelihood}.
However, there has been a recent
surge of interest for alternative non-parametric bandit strategies
\cite{kveton2019perturbed,kveton2019garbage,riou2020bandit}. Instead
of using prior information on the reward distributions as in
Thompson sampling or of building tailored
upper-confidence bounds \cite{cappe2013kullback} those methods only use
the empirical distribution of the data. 
These algorithms are non-parametric in
the sense that the \emph{exact same} implementation can be used with different 
probability
distributions, while still achieving optimal regret guarantees (in a sense to be
defined in Section~\ref{sec::prelim} below).

In particular, subsampling algorithms \cite{baransi2014sub,
  chan2020multi, baudry2020sub} have demonstrated their potential
thanks to their flexibility and strong theoretical guarantees.  From a
high level perspective, they all rely on the same two components.
\textbf{(1)}~\textit{subsampling}: the arms that have been pulled a
lot are randomized by sampling only a fraction of their
history. \textbf{(2)}~\textit{duels}: the arms are pulled based on the
outcomes of duels between the different pairs of arms.  Note that the
term \textit{duel}, which we will also use in the following, refers to the
algorithmic principle of comparing the arms two by two, based on their
subsamples. It is totally unrelated to the dueling bandit framework
introduced by \citet{yue2009dueling}.

\paragraph{Scope and contributions}
In this paper, we build on the Last-Block Subsampling Duelling
Algorithm (LB-SDA) introduced by \citet{baudry2020sub} but for which no
theoretical guarantees were provided. 
This approach is
of interest because of its simplicity and its computational
efficiency compared to other strategies based on randomized
subsampling. We first prove that for stationary environments LB-SDA is
asymptotically optimal in one-parameter exponential family models and
therefore matches the guarantees obtained by \citet{baudry2020sub} for
randomized subsampling schemes. The main technical challenge is to
devise an alternative to the \textit{diversity} condition used in their work,
 which was specifically designed for randomized
subsampling schemes.
 
Furthermore, we show that, without additional changes, these guarantees
still hold for a variant of the algorithm using a \textit{limited
  memory} of the observations of each arm. We prove that
storing $\Omega\left((\log T)^2\right)$ observations instead of $T$ is
sufficient to ensure the asymptotic guarantees, making the algorithm
more tractable for larger time horizons. To the best of our knowledge,
this paper is the first to propose an asymptotically optimal
subsampling algorithm with polylogarithmic storage of rewards under
general assumptions.

Building a subsampling algorithm based on the most recent observations
makes it an ideal candidate for a passively forgetting policy.  Our
third contribution is to propose a natural extension of the LB-SDA
strategy to non-stationary environments. By limiting the extent of
the time window in which subsampling is allowed to occur, one
obtains a passively forgetting non-parametric bandit algorithm, which
we refer to as Sliding Window Last Block Subsampling Duelling
Algorithm (SW-LB-SDA). To analyze the performance of this algorithm,
we assume an abruptly changing environment in which the reward distributions
change at unknown time instants called \textit{breakpoints}.
We show that SW-LB-SDA guarantees a regret of order
$\mathcal{O}(\sqrt{\Gamma_T T \log(T)})$ for any abruptly changing
environment with at most $\Gamma_T$ breakpoints, thus matching the
lower bounds from \citet{garivier2011upper}, up to logarithmic factors.
The only required assumption is that, during each stationary phase, the
reward distributions belong to the same one-parameter exponential
family for all arms.
Due to its
non-parametric nature, this algorithm can thus be used in many
scenarios of interest beyond the standard bounded-rewards~/
change-in-the-mean framework.  We discuss some of these scenarios in
Section~\ref{sec::experiments}, where we validate numerically the
potential of the approach by comparing it with a variety of
state-of-the-art algorithms for non-stationary bandits.

\section{Preliminaries}\label{sec::prelim}

The algorithms to be presented below are designed for the \emph{stochastic K-armed bandit} model, which is the most studied setting in the bandit literature. We introduce in this section the two variants of this basic model that will be considered in the paper: \emph{stationary} and \emph{abruptly changing} environments.

\paragraph{Stationary environments}
When the environment is stationary, the $K$ arms are characterized
by the reward distributions $(\nu_k)_{k \leq K}$ and their associated means
$(\mu_k)_{k \leq K}$, with
$\mu^\star = \max_{k \in \{1,...,K\}} \mu_k$ denoting the
highest expected reward.  We denote by $(Y_{k,s})_{s \in \mathbb{N}}$ the
i.i.d. sequence of rewards from arm $k$. Following
\citet{chan2020multi}, our algorithm operates in successive rounds, whose
length varies between 1 and $K$ time steps. At each
round $r$, the \textit{leader} denoted $\ell(r)$ is defined and $(K-1)$
duels with the remaining arms called \textit{challengers} are
performed.  Denoting by $N_k(r)$ the number of pulls of arm $k$ up to the
round $r$ the leader is the arm that has been most pulled. Namely,
\begin{equation}
\label{eq:leader_stat}
\ell(r) = \aargmax_{k \in \{1,...,K \} } N_k(r) \;.
\end{equation}
When several arms are candidate for the maximum number of
pulls, the one with the largest sum of rewards is chosen. If this is
still not sufficient to obtain a unique arm, the leader is chosen at random
among the arms maximizing both criteria.
At round $r$, a subset
$\mathcal{A}_{r} \subset \{1,...,K\}$ is selected by the learner based
on the outcomes of the duels against $\ell(r)$. Next, all arms in
$\cA_r$ are drawn, yielding $Y_{k, N_{k}(r)}$ for $k \in \cA_r$, where
$N_k(r) = \sum_{s=1}^r \ind(k \in \mathcal{A}_s)$.

The regret is defined as the expected difference between the highest expected reward and the
rewards collected by playing the sequence of arms $(A_t)_{t \leq T}$:
\[
\mathcal{R}_T =\mathbb{E}\left[ 
\sum_{t=1}^T (\mu^\star  - \mu_{A_t}) \right] \;.
\]
For distributions in one-parameter
exponential families, the lower bound of 
\citet{lai1985asymptotically} states that no strategy can
systematically outperform the following asymptotic regret lower bound
$$
\liminf_{T \to \infty} \frac{\mathcal{R}_T}{\log(T)} \geq \sum_{k: \mu_k < \mu^\star}
\frac{\mu^\star- \mu_k}{\text{kl}(\mu_k, \mu^\star)} \;.
$$

\paragraph{Abruptly changing environments}
In Section~\ref{sec:NS-lb-sda}, we consider abruptly changing environments. The number of
breakpoints up to time $T$, denoted $\Gamma_T$, is defined by
\[
  \Gamma_T = \sum_{t=1}^{T-1} \ind \{\exists k, \nu_{k,t} \neq \nu_{k,
    t+1} \}.
\]
The time instants $(t_1,..., t_{\Gamma_T})$ associated to
these breakpoints define $\Gamma_T + 1$ stationary
phases where the reward distributions are fixed. Note that in this model,
the change do not need to affect all
arms simultaneously. In such environments, letting
$\mu_t^\star = \max_{k \in \{1,...,K\}} \mu_{k,t}$ denote the best arm at
time $t$, the performance of a policy is measured through the \emph{dynamic
regret} defined as
\[
  \mathcal{R}_T = \mathbb{E} \left[ \sum_{t=1}^T (\mu_t^\star - \mu_{A_t})
  \right].
\]
We will explain how to extend the notion of leader to this
setting in~Section~\ref{sec:NS-lb-sda}.

In the non-stationary case, the lower bound for the regret takes a
different form: for any strategy, there exists an abruptly changing
instance such that
$\mathbb{E}[\mathcal{R}_T] = \Omega(\sqrt{T \Gamma_T})$
\cite{garivier2011upper,seznec2020single}. Note that in the
bandit literature, there is also another, more general, way of characterizing
non-stationary environments based on a variational distance introduced
by \citet{besbes2014stochastic}. In this work, we however only
consider the case of abruptly changing environments.


\section{LB-SDA in Stationary Environments}
\label{sec::stationnary_lbsda}
In this section we detail the subsampling strategy used in the LB-DSA
algorithm and obtain asymptotically optimal regret guarantees for its
performance. In Section~\ref{subsection:lb_sda_memory}, we consider
the variant of LB-SDA in which the memory available to the algorithm
is strongly limited.
\subsection{Last Block Sampling}
Compared to the algorithms analyzed in \cite{baudry2020sub} where the
sampler is randomized, we consider a \textit{deterministic sampler}.
At round $r$, the duel between arm $k \neq \ell(r)$ and the leader
consists in comparing the average reward from arm $k$ with the
average reward computed only from the last $N_k(r)$ observations of the leader. 
The challenger $k$ thus wins its duel if
\begin{equation}
\label{eq:winning_duel_stat}
\bar Y_{k, N_k(r)} \geq \bar Y_{\ell(r), N_{\ell(r)}(r) - N_k(r) + 1 : N_{\ell(r)}(r)} \;,
\end{equation}
where $\bar Y_{k,i:j} = \frac{1}{j-i+1}\sum_{n=i}^{j} Y_{k, n}$ denotes 
the average computed on the $j-i+1$ observations of arm $k$
between its $i$-th and $j$-th pull, and $\bar Y_{k, n}$ is a shortcut
for $\bar Y_{k, 1:n}$.

At each round, the set $\cA_{r+1}$ includes all of the
challengers that have defeated the leader, according to
Equation \eqref{eq:winning_duel_stat},
as well as under-explored arms for which $N_k(r)\leq\sqrt{\log(r)}$.
If $\cA_{r+1}$ is empty, only the leader is pulled.
Combining these elements gives LB-SDA detailed below.

\begin{algorithm}[h]
\SetKwInput{KwData}{Input}
\KwData{$K$ arms, horizon $T$}
\SetKwInput{KwResult}{Initialization}
\KwResult{$t \leftarrow 1$, $r \leftarrow 1$, $\forall k \in \{1, ..., K \}, N_k \leftarrow 0$}
\While{$t < T$}{
$\mathcal{A} \leftarrow \{\}$,  $\ell \leftarrow \text{leader}(N, Y)$ \\
\If{$r=1$}{
$\cA \leftarrow \{1, \dots, K\}$ (Draw each arm once)}
\Else{
\For{$k \neq \ell \in \{1,...,K\}$}{
\If{$N_{k} \leq \sqrt{\log(r)}$ 
\textnormal{or} $\bar Y_{k, N_k} \geq 
\bar Y_{\ell, N_{\ell}- N_{k}+1: N_{\ell} }$ }
{$\mathcal{A} \leftarrow \mathcal{A} \cup \{ k \}$}
\If{$| \mathcal{A} | = 0$}{$\mathcal{A} \leftarrow \{\ell\}$}
}
}
\For{$k \in \mathcal{A}$}{
Pull arm $k$, observe reward 
$Y_{k,N_k+1}$, $N_k \leftarrow N_k+ 1$, $t \leftarrow t+1$
}
$ r \leftarrow r+1$
}
\caption{LB-SDA}
\label{alg:lb-sda}
\end{algorithm}

\citet{baransi2014sub} propose interesting arguments explaining why
subsampling methods work. Essentially, if the sampler allows enough
\textit{diversity} in the duels, the probability of repeatedly
selecting a suboptimal arm is small. On the sampler side, this condition is
satisfied when out of a large number of duels between two arms there
is a reasonable amount of them with 
non-overlapping subsamples. 
We prove that last block sampling satisfies such property. 
The second requirement concerns the
distribution of the arms, and has
been formulated by \citet{baransi2014sub} who
introduced the \textit{balance function} of
a family of distributions. In 
particular, \citet{chan2020multi} shows
that introducing an asymptotically negligible
sampling obligation of $\sqrt{\log r}$ is enough
to make subsampling suitable when the arms
come from the same one-parameter exponential
family of distributions.
Namely, if each arm has at
least $\sqrt{\log r}$ samples at round $r$, 
the \textit{diversity} of duels will guarantee
 each arm to be pulled enough. 
This exploration rate does
not have to be tuned and is not detrimental
in practice : for an horizon of, say, $T=10^6$
it only forces each arm to be sampled at least $4$ times.
%
\subsection{Regret Analysis of LB-SDA}
\label{subsection::regret_analysis_stat}
We consider that the arms come from the same one-parameter exponential
family of distributions $\cP_{\Theta}$, i.e., that there exists a function
$g: \R \times \Theta \mapsto \R$ such that any arm $k$ has a density
of the form
\[g_k(x) = g(x, \theta_k) = e^{\theta_k x - \Psi(\theta_k)} g(x, 0)
  \;,\]
where
$\Psi(\theta_k) = \log \left[\int e^{\theta_k x} g(x, 0) \,dx\right]$.
This assumption is standard in the literature and covers a broad range
of bandits applications. The exact knowledge of the family of distributions of the arms (e.g
Bernoulli, Gaussian with known variance, Poisson, etc.) 
can be used to calibrate
algorithms like Thompson Sampling \cite{TS_Emilie}, KL-UCB
\cite{cappe2013kullback} or IMED \cite{Honda15IMED} in order to reach
asymptotic optimality.  Recently, subsampling algorithms like SSMC
\cite{chan2020multi} and RB-SDA \cite{baudry2020sub} have been proved
to be optimal \textit{without} knowing exactly $\cP_\Theta$. This
means that the same algorithm can run on Bernoulli or Gaussian
distributions  and achieve optimality.  
We first prove that
LB-SDA matches these theoretical guarantees.
We denote $\text{kl}(\mu,\mu')$ the Kullback-Leibler divergence 
between two distributions of mean $\mu$ and $\mu'$ in the exponential family $\cP_{\Theta}$.

\begin{theorem}[Asymptotic optimality of LB-SDA]
\label{th::regret_lbsda}
	For any bandit model $\nu=(\nu_1, \dots, \nu_K) \subset \cP_{\Theta}^K$ 
	where $\cP_\Theta$ is any one-parameter exponential family 
	of distributions, the regret of LB-SDA satisfies, for all $\epsilon >0$,
	$$
	\cR(T) \leq \sum_{k: \mu_k < \mu^\star} \frac{1+\epsilon}{
	\kl(\mu_k, \mu^\star)} \log(T) + C(\nu, \epsilon) \;,
	$$
where $C(\nu, \epsilon)$ is a problem-dependent constant.
\end{theorem}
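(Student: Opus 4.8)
The plan is to bound the expected number of pulls $\mathbb{E}[N_k(T)]$ of each suboptimal arm $k$, since the regret decomposes as $\cR(T) = \sum_{k: \mu_k < \mu^\star} (\mu^\star - \mu_k)\,\mathbb{E}[N_k(T)]$. An arm $k$ can be pulled at round $r$ only in one of three ways: it is the leader, it is under-explored (i.e.\ $N_k(r) \leq \sqrt{\log r}$), or it wins its duel against the current leader via~\eqref{eq:winning_duel_stat}. I would therefore split $N_k(T)$ according to these mechanisms and control each contribution separately. The under-exploration term is immediate: the condition $N_k(r) \leq \sqrt{\log r}$ can only trigger a bounded number of extra pulls, contributing a lower-order $O(\sqrt{\log T})$ term absorbed into $C(\nu,\epsilon)$.

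First I would dispose of the case where the optimal arm fails to become and remain the leader for too long. The key intermediate event to control is $\{\ell(r) \neq k^\star\}$ for the best arm $k^\star$; I would show that once the optimal arm has accumulated enough samples, the probability that a suboptimal arm is the leader decays fast enough that the total expected number of such rounds is $O(1)$ in $T$. This rests on a concentration argument: if a suboptimal arm $k$ is the leader with $N_k(r)$ large, its empirical mean $\bar Y_{k,N_k(r)}$ must exceed $\mu^\star$-ish thresholds, an event with exponentially small probability, summable over rounds.

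The core of the proof is then bounding the number of rounds in which the optimal arm is the leader but the suboptimal challenger $k$ wins its duel. Here I would isolate the last time $\tau$ that $N_k$ reaches a given value and use the \emph{deterministic diversity} property of last-block sampling alluded to in the text: over a range of rounds, the comparison windows $[N_\ell(r)-N_k(r)+1 : N_\ell(r)]$ of the leader shift deterministically, so that among many consecutive duels a constant fraction involve essentially disjoint (non-overlapping) blocks of leader observations. This guarantees enough independent ``fresh'' comparisons that the probability of the challenger winning $\Theta(\log T)$ consecutive relevant duels is controlled by a large-deviation rate governed by $\kl(\mu_k,\mu^\star)$, yielding the claimed $\frac{1+\epsilon}{\kl(\mu_k,\mu^\star)}\log T$ leading constant. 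The $\sqrt{\log r}$ forced-exploration floor is what makes the exponential-family balance/diversity argument go through uniformly, as noted following \citet{chan2020multi}.

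The main obstacle I expect is precisely replacing the \emph{diversity} condition of \citet{baransi2014sub}, which was tailored to \emph{randomized} subsampling (where non-overlap holds in expectation over the random subsample), by a deterministic analogue valid for last-block sampling. With a deterministic sampler the blocks are fully determined by $N_k(r)$ and $N_\ell(r)$, so the independence needed for the large-deviation bound is not automatic; I would need a combinatorial lemma showing that as the round index ranges over an interval, the last-block windows sweep through a sufficiently rich, non-overlapping collection of leader observations to recover the same effective number of ``independent'' duels that the randomized scheme provides. Establishing this deterministic diversity guarantee, and then feeding it into a self-normalized or peeling-based concentration bound to get the sharp $\kl$ constant, is the technically demanding heart of the argument; the remaining steps are standard bandit bookkeeping that I would relegate to absorbing constants into $C(\nu,\epsilon)$.
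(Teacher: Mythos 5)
Your overall skeleton (regret via $\mathbb{E}[N_k(T)]$, decomposition by pull mechanism, separate control of the suboptimal-leader event) matches the paper, which imports the leading $\frac{1+\epsilon}{\kl(\mu_k,\mu^\star)}\log T$ term from Lemma~\ref{lem::dec_statio} and reduces the whole problem to bounding $\sum_r \bP\left(N_{k^\star}(r)\leq(\log r)^2\right)$. However, you have attached the diversity argument to the wrong term, and the term that actually needs it is the one your sketch glosses over. The event ``optimal leader, well-sampled suboptimal challenger $k$ wins its duel'' requires no non-overlap or independence at all: once $N_k(r)\geq \frac{1+\epsilon}{\kl(\mu_k,\mu^\star)}\log T$, a plain union bound over the possible positions of the leader's last block plus Chernoff already yields a summable bound (this is exactly how the analogous terms are treated in Appendix~\ref{app::lbsda_limited_memory}). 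Overlapping blocks are harmless there, because a union bound over the events ``some block mean is low'' does not need independence, and the sharp $\kl$ constant comes from this elementary step, not from any diversity property.

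Where independence is indispensable is the converse event: the \emph{optimal} arm, as a challenger holding only $j\leq(\log r)^2$ samples, losing \emph{every one} of many duels against a suboptimal leader --- i.e., the event $\{N_{k^\star}(r)\leq(\log r)^2\}$. Your proposal handles the suboptimal-leader situation by ``concentration of the suboptimal leader's empirical mean,'' but that controls only one side of the duel; with $j$ small the optimal arm's own empirical mean is not usefully concentrated, so any single duel can plausibly be lost, and one must instead show that out of order $r/\mathrm{poly}(\log r)$ duels against \emph{mutually independent} (hence non-overlapping) leader blocks, losing all of them has probability given by the balance function $\alpha_k(M,j)$ of Definition~\ref{def::balance}, which is then summed via Lemma~\ref{lem::balance_statio}. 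Moreover, your assertion that ``the comparison windows of the leader shift deterministically'' is precisely what must be proved, not assumed: if neither the leader nor the challenger is pulled, the deterministic last-block duel is frozen and no fresh comparison ever occurs (the paper flags this as the central obstacle of determinism). The structural fact that closes this gap is Lemma~\ref{lem::WT}, $W_r=N_{\ell(r)}(r)\geq r/K$: the leader wins a linear number of solo rounds, so its history grows linearly and one can extract, by pigeonhole over the value of $N_{k^\star}$, the identity of the leader, and a spacing of $j$ rounds, about $\beta r/(K(K-1)(\log r)^2 j)$ duels on non-overlapping leader blocks with $N_{k^\star}$ fixed at $j$. Without this lemma or an equivalent, the ``combinatorial lemma'' you defer to remains a placeholder for the actual technical heart of the proof.
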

\paragraph{Proof sketch}
We assume without loss of generality that there is a unique optimal
arm denoted $k^\star$. The analysis of \citet{chan2020multi} and
\citet{baudry2020sub} shows that for any SDA algorithm the number of
pulls of a suboptimal arm may be bounded as follow.

\begin{lemma}[Lemma 4.1 in \citet{baudry2020sub}]
\label{lem::dec_statio}
	For any suboptimal arm $k \neq k^\star$,
	 the expected number of pulls of $k$ is upper bounded by
	 \begin{multline}
	\bE[N_k(T)] \leq \frac{1+\epsilon}{\kl(\mu_k, \mu^\star)} \log(T) + C_k(\nu, \epsilon) \\
	+ 32 \sum_{r=1}^T \bP(N_{k^\star}(r)\leq (\log r)^2)\;,
	\end{multline}
	where $C_k(\nu, \epsilon)$ is a problem-dependent constant.
\end{lemma}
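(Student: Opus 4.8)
The plan is to bound $N_k(T)$ through a round-level decomposition that isolates the contribution of the rounds in which the optimal arm is properly represented. First I would write $N_k(T) = \sum_{r=1}^{T} \ind(k \in \cA_r)$ and split each summand according to whether the optimal arm is well-sampled at round $r$, that is, whether $N_{k^\star}(r) > (\log r)^2$ or not. The rounds in which $N_{k^\star}(r) \leq (\log r)^2$ contribute, per pull, at most $\sum_{r=1}^T \bP(N_{k^\star}(r) \leq (\log r)^2)$; accounting for the fact that a single round triggers only a bounded number of pulls of $k$ and for a union over the possible identity of the leader yields the constant $32$ in front of this term. This regime captures precisely the situation in which the optimal arm has not yet accumulated enough observations to reliably win its duels, and the lemma leaves it untouched so that the algorithm-specific analysis can control it separately.

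For the remaining rounds, where $N_{k^\star}(r) > (\log r)^2$, I would argue that arm $k$ enters $\cA_r$ only if it is under-explored (the condition $N_k(r) \leq \sqrt{\log r}$, whose total contribution is $\mathcal{O}(\sqrt{\log T})$ and is absorbed into $C_k(\nu,\epsilon)$) or if it wins its duel against the leader. Conditioning on the leader being $k^\star$, winning the duel means $\bar Y_{k, N_k(r)} \geq \bar Y_{k^\star,\, N_{k^\star}(r) - N_k(r)+1 : N_{k^\star}(r)}$. Since $N_{k^\star}(r) > (\log r)^2$, a maximal deviation inequality for the empirical mean in $\cP_\Theta$ shows that the last-block average of the leader stays above $\mu^\star - \delta$ uniformly over all admissible block lengths, with a failure probability summable in $r$. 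Hence, up to a summable correction folded into the constant, each such pull of $k$ forces the inflation event $\bar Y_{k, N_k(r)} \geq \mu^\star - \delta$.

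It then remains to count, over the whole horizon, the number of indices $n$ for which $\bar Y_{k,n} \geq \mu^\star - \delta$. By a Chernoff-type large-deviation bound for $\cP_\Theta$, the expected number of such inflation events is at most $\frac{\log T}{\kl(\mu_k, \mu^\star - \delta)} + \mathcal{O}(1)$; choosing $\delta$ small enough that $\kl(\mu_k, \mu^\star - \delta) \geq \kl(\mu_k,\mu^\star)/(1+\epsilon)$ recovers the announced leading term $\frac{1+\epsilon}{\kl(\mu_k,\mu^\star)} \log T$. The main obstacle is the self-referential coupling between $N_k(r)$ and the leader's comparison window: that window has exactly the random size $N_k(r)$, so the concentration of the leader must hold simultaneously for every possible window length, which is why a uniform (maximal) deviation bound on the last-block averages, rather than a single fixed-sample Chernoff bound, is required. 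Controlling this uniformity while keeping all correction terms summable in $r$ is the delicate part of the argument.
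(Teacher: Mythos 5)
There is a genuine gap in your decomposition. You split the rounds according to whether $N_{k^\star}(r) > (\log r)^2$ or not, and in the well-sampled regime you immediately ``condition on the leader being $k^\star$'' and run a Chernoff/maximal-deviation argument against the threshold $\mu^\star - \delta$. But the event $\{N_{k^\star}(r) > (\log r)^2\}$ does not imply $\{\ell(r) = k^\star\}$: the optimal arm can have far more than $(\log r)^2$ pulls while some suboptimal arm $j$ has even more and is therefore the leader. In those rounds the duel of arm $k$ is against the last block of $j$, whose mean concentrates around $\mu_j < \mu^\star$, so winning the duel is \emph{not} a large-deviation event for arm $k$ (it can even be the typical outcome if $\mu_k > \mu_j$). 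These rounds are controlled neither by your inflation count nor by your $32\sum_r \bP(N_{k^\star}(r)\leq(\log r)^2)$ term, and they are exactly the technical heart of the lemma.

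The actual proof (Lemma 4.1 of Baudry et al., reproduced in structure in Appendix~B of this paper for the limited-memory variant) decomposes on the event $\{\ell(r) = k^\star\}$ versus $\{\ell(r) \neq k^\star\}$. The first case is handled essentially as you describe (Chernoff bounds on the challenger's mean and on the leader's last-block means, with a union bound over block positions, giving the $\frac{1+\epsilon}{\kl(\mu_k,\mu^\star)}\log T$ term). The second case requires a two-stage reduction that is entirely absent from your proposal: (i) if $k^\star$ has been leader recently, a leadership takeover must have occurred, which forces a concentration failure of either the challenger or the leader and contributes only a constant; (ii) if $k^\star$ has never been leader in $[\lfloor r/4\rfloor, r]$, it must have lost order-$r$ duels against suboptimal leaders, and a Markov-inequality argument on this count, followed by a further split on whether $N_{k^\star}(u)$ is small, is what produces the term $32\sum_r \bP(N_{k^\star}(r)\leq(\log r)^2)$ (the constant arises from collapsing the double sum $\sum_r \frac{4}{r}\sum_{u=\lfloor r/4\rfloor}^{r}$, not from a union over leader identities). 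Without this leader-control machinery the claimed bound does not follow.
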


The next step consists in upper bounding the probability that the best
arm is not pulled "enough" during a run of the algorithm.  This part
is more challenging and relies on the notion of \textit{diversity} in
the subsamples provided by the subsampling algorithm. This notion
was introduced by \citet{baransi2014sub} to analyze the Best Empirical
Sampled Average (BESA) algorithm. 
Intuitively, random
block sampling \cite{baudry2020sub} or sampling without replacement
\cite{baransi2014sub} explore different part of the history thus bringing
diversity in the duels.
Unfortunately, this property is not satisfied by deterministic
samplers.
Nonetheless, with a careful examination of the relation implied by the
deterministic nature of last-block subsampling it is possible to
prove that the number of pulls of the optimal arm is large enough with
high probability.

\begin{restatable}{lemma}{lemmacontrolpulllbsda}
The probability that the optimal arm is not pulled enough by LB-SDA
can be upper bounded as follows
\[\sum_{r=1}^{+\infty} \bP\left(N_{k^\star}(r) \leq (\log r)^2\right) \leq C_{k^\star}(\nu) \;,\]
for some constant $C_{k^\star}(\nu)$.
\label{lem::control_pull_lbsda_s}
\end{restatable}

Plugging the result of Lemma~\ref{lem::control_pull_lbsda_s} in 
Lemma~\ref{lem::dec_statio} gives the asymptotic optimality of LB-SDA 
(Theorem~\ref{th::regret_lbsda}).
The proof of Lemma~\ref{lem::control_pull_lbsda_s} is reported in 
Appendix~\ref{app::proof_s}.
%
\subsection{Memory-Limited LB-SDA}
\label{subsection:lb_sda_memory}
One of our main motivations for studying LB-SDA is its simplicity and
efficiency. Yet, all existing subsampling algorithms
\cite{baransi2014sub, chan2020multi, baudry2020sub} as well as the
vanilla version of LB-SDA have to store the entire history of rewards
for all the arms.
In this section, we explain how to modify LB-SDA to reduce the storage
cost while preserving the theoretical guarantees.

The fact that LB-SDA is asymptotically optimal means that, when $T$ is
large, the arm with the largest mean is most often the leader with all
of its challengers having a number of pulls that is of order
$O(\log T)$ only. With duels based on the last block, this would mean in
particular that only the last $O(\log T)$ observations from the optimal
arm should be stored and that previous observations will \textit{never}
be used again in practice.  Based on this intuition, one might
think that keeping only $\log(T)/(\mu^\star- \mu_k)^2$ observations is
enough for LB-SDA. However, this could only be done with the knowledge of
the gaps that are unknown.

We propose instead to limit the storage memory of each arm at round
$r$ to a value of the form
\[m_r = \max\left(M, \left\lceil C(\log r)^2\right\rceil\right) \;,
\] 
where
$C>0$ and $M \in \N$. $M$ ensures that a minimum number of samples are
stored during the first few rounds.
Following the definition of \citet{agrawal2012analysis}, we then define the set of
\textit{saturated arms} at a round $r$ as
\[
\cS_r = \{k \in \K: N_k(r)\geq m_r\} \;.
\]
The only modification of LB-SDA is the following: at each round
$r$, if a saturated arm is pulled then the newly collected observation
replaces the oldest observation in its history. The
pseudo code of LB-SDA with Limited Memory (LB-SDA-LM) is given in Appendix
\ref{app::lbsda_limited_memory} and the following result shows that it keeps
the same asymptotical performance as LB-SDA under general assumptions on $m_r$.

\begin{theorem}[Asymptotic optimality of LB-SDA with Limited Memory]\label{th::memory_limited_regret}
	For any bandit model 
	$\nu=(\nu_1, \dots, \nu_K) \subset \cP_{\Theta}^K$
	where $\cP_\Theta$ 
	is any one-parameter exponential family of distributions,
	if $m_r/\log(r) \to \infty$, the 
	regret of memory-limited LB-SDA
	satisfies, for all $\epsilon >0$,
	\[\cR_T \leq \sum_{k: \mu_k < \mu^\star} 
	\frac{1+\epsilon}{\kl(\mu_k, \mu^\star)}
	 \log(T) + C'(\nu, \epsilon, \cM) \;,\]
	where $\cM=(m_1, m_2, \dots, m_T)$ 
	denotes the sequence $(m_r)_{r \in \N}$ 
	and $C'(\nu, \epsilon, \cM)$ is a problem-dependent constant.
\end{theorem}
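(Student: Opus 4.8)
The plan is to reduce the limited-memory algorithm to the full-memory one at the level of each individual duel, and then recycle the two-lemma decomposition of Section~\ref{subsection::regret_analysis_stat}. The key preliminary observation I would establish is that memory truncation never \emph{mismatches} the two arms of a duel. When a challenger $k$ faces the leader $\ell$ at round $r$, set $n=\min(N_k(r),m_r)$. If $N_k(r)\leq m_r$, then $k$ is unsaturated, its full history is stored, and the leader's last $N_k(r)$ observations still lie inside its window, so the duel is computed \emph{exactly} as in LB-SDA; if $N_k(r)>m_r$, then both arms are saturated and the comparison is between their $m_r$ most recent samples. In either case the duel compares the empirical means of the $n$ most recent i.i.d. draws of $\nu_k$ and of $\nu_\ell$, with $n=N_k(r)$ in the first regime and $n=m_r=\omega(\log r)$ in the second.

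Granting this, I would re-derive Lemma~\ref{lem::dec_statio} for the truncated algorithm. Splitting the pulls of a suboptimal arm $k$ as in \citet{baudry2020sub}, on the event $\{\ell(r)=k^\star,\ N_{k^\star}(r)\geq(\log r)^2\}$ the arm $k$ enters $\cA_{r+1}$ only if the mean of its $n$ most recent samples beats that of $k^\star$'s $n$ most recent samples, an event of probability at most $e^{-n\,\kl(\mu_k,\mu^\star)(1-o(1))}$ by the usual Chernoff bound. For $N_k(r)\leq m_r$ this coincides with the full-memory term and produces the $\frac{1+\epsilon}{\kl(\mu_k,\mu^\star)}\log(T)$ contribution; for $N_k(r)>m_r$ the rate becomes $e^{-m_r\,\kl(\mu_k,\mu^\star)(1-o(1))}$, whose sum over $r$ converges precisely because $m_r/\log(r)\to\infty$, and is therefore absorbed into the constant. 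This reproduces Lemma~\ref{lem::dec_statio} up to a larger problem-dependent constant, with the unchanged remainder $32\sum_{r}\bP(N_{k^\star}(r)\leq(\log r)^2)$.

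It then remains to re-establish Lemma~\ref{lem::control_pull_lbsda_s}, namely $\sum_r\bP(N_{k^\star}(r)\leq(\log r)^2)\leq C_{k^\star}(\nu)$, for LB-SDA-LM. I would replay the argument of Appendix~\ref{app::proof_s}, which converts the deterministic relations between the successive last-block averages of the optimal arm into a guarantee that it cannot remain under-pulled. By the matched-window observation these relations are untouched whenever the blocks involved have length at most $m_r$, and when $k^\star$ is itself saturated the comparison still rests on its $m_r=\omega(\log r)$ freshest samples, which carry enough i.i.d. mass for the same diversity and concentration estimates to apply. Combining this with the re-derived Lemma~\ref{lem::dec_statio} and summing over the suboptimal arms yields the claimed bound, with $C'(\nu,\epsilon,\cM)$ collecting $C(\nu,\epsilon)$, the saturated-regime remainder, and the dependence on the truncation schedule $\cM$.

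The step I expect to be the main obstacle is this adaptation of Lemma~\ref{lem::control_pull_lbsda_s}. Its deterministic diversity argument is already the delicate core of the stationary proof, and here it must run with windows capped at $m_r$; the awkward regime is $N_{k^\star}(r)\in[m_r,(\log r)^2]$ (non-empty when $m_r<(\log r)^2$), where the optimal arm is saturated yet still below the threshold, and one must check that discarding all but its most recent $\omega(\log r)$ observations breaks neither the block identities nor the exponential control on long losing streaks. The hypothesis $m_r/\log(r)\to\infty$ is exactly what keeps every saturated window large enough for both lemmas to go through.
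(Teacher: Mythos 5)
Your overall architecture matches the paper's proof: the matched-window observation (a duel either coincides exactly with the vanilla LB-SDA duel when the challenger is unsaturated, or compares the two arms' $m_r$ most recent samples when both are saturated) is precisely the remark the paper makes at the start of Appendix~\ref{app::lbsda_limited_memory}, and the paper likewise splits every term of the pull-count decomposition on the event $\{k\in\cS_r\}$, bounding the saturated contributions by Chernoff sums of the form $\sum_r r\,e^{-m_r\omega_k}$ that converge exactly because $m_r/\log r\to\infty$. (Note the factor $r$ from the union bound over the possible values of $N_k(r)$, which your sketch omits but which the hypothesis still absorbs.)

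The one step you leave unresolved is, however, the step where your route and the paper's genuinely part ways, and your proposed version of it has a real gap. You plan to re-prove Lemma~\ref{lem::control_pull_lbsda_s} at the threshold $(\log r)^2$ and you correctly identify the awkward regime $N_{k^\star}(r)\in[m_r,(\log r)^2]$ (non-empty for admissible schedules such as $m_r=\lceil\log r\,\log\log r\rceil$), where the optimal arm is saturated yet under-pulled; in that regime the diversity argument must be rebuilt with frozen indices of length $m_r$ rather than $N_{k^\star}(r)$, new non-overlapping-block counts, and a modified balance-function sum, none of which you carry out --- asserting that ``$m_r/\log r\to\infty$ is exactly what keeps every saturated window large enough'' is not a proof. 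The paper avoids this regime entirely rather than confronting it: it first reduces $\sum_r\bP(\ell(r)\neq k^\star)$ (via the events $\cD^r$, $\cB^u$, $\cC^u$ and Markov's inequality, each split on saturation) to $16\sum_r\bP\bigl(N_{k^\star}(r)\leq \tfrac{C}{4}\log r\bigr)$ plus constants, for a constant $C$ such that $m_r\geq C\log r$ for all $r\geq r_C$. On that event the optimal arm has at most $\tfrac{C}{4}\log r<m_r$ pulls, and moreover (since $m^{-1}(\tfrac{C}{4}\log r)\leq r^{1/4}$ for $r$ large) it has been unsaturated since round $r^{1/4}$, so the stationary diversity argument of Appendix~\ref{app::proof_s} applies verbatim with no modification to block identities or to the balance function. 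If you want to complete your version, you should either lower your threshold the same way, or actually verify the balance-function estimate with $j=m_r$ in the saturated regime.
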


The proof of this theorem is reported in
Appendix~\ref{app::lbsda_limited_memory}, which provides precise estimates of
the dependence of $C'(\nu, \epsilon, \cM)$ with respect to the parameters, and in particular, with respect to the sequence $\cM$. Note that LB-SDA-LM
remains an anytime algorithm because the storage constraint does not
depend on the time horizon $T$ but only on the current round.
%
\subsection{Storage and Computational Cost}
To the best of our knowledge, LB-SDA-LM is the only subsampling bandit
algorithm that does not require to store the
full history of rewards. We report in Table~\ref{tab:samp}
estimates of the computational cost of  LB-SDA-LM and its competitors.
\begin{table}[h]
	\caption{Storage and computational 
	cost at round $T$ for existing subsampling algorithms.}
	\label{tab:samp}
	\begin{center}
		\begin{small}
			\begin{tabular}{ccc}
				\toprule
				Algorithm & Storage & Comp. cost\\
				& & Best-Worst case\\ 
				\midrule
				\begin{tabular}{c} BESA \\
					\cite{baransi2014sub}\end{tabular}
				&  $O(T)$ & $O((\log T)^2)$ \\
				\midrule
				\begin{tabular}{c} SSMC \\
					\cite{chan2020multi}\end{tabular}
				& $O(T)$ & {\color{blue} $O(1)$}-{\color{red} $O(T)$} \\
				\midrule
				\begin{tabular}{c} RB-SDA \\ \cite{baudry2020sub}\end{tabular}
				& $O(T)$ & $O(\log T)$\\
				\midrule
				\begin{tabular}{c} LB-SDA \\ \textbf{(this paper)} \end{tabular}
				& $O(T)$ & {\color{blue}$O(1)$-$O(\log T)$} \\
				\midrule
				\begin{tabular}{c} LB-SDA-LM \\ \textbf{(this paper)} \end{tabular} &  
				\textbf{$O((\log T)^2)$}
				& {\color{blue}$O(1)$-$O(\log T)$}  \\
				\bottomrule
			\end{tabular}
		\end{small}
	\end{center}
\end{table}

The computational cost
can be broken into two parts: (a) the subsampling cost 
and (b) the computation of the means of the samples.
We assume that drawing a sample of size $n$ without 
replacement has $O(n)$ cost and that computing 
the mean of this subsample costs another $O(n)$.
Furthermore, at round $T$, each challenger to the best arm
has about $O(\log T)$ samples. This gives an estimated cost of
$O\left((\log T)^2\right)$ for BESA \cite{baransi2014sub}. For 
RB-SDA \cite{baudry2020sub} the estimated cost is 
$O(\log(T))$, because the sampling cost for random block 
sampling is $O(1)$ and only the sample mean has to be recomputed at each round.

For the three deterministic algorithms 
 (namely SSMC \cite{chan2020multi}, LB-SDA, LB-SDA-LM), when
the leader arm wins all its duels, its sample mean can
be updated sequentially at cost $O(1)$.
This is the \textit{best case} in terms of computational 
cost. However, when a challenger arm is pulled,
SSMC requires a full screening of the 
leader's history, with $O(T)$ cost, while
LB-SDA and LB-SDA-LM only need the computation of the mean
of the last $O(\log T)$ samples from the leader.

\section{LB-SDA in Non-Stationary Environments}
\label{sec:NS-lb-sda}
In stationary environments,
LB-SDA achieves optimal regret rates, even when
its decisions are constrained to use at most $O((\log T)^2)$
observations. One might think that this argument itself is sufficient
to address non-stationary scenarios as the duels are performed mostly
using recent observations. However, the latter is only true for the best
arm and in the case where an arm that has been bad for a long period of time
suddenly becomes the best arm, adapting to the change would still be
prohibitively slow.
For this reason, LB-SDA has to be equipped with an
additional mechanism to perform well in non-stationary environments.
%
\subsection{SW-LB-SA: LB-SDA with a Sliding-Window}
\begin{algorithm}[h]
\SetKwInput{KwData}{Input}
\KwData{$K$ arms, horizon $T$, $\tau$ length of sliding window}
\SetKwInput{KwResult}{Initialization}
\KwResult{$t \leftarrow 1$, $r \leftarrow 1$, $\forall k \in \{1, ..., K \}, N_k \leftarrow 0$, $N_k^\tau \leftarrow 0$}
\While{$t < T$}{
$\mathcal{A} \leftarrow \{\}$,  $\ell \leftarrow \text{leader}(N,  Y, \tau)$ \\
\If{$r=1$}{
$\cA \leftarrow \{1, \dots, K\}$ (Draw each arm once)
}
\Else{
\For{$k \neq \ell \in \{1,...,K\}$}{
\If{$N_{k}^\tau \leq \sqrt{\log(\tau)} $ \text{or} $D_k^\tau(r) = 1$}{$\mathcal{A} \leftarrow \mathcal{A} \cup \{ k \}$}
\Else{
\hspace{0.1cm}
$\widehat{\mu}_k^\tau = \bar{Y}_{k, N_k- N_k^\tau +1: N_k}$\\
$N = \min(N_k^\tau, N_\ell^\tau )$\\
$\widehat \mu_{\ell,k}^\tau =\bar Y_{N_\ell -N+1: N_\ell}$\\
\If{$\widehat{\mu}_k^\tau \geq \widehat \mu_{\ell,k}^\tau$}
{$\mathcal{A} \leftarrow \mathcal{A} \cup \{ k \}$}
}
}
\If{$| \mathcal{A} | = 0$}{$\mathcal{A} \leftarrow \{\ell \}$}
}
\For{$k \in \mathcal{A}$}{
\hspace{0.1cm} Pull arm $k$, observe reward $Y_{k,N_k +1}$\\
Update $N_k \leftarrow N_k + 1$, $N_k^\tau \leftarrow N_k^\tau + 1$, $t \leftarrow t+1$
}
\For{$k \in \{1,...,K\}$}{
\If{$k \in \mathcal{A}_{r-\tau+1}$}
{$N_k^\tau \leftarrow N_k^\tau - 1$}
}
$ r \leftarrow r+1$
}
\caption{SW-LB-SDA}
\label{alg:duel_SW}
\end{algorithm}
We keep a \textit{round-based} structure for the algorithm, where, at
each round $r$, duels between arms are performed 
and the algorithm subsequently selects the subset of arms
$\cA_r$ that will be pulled.
In contrast to Section \ref{subsection:lb_sda_memory}, where a constraint on
storage related to the number of pulls was added, here, we use a
sliding window of length $\tau$ to limit the historical data available to
the algorithm to that of the last $\tau$ rounds.

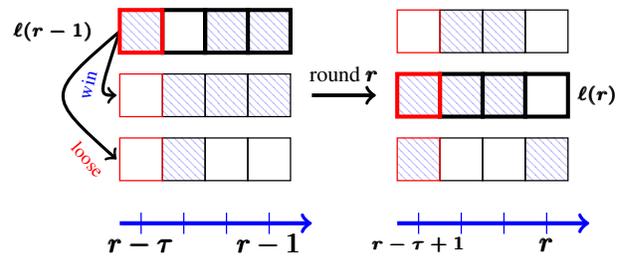
\begin{figure}[!h]
\centering
    \vspace{-30pt}
\scalebox{1.18}{
    \begin{tikzpicture}
     \begin{axis}[
     	axis equal,
        axis x line=none,
        axis y line=none,
        xtick=\empty,
        ytick=\empty,
        scaled ticks=false,
        xmin=-28,
        xmax=29,
        ymin=-20,
        ymax=0,
        xlabel=,
        ylabel=,
   ]
   
	\draw[->, color=blue, line width=1.3] (axis cs:-18,-16)  -- (axis cs:0,-16);
	\draw[color=blue] (axis cs:-16,-17)  -- (axis cs:-16,-15);
	\node[align=right, rotate=0] at (axis cs: -16,-18) {\footnotesize $\pmb{r- \tau} $};
	\draw[color=blue] (axis cs:-12,-17)  -- (axis cs:-12,-15);
    \draw[color=blue] (axis cs:-8,-17)  -- (axis cs:-8,-15);
    \draw[color=blue] (axis cs:-4,-17)  -- (axis cs:-4,-15);
    \node[align=right, rotate=0] at (axis cs: -4,-18) {\footnotesize $\pmb{r-1} $};a

	\draw[color=black, ->,line width=1]    (axis cs: -18,2)   .. controls (axis cs:-25,-4) .. (axis cs: -18.3,-10);
	\node[color=red, rotate=-50] at (axis cs:-21,-10) {\tiny loose}; 	
	\draw[color=black, ->,line width=1]    (axis cs: -18,2)   .. controls (axis cs:-20,-4) .. (axis cs: -18.3,-4);
	\node[color=blue, rotate = 75] at (axis cs: -21, -3) {\tiny win};
	
	\node[align=left] at (axis cs:-24,2) {\tiny $\pmb{\ell(r-1)} $ };
	\node[align=right] at (axis cs:27,-4) {\tiny $\pmb{\ell(r)} $ };
    
     \fill[pattern=north west lines, pattern color=blue, opacity=0.5] 
  (axis cs:-14,-12) rectangle (axis cs: -10, -8);
    \draw[color=black] (axis cs: -14, -12) rectangle (axis cs: -10, -8); 
	\draw[color=red] (axis cs: -18, -12) rectangle (axis cs: -14, -8); 
	\draw[color=black] (axis cs: -10, -12) rectangle (axis cs: -6, -8); 
	\draw[color=black] (axis cs: -6, -12) rectangle (axis cs: -2, -8); 
	\draw[color=black] (axis cs: -14, -6) rectangle (axis cs: -10, -2); 
	 \fill[pattern=north west lines, pattern color=blue, opacity=0.5] 
      (axis cs:-14,-6) rectangle (axis cs: -2, -2);
	\draw[color=red] (axis cs: -18, -6) rectangle (axis cs: -14, -2); 
	\draw[color=black] (axis cs: -10, -6) rectangle (axis cs: -6, -2); 
	\draw[color=black] (axis cs: -6, -6) rectangle (axis cs: -2, -2); 
	 \fill[pattern=north west lines, pattern color=blue, opacity=0.5] 
      (axis cs:-18,0) rectangle (axis cs: -14, 4);
      \fill[pattern=north west lines, pattern color=blue, opacity=0.5] 
      (axis cs:-10,0) rectangle (axis cs: -2, 4);
	\draw[color=black,line width=1.3]
	 (axis cs: -14, 0) rectangle (axis cs: -10, 4); 
	\draw[color=red,line width=1.3] (axis cs: -18, 0) rectangle (axis cs: -14, 4); 
	\draw[color=black,line width=1.3] (axis cs: -10, 0) rectangle (axis cs: -6, 4); 
	\draw[color=black,line width=1.3] (axis cs: -6, 0) rectangle (axis cs: -2, 4); 

	 \draw[->, color=black, line width=1.2] (axis cs:0,-4)  -- (axis cs: 6,-4);
	 \node[align=right] at (axis cs:3,-2)  {\scriptsize round $\pmb{r}$};

	\draw[->, color=blue, line width=1.3] (axis cs:8,-16)  -- (axis cs:26,-16);
	\draw[color=blue] (axis cs:10,-17)  -- (axis cs:10,-15);
	\node[align=right, rotate=0] at (axis cs: 10,-18) {\tiny $\pmb{r-\tau+1} $};
	\draw[color=blue] (axis cs:14,-17)  -- (axis cs:14,-15);
    \draw[color=blue] (axis cs:18,-17)  -- (axis cs:18,-15);
    \draw[color=blue] (axis cs:22,-17)  -- (axis cs:22,-15);
    \node[align=right, rotate=0] at (axis cs: 22,-18) {\footnotesize $\pmb{r} $};

	\fill[pattern=north west lines, pattern color=blue, opacity=0.5] 
      (axis cs:8,-12) rectangle (axis cs: 12, -8);
    \fill[pattern=north west lines, pattern color=blue, opacity=0.5]
      (axis cs:20,-12) rectangle (axis cs: 24, -8);
	\draw[color=black] (axis cs: 12, -12) rectangle (axis cs: 16, -8); 
	\draw[color=red] (axis cs: 8, -12) rectangle (axis cs: 12, -8); 
	\draw[color=black] (axis cs: 16, -12) rectangle (axis cs: 20, -8); 
	\draw[color=black] (axis cs: 20, -12) rectangle (axis cs: 24, -8); 
	\fill[pattern=north west lines, pattern color=blue, opacity=0.5] 
      (axis cs:8,-6) rectangle (axis cs: 20, -2);
	\draw[color=black,line width=1.3] (axis cs: 12, -6) rectangle (axis cs: 16, -2); 
	\draw[color=red,line width=1.3] (axis cs: 8, -6) rectangle (axis cs: 12, -2); 
	\draw[color=black,line width=1.3] (axis cs: 16, -6) rectangle (axis cs: 20, -2); 
	\draw[color=black,line width=1.3] (axis cs: 20, -6) rectangle (axis cs: 24, -2); 

	 \fill[pattern=north west lines, pattern color=blue, opacity=0.5] 
      (axis cs:12,0) rectangle (axis cs: 20, 4);
	\draw[color=black] (axis cs: 12, 0) rectangle (axis cs: 16, 4); 
	\draw[color=red] (axis cs: 8, 0) rectangle (axis cs: 12, 4); 
	\draw[color=black] (axis cs: 16, 0) rectangle (axis cs: 20, 4); 
	\draw[color=black] (axis cs: 20, 0) rectangle (axis cs: 24, 4); 
    \end{axis}
    \end{tikzpicture}}
    \vspace{-70pt}
    \caption{Illustration of a \textit{passive leadership takeover} with a sliding window 
    $\tau=4$ when the standard definition of leader is used. The bold rectangle correspond to the leader. A blue square is added when an arm has an observation for the corresponding round and the red square correspond to the information that will be lost at the end of the round due to the sliding window.}
    \label{fig:test}
\end{figure}

\paragraph{Modified leader definition} The introduction of a sliding window
requires a new definition for the \textit{leader}. By analogy with the
stationary case, the leader could be defined as the arm that has been pulled the
most during the $\tau$ last rounds.
However, with the inclusion of the sliding window, a new
phenomenon, which we call \textit{passive leadership takeover}, can
occur. Let us define $N_k^\tau(r) = \sum_{s=r-\tau}^{r-1} \ind\left(k \in
  \cA_{s+1}\right)$, the number of times arm $k$ has been pulled
during the last $\tau$ rounds and
consider a situation with 3 arms $\{1,2,3\}$. Assume that the
leader is arm $1$ and at a round $(r-1)$ we have
$N_1^\tau(r-1) = N_2^\tau(r-1)$.  If the leader has been pulled $\tau$
rounds away and wins its duel against arm 2 but looses
against arm $3$, only arm 3 will be pulled at round $r$.
Consequently, at round $r$, arm $2$ will have a strictly larger
number of pulls than arm $1$ without having actually defeated the leader.
This situation, illustrated on Figure~\ref{fig:test}, is not desirable
as it can lead to spurious leadership changes.  We fix
this by imposing that any arm has to defeat the current leader to
become the leader itself.  Define,
\[
\cB_r = \left\{k \in \cA_{r+1}\cap \{
N_k^\tau(r+1) \geq \min(r,\tau)/K\} \right\} \;.
\]
Then for any $r \in \N$, the leader at
round $r+1$ is defined as
$\ell^\tau(r+1) = \aargmax_{k \in \{1,...,K\}} N_k^\tau(r+1)$ if
$ N^{\tau}_{\ell^\tau(r)}(r+1)< \min(r,\tau)/(2K)$ and the argmax is
taken over $\cB_r\cup \{\ell^\tau(r) \}$ otherwise.  This modified
definition of the leader ensures that an arm can become the leader
only after earning at least $\tau/K$ samples and winning a duel
against the current leader, or if the leader loses a lot of duels and
its number of samples falls under a fixed threshold.  Thanks to this
definition it holds that
$N_{\ell^\tau(r) }^\tau(r) \geq \min(r,\tau)/(2K)$. More details are
given in Appendix~\ref{app::NS_lbsda}.

\paragraph{Additional diversity flags}
As in the vanilla LB-SDA, we use a sampling obligation
to ensure that each arm has a minimal number of samples.
However, in contrast to the stationary case,
this very limited number of forced samples may not
be sufficient to guarantee an adequate variety of duels, due to the forgetting window.
To this end, the sampling obligation is coupled with
a \textit{diversity flag}.
We define it as a binary random variable $D_k^\tau(r)$,
satisfying $D_k^\tau(r)=1$ only when, for the last
$\lceil (K-1) (\log \tau)^2 \rceil$ rounds the three following conditions are satisfied:
1) some arm $k'\neq k$ has been leader during all these rounds,
2) $k'$ has not been pulled, and
3) $k$ has not been pulled and satisfy $N_{k}^\tau(r)\leq (\log \tau)^2$.
In practice, there is a very low probability that these conditions are
met simultaneously but this additional mechanism is required for the
theoretical analysis. Note that the diversity flags have no impact on
the computational cost of the algorithm as they require only
to store the number of rounds since the last draw of the
different arms (which can be updated recursively) as well as the
last leader takeover. Arms that raise their diversity flag are automatically
added to the set of pulled arms.

Bringing these parts together, gives the pseudo-code of SW-LB-SDA in
Algorithm~\ref{alg:duel_SW}.
%
\subsection{Regret Analysis in Abruptly Changing Environments}
In this section we aim at upper bounding the dynamic regret in
abruptly changing environments, as defined in
Section~\ref{sec::prelim}. 
Our main result is the proof that the regret of SW-LB-SDA
matches the asymptotic lower bound of \citet{garivier2011upper}.
\begin{theorem}[Asymptotic optimality of SW-LB-SDA]
\label{th::regret_SWLB}
	If the time horizon $T$ and number of breakpoint $\Gamma_T$ are known, 
	choosing $\tau=O(\sqrt{T \log (T) /\Gamma_T})$ 
	ensures that the dynamic regret of SW-LB-SDA satisfies
	\[ \cR_T =O(\sqrt{T\Gamma_T \log T)} \;.\]
\end{theorem}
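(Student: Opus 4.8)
The plan is to bound the dynamic regret by decomposing it into contributions from two distinct sources: the regret incurred \emph{within} each stationary phase, and the regret incurred \emph{around} each breakpoint while the sliding window still contains stale observations from the previous phase. Since there are $\Gamma_T + 1$ stationary phases, and the window has length $\tau$, I would argue that each breakpoint ``contaminates'' at most $O(\tau)$ rounds during which the algorithm may behave arbitrarily badly, contributing at most $O(\tau)$ regret per breakpoint (up to the reward range). This yields a first term of order $\Gamma_T \tau$. For the within-phase regret, I would reuse the machinery of Theorem~\ref{th::regret_lbsda}: once the window is filled entirely with samples from the current stationary phase, the problem locally looks like a stationary bandit instance observed over a horizon of length $\tau$, so the per-phase regret should be controlled by a logarithmic term of order $\sum_{k: \mu_{k} < \mu^\star} \frac{\log \tau}{\kl(\mu_k, \mu^\star)}$ accumulated over each of the $\Gamma_T+1$ phases.

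The core technical work is establishing the within-phase guarantee, and this is where the modified leader definition and the diversity flags enter. First I would show that, restricted to a window lying entirely inside one stationary phase, the analogue of Lemma~\ref{lem::control_pull_lbsda_s} holds: the optimal arm of the current phase is pulled ``enough'' with high probability. The obstacle here is that the sliding window breaks the monotonicity of $N_k(r)$ that the stationary analysis relied upon, so the diversity argument must be redone with $N_k^\tau(r)$ in place of $N_k(r)$. This is precisely why the \emph{diversity flag} $D_k^\tau(r)$ was introduced: the limited number of forced $\sqrt{\log \tau}$ samples no longer suffices to guarantee non-overlapping subsamples across enough duels, so I would use the flag to force a suboptimal arm that has been dormant to re-enter play, thereby re-establishing the diversity condition that underpins the proof that a suboptimal arm cannot remain leader indefinitely. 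I would also invoke the property $N_{\ell^\tau(r)}^\tau(r) \geq \min(r,\tau)/(2K)$ guaranteed by the modified leader definition, which prevents \emph{passive leadership takeover} from triggering spurious leader changes and ensures the leader always carries a linear-in-$\tau$ number of samples.

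Assembling the two pieces, the total regret is bounded by a sum of the form
\[
\cR_T \leq C_1 \Gamma_T \tau + C_2 (\Gamma_T + 1) \frac{\log \tau}{\Delta_{\min}} \;,
\]
for problem-dependent constants, where the first term captures breakpoint contamination and the second the within-phase exploration cost. Optimizing over $\tau$ is then routine: the choice $\tau = O(\sqrt{T \log(T)/\Gamma_T})$ balances the dominant term $\Gamma_T \tau = O(\Gamma_T \sqrt{T\log(T)/\Gamma_T}) = O(\sqrt{T \Gamma_T \log T})$ against the logarithmic-per-phase term, yielding the claimed rate $\cR_T = O(\sqrt{T \Gamma_T \log T})$ and matching the lower bound of \citet{garivier2011upper} up to logarithmic factors.

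I expect the main obstacle to be the within-phase control of the optimal arm's pulls. In the stationary proof, diversity came for free once each arm had $\sqrt{\log r}$ samples, because the histories grew monotonically and non-overlapping last-blocks appeared naturally. Under the sliding window, observations are continually discarded, so one must carefully track which rounds produce genuinely fresh, non-overlapping comparisons between the current optimal arm and its challengers, and show that the diversity flag fires often enough to compensate — all while ensuring the flag itself is raised with only negligible probability during normal operation so that it does not inflate the regret. Rigorously quantifying the interplay between the window length $\tau$, the flag period $\lceil(K-1)(\log\tau)^2\rceil$, and the forced-sampling rate $\sqrt{\log\tau}$ is the delicate part of the argument.
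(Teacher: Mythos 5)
There is a genuine gap in your regret decomposition, and it breaks the final optimization step. You bound the within-phase regret by a term of order $(\Gamma_T+1)\log(\tau)/\Delta_{\min}$, on the grounds that a phase ``locally looks like a stationary bandit instance observed over a horizon of length $\tau$.'' But a stationary phase can be far longer than $\tau$ rounds, and the sliding window keeps discarding the observations of every suboptimal arm. Consequently a suboptimal arm is not pulled $O(\log\tau)$ times per phase: it must be re-explored in essentially every window, and the correct count is $O(\delta_\phi \log(\tau)/\tau)$ pulls during a phase of length $\delta_\phi$ (this is the content of the adaptation of Lemma 25 of \citet{garivier2011upper} used in Appendix~\ref{app::NS_lbsda}, which yields the term $\widehat\delta_\phi A_k^{\phi,\tau}/\tau$ with $A_k^{\phi,\tau}=b_k^\phi\log\tau$). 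Summed over phases this gives $T\log(\tau)/\tau$, and it is precisely this term --- decreasing in $\tau$ --- that trades off against the breakpoint-contamination term $\Gamma_T\tau$, which increases in $\tau$. With your bound there is no such trade-off: both of your terms are nondecreasing in $\tau$, so ``optimizing'' would push $\tau$ to zero, and the stated choice $\tau=O(\sqrt{T\log(T)/\Gamma_T})$ does not actually balance anything ($\Gamma_T\log T$ is not of order $\sqrt{T\Gamma_T\log T}$ in general). The correct balance $\Gamma_T\tau \asymp T\log(\tau)/\tau$ is what forces $\tau\asymp\sqrt{T\log T/\Gamma_T}$ and produces the rate $O(\sqrt{T\Gamma_T\log T})$.

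Your qualitative discussion of the hard parts is otherwise well aligned with the paper: the $2\tau$ contamination per breakpoint, the need to redo the diversity argument with $N_k^\tau(r)$ in place of $N_k(r)$, the role of the diversity flag and of the modified leader definition guaranteeing $N^\tau_{\ell^\tau(r)}(r)\ge\min(r,\tau)/(2K)$, and the identification of the control of the optimal arm's pull count (the term $c_{k,3}^{\phi,\tau}$ in the paper) as the delicate step. But the quantitative error above is not cosmetic; without the $T\log(\tau)/\tau$ term the proof does not yield the claimed bound, and fixing it requires exactly the Garivier--Moulines counting lemma that your decomposition omits.
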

To prove this result we only need to assume that, during each stationary
period, the rewards come from the same one-parameter exponential family
of distributions. In contrast, current state-of-the-art algorithms for
non-stationary bandits typically require the assumption that the
rewards are \textit{bounded} to obtain similar guarantees. Hence, this
result is of particular interest for tasks involving unbounded reward
distributions that can be discrete (e.g Poisson) or continuous (e.g
Gaussian, Exponential). SW-LB-SDA can also be used for general bounded
rewards with the same performance guarantees by using the
\textit{binarization trick} \cite{agrawal2013further}. Note however,
that the knowledge of the horizon $T$ and the estimated number of
change point $\Gamma_T$ is still required to obtain optimal rates, which is
an interesting direction for future works on this approach
\cite{auer2019adaptively, besson2020efficient}.
We provide a high-level outline of the analysis behind Theorem~\ref{th::regret_SWLB} and
the complete proof is given in Appendix~\ref{app::NS_lbsda}.

\paragraph{Regret decomposition} For
the $\Gamma_T+1$  stationary phases
$[t_\phi, t_{\phi+1}-1]$ with $\phi \in \{1, \dots, \Gamma_T\}$,
we 
define $r_\phi$ as the first round where an observation 
from the phase $\phi$ was pulled.
Introducing the gaps
$\Delta_k^\phi=\mu_{t_\phi}^*-\mu_{t_\phi, k}$ and
denoting the optimal arm $k_\phi^\star$, we can
rewrite the regret as
\begin{align*}
\mathcal{R}_T &= 
\bE\left[\sum_{\phi=1}^{\Gamma_T} \sum_{r=r_{\phi}-1}^{r_{\phi+1}-2}
\sum_{k \neq k_\phi^\star} \ind\left(k \in \cA_{r+1} \right)
\Delta_{k}^\phi\right] \\
&= \sum_{\phi=1}^{\Gamma_T}  
\sum_{k \neq \best}\bE [N_k^\phi] \Delta_{k}^\phi \;,
\end{align*}
where we define
$N_k^\phi = \sum_{r=r_{\phi}-1}^{r_{\phi+1}-2} \ind(k \in \cA_{r+1})$
the number of pulls of an arm $k$ during a phase $\phi$ when it is
suboptimal. 

Note that the quantities $t_\phi$, $r_\phi$ and $\Delta_k^\phi$
for the different stationary phases $\phi$
are only required for the theoretical analysis 
and the algorithm has no access to those values.
We highlight that the sequence $(r_\phi)_{\phi \geq 1}$ is a random variable that 
depends on the trajectory of the algorithm. 
However, we show in
Appendix~\ref{app::NS_lbsda} that this causes no additional difficulty
for upper bounding the regret.
We introduce
$\delta_\phi = t_{\phi+1}-t_\phi $
the length of a phase $\phi$. 
Combining elements from the proofs of 
\citet{garivier2011upper} and that of 
Theorem~\ref{th::regret_lbsda}, we first provide an upper 
bound on $\bE[N_k^\phi]$ for any suboptimal arm $k$ during the phase $\phi$ as
\[
	\bE[N_k^\phi] \leq 2 \tau + \frac{\delta_\phi A_k^{\phi,\tau}}{\tau} + c_{k,1}^{\phi, \tau}
+ c_{k,2}^{\phi, \tau} + c_{k,3}^{\phi, \tau} \;.
\]
In this decomposition we define $A_k^{\phi,\tau} = b_{k}^\phi \log(\tau)$
for some constant $b_{k}^\phi > 0$,
along with the terms $c_{k,1}^{\phi,\tau}$,
$c_{k,2}^{\phi,\tau}$ and $c_{k,3}^{\phi,\tau}$, which
all represents a different technical aspect of the regret
decomposition of SW-LB-SDA.
Before interpreting them we start with their formal definition,
\begin{align*}
	 c_{k,1}^{\phi,\tau} &= \bE\left[\sum_{r=r_\phi+2\tau-2}^{r_{\phi+1}-2} \ind\left(\cG_k^\tau(r, A_k^{\phi,\tau}) \right)\right] \;, \\
	 c_{k,2}^{\phi,\tau} &= \bE\left[\sum_{r=r_\phi+2\tau-2}^{r_{\phi+1}-2} \ind\left(\ell^\tau(r) = \best, D_k^\tau(r) =1 \right) \right] \;,  \\
	 c_{k,3}^{\phi, \tau} &= \bE\left[\sum_{r=r_\phi+2\tau-2}^{r_{\phi+1}-2}
	 \ind\left(\ell^\tau(r)\neq \best \right) \right] \;,
\end{align*}
where $\cG_k^{\tau}(r,n)$ is equal to
\[
\{k \in \cA_{r+1}, \ell^{\tau}(r)=
\best, N_k^{\tau}(r)\geq n, D_k^\tau(r) =0 \} \;.
\]
\paragraph{Bounding individual terms} The three
terms have intuitive interpretation
and summarize well the technical contributions
behind
Theorem~\ref{th::regret_SWLB}. To some extent they all
rely on the notion of \textit{saturated} arms 
defined in Section~\ref{subsection:lb_sda_memory}
and that we refine in Appendix~\ref{app::NS_lbsda} for the problems
considered in this section (mainly by properly tuning $A_k^{\phi,\tau}$ 
in the theoretical analysis).

First, $c_{k,1}^{\phi,\tau}$ is an upper bound on the expectation
of the number of times a \textit{saturated suboptimal arm}
can defeat the \textit{optimal} leader (i.e $\ell^\tau(r)=\best$).
To prove this result we establish a new concentration inequality
for Last-Block Sampling in the context of SW-LB-SDA.

The second term $c_{k,2}^{\phi,\tau}$ controls
the probability that the \textit{diversity flag}
is activated when the optimal arm $\best$ is the
leader. We prove that if this event happen,
then $\best$ has necessarily lost at least one
duel against a saturated \textit{sub-optimal}
arm, and that this event has only a low probability.

The term $c_{k,3}^{\phi,\tau}$ is the most difficult to handle,
the main challenge is to upper bound the probability
that the \textit{optimal arm} is \textit{not saturated}
after a large number of rounds. 

In Appendix~\ref{app::NS_lbsda} we provide the complete
analysis of each of these terms and a full description
of all the technical results that led to Theorem~\ref{th::regret_SWLB}.

\section{Experiments}
\label{sec::experiments}

\paragraph{Limiting the storage in stationary environments.}
In our first experiment\footnote{The code for obtaining the different figures reported in the paper is available at \url{https://github.com/YRussac/LB-SDA}.} reported on Figure~\ref{fig::bernoulli_stat}, 
we compare LB-SDA and LB-SDA-LM on a stationary instance with $K=2$
arms with Bernoulli distributions for a horizon $T=10000$. We add natural competitors
(Thompson Sampling \cite{thompson1933likelihood}, kl-UCB \cite{cappe2013kullback}),
that know ahead of the experiment that the reward distributions are Bernoulli
and are tuned accordingly. 
The arms satisfy $(\mu_1, \mu_2) = (0.05, 0.15)$ with a gap $\Delta = 0.1$.
We run LB-SDA-LM with a memory limit 
$m_r = \log(r)^2 + 50$, which gives a storage 
ranging from 50 to 150 samples (much smaller than the horizon T = 10000).
The regret are averaged on 2000 independent 
replications and the upper and lower quartiles are reported.
In this setup LB-SDA-LM performs similarly to KL-UCB, 
and the impact of limiting the memory is mild, when compared to LB-SDA. 
This illustrates that even with relatively 
small gaps (here 0.1), a substantial reduction
 of the storage can be done with only minor loss of performance with LB-SDA-LM.
 
 \begin{figure*}[hbt]
        \centering
        \includegraphics[width=0.4\textwidth]{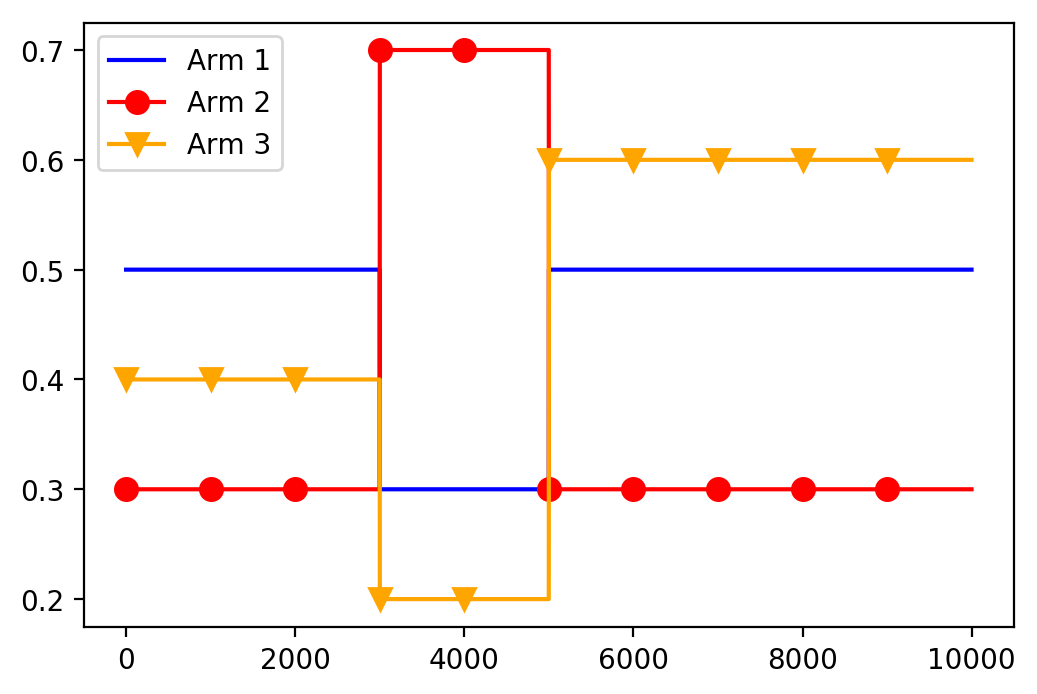} \qquad \qquad \includegraphics[width=0.4\textwidth]{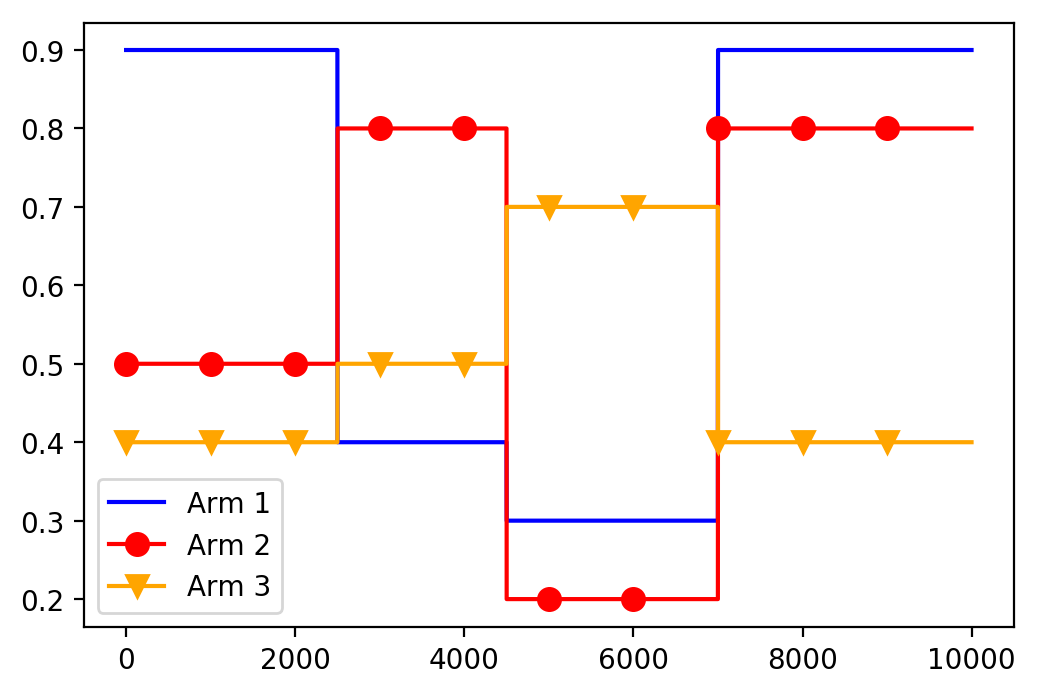}
        \caption{Evolution of the means: Left, Bernoulli arms (Fig.~\ref{fig::bernoulli_exp}); Right, Gaussian arms (Figs.~\ref{fig::100_rep_gauss_variance_0_5} and~\ref{fig::100_rep_gauss_variance_evolving}).}
        \label{fig::bernoulli_distrib_exp}
        \label{fig::distribution_arms}
\end{figure*}

\begin{figure}[hbt]
        \centering
        \includegraphics[width=0.45\textwidth]{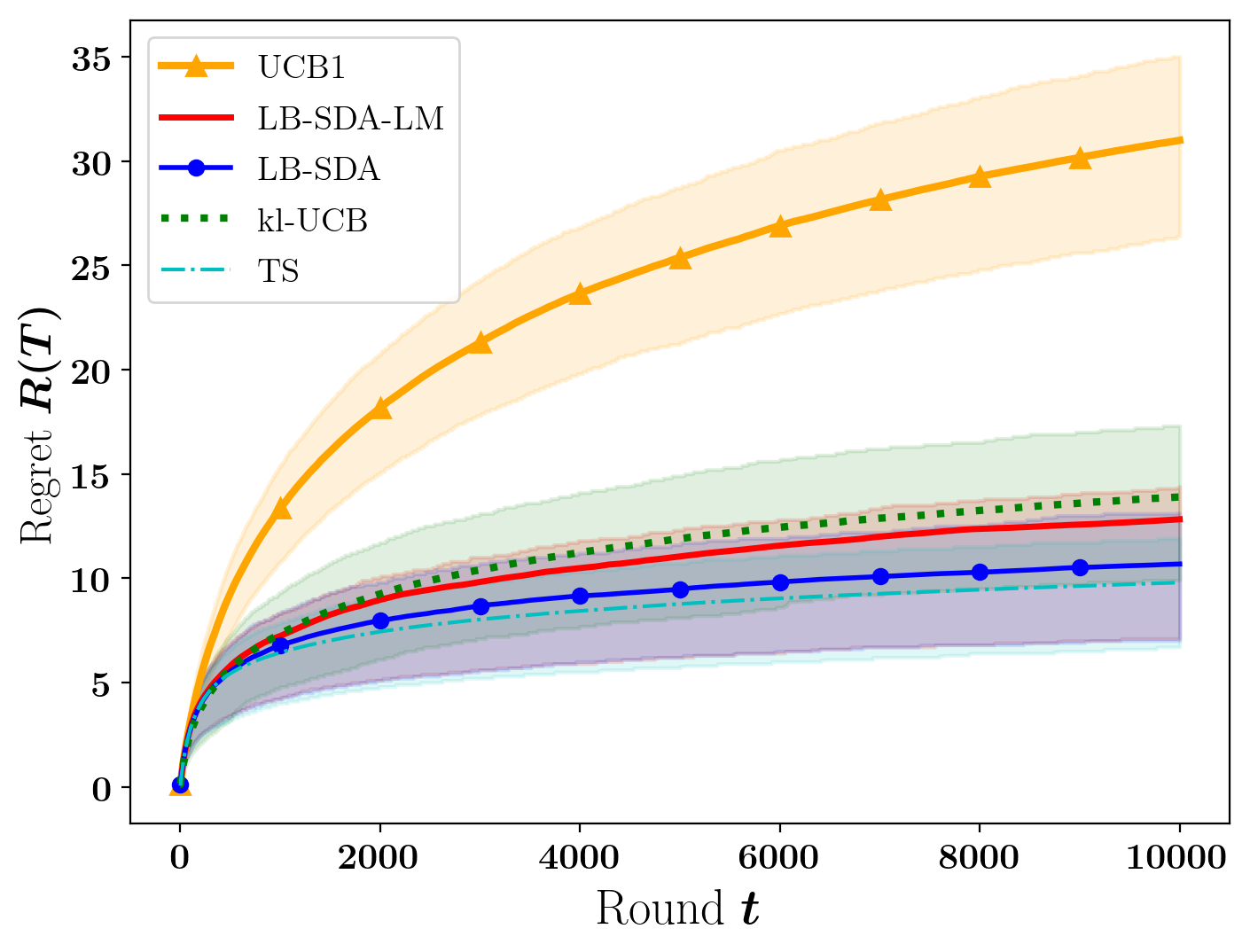}
        \caption{Cost of storage limitation on a Bernoulli instance. 
        The reported regret are averaged over 2000 independent replications.}
                \label{fig::bernoulli_stat}
\end{figure}

\paragraph{Empirical performance in abruptly changing environments.}
In the second experiment, we compare different state-of-the-art
algorithms on a problem with $K=3$ Bernoulli-distributed arms. The
means of the distributions are represented on the left hand side of
Figure~\ref{fig::bernoulli_distrib_exp} and the performance 
averaged on 2000 independent replications are
reported on Figure~\ref{fig::bernoulli_exp}. Two changepoint detection
algorithms, CUSUM \cite{liu2017change} and M-UCB \cite{cao2019nearly}
are compared with progressively forgetting policies based on upper
confidence bound, SW-klUCB and 
D-klUCB adapted from \citet{garivier2011upper}, or Thompson
sampling, DTS \cite{raj2017taming} and SW-TS \cite{trovo2020sliding}.
We also add EXP3S \cite{auer2002finite} designed for adversarial
bandits and our SW-LB-SDA algorithm for the comparison. 
The different algorithms make use of the knowledge of $T$ and
$\Gamma_T$.

\begin{figure}[hbt]
        \centering
        \includegraphics[width=0.45\textwidth]{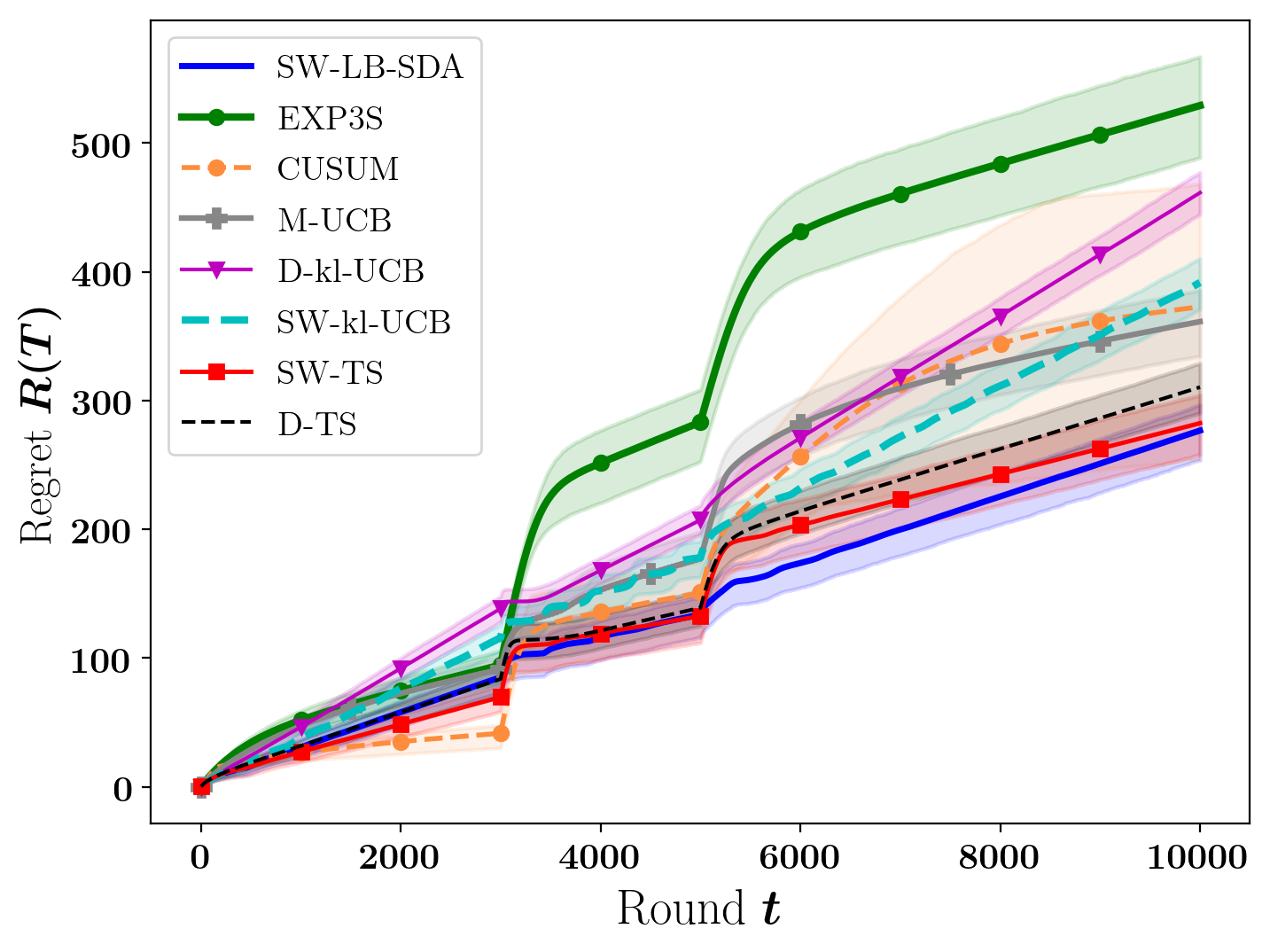}
        \caption{Performance on a Bernoulli instance averaged
        on 2000 independent replications.}
        \label{fig::bernoulli_exp}
\end{figure}

To allow for fair comparison, we use for SW-LB-SDA, the same value of
$\tau = 2 \sqrt{T \log(T) /\Gamma_T}$ that is recommended for SW-UCB
\cite{garivier2011upper}.
D-UCB uses the discount factor suggested 
by \citet{garivier2011upper}, $1/(1-\gamma) = 4 \sqrt{T/\Gamma_T}$.
The changepoint detection algorithms need extra
information such has the minimal gap for a breakpoint and the minimum
length of a stationary phase.
For M-UCB, we set $w = 800$ and $b = \sqrt{w/2 \log(2 KT^2)}$
as recommended by \citet{cao2019nearly} but set 
the amount of exploration 
to $\gamma = \sqrt{K \Gamma_T \log(T) /T}$ 
following \citet{besson2020efficient}.
In practice, using this value rather than 
the theoretical suggestion from 
\citet{cao2019nearly} improved significantly
the empirical performance of M-UCB for the horizon considered here.
For CUSUM, $\alpha$ and $h$ are 
tuned using suggestions from \citet{liu2017change}, 
namely $\alpha = 
\sqrt{\Gamma_T / T \log(T/\Gamma_T)}$ 
and $h = \log(T/\Gamma_T)$.
On this specific instance, 
using $\epsilon = 0.05$ (to satisfy 
Assumption 2 of \citet{liu2017change}) and $M=50$
gives good performance. 
For the EXP3S algorithm, 
following \cite{auer2002finite} the parameters $\alpha$ and
$\gamma$ are tuned as follows: $\alpha = 1/T$ and 
$\gamma = \min(1, \sqrt{K(e + \Gamma_T \log(K T)/((e-1)T)}$.

This problem is challenging because a policy that focuses on arm 1 to
minimize the regret in the first stationary phase also has to explore
sufficiently to detect that the second arm is the best in the second
phase.  SW-LB-SDA has performance comparable to the forgetting TS
algorithms and is the best performing algorithm in this scenario.
Note that both TS algorithms use the assumption that the arms are
Bernoulli whereas SW-LB-SDA does not. SW-klUCB performs better than 
D-klUCB and its regret closely matches the one from the changepoint
detection algorithms. 
By observing the lower and the upper quartiles, one sees that 
the performance of CUSUM vary much more than the other 
algorithms depending on its ability to detect the breakpoints.
Finally, EXP3S, which can 
adapt to more general adversarial settings,
lags behind the other algorithms 
in this abruptly changing stochastic environment. 

\begin{figure}[hbt]
        \centering
        \includegraphics[width=0.45\textwidth]{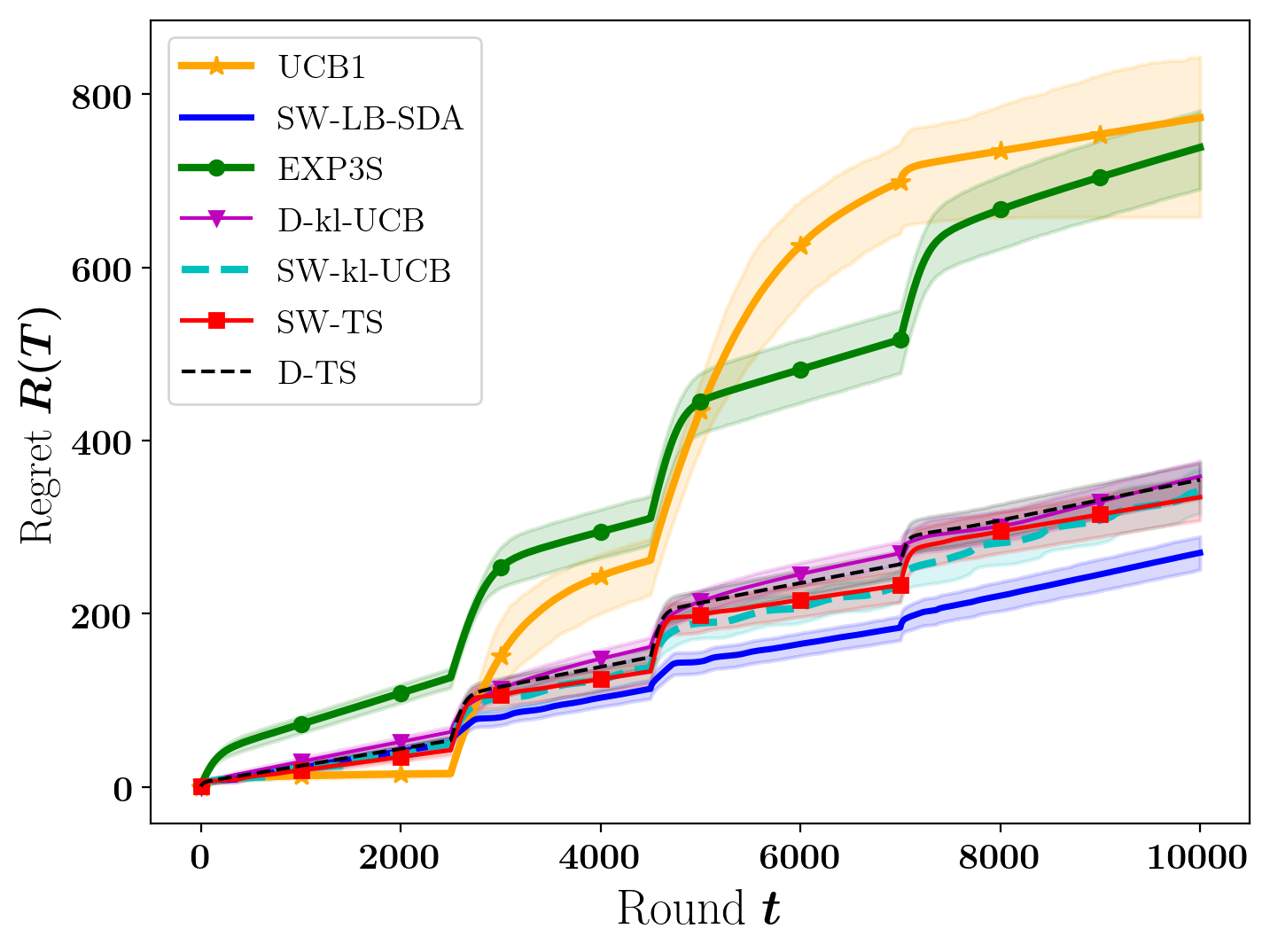}
        \caption{Performance on a Gaussian 
        instance with a constant standard deviation of $\sigma=0.5$
        averaged on 2000 independent runs.}
        \label{fig::100_rep_gauss_variance_0_5}
\end{figure}

In the third experiment with $\Gamma_T=3$ breakpoints, 
the $K=3$ arms comes from Gaussian
distributions with a fixed standard deviation of  $\sigma = 0.5$ but time
dependent means. 
The evolution of the
arm's means is pictured on the right 
of Figure~\ref{fig::distribution_arms} and
Figure~\ref{fig::100_rep_gauss_variance_0_5} displays the
performance of the algorithms. CUSUM and M-UCB can not
be applied in this setting because CUSUM is only analyzed for
Bernoulli distributions and M-UCB assume that the distributions are
bounded. 
Even if no theoretical guarantees exist for Thompson sampling with a sliding window or discount factors, when the distribution are Gaussian with known variance, we add them as competitors.
The analysis of
SW-UCB and D-UCB was done under the bounded reward assumption but the algorithms can be
adapted to the Gaussian case. Yet, the tuning of the discount factor
and the sliding window had to be adapted to obtain reasonable
performance, using $\tau = 2(1 + 2\sigma)\sqrt{T \log(T) /\Gamma_T}$
for D-UCB and $\gamma = 1 - 1/(4(1 + 2 \sigma))\sqrt{\Gamma_T/T}$ for
SW-UCB (considering that, practically, most of the rewards lie under $1 + 2 \sigma$).
For reference, Figure~\ref{fig::100_rep_gauss_variance_0_5} also displays the
performance of the UCB1 algorithm that ignores the non-stationary
structure. Clearly, SW-LB-SDA, in addition of being the only algorithm
analyzed in this setting with unbounded rewards, also has the best
empirical performance.

\begin{figure}[hbt]
        \centering
        \includegraphics[width=0.45\textwidth]{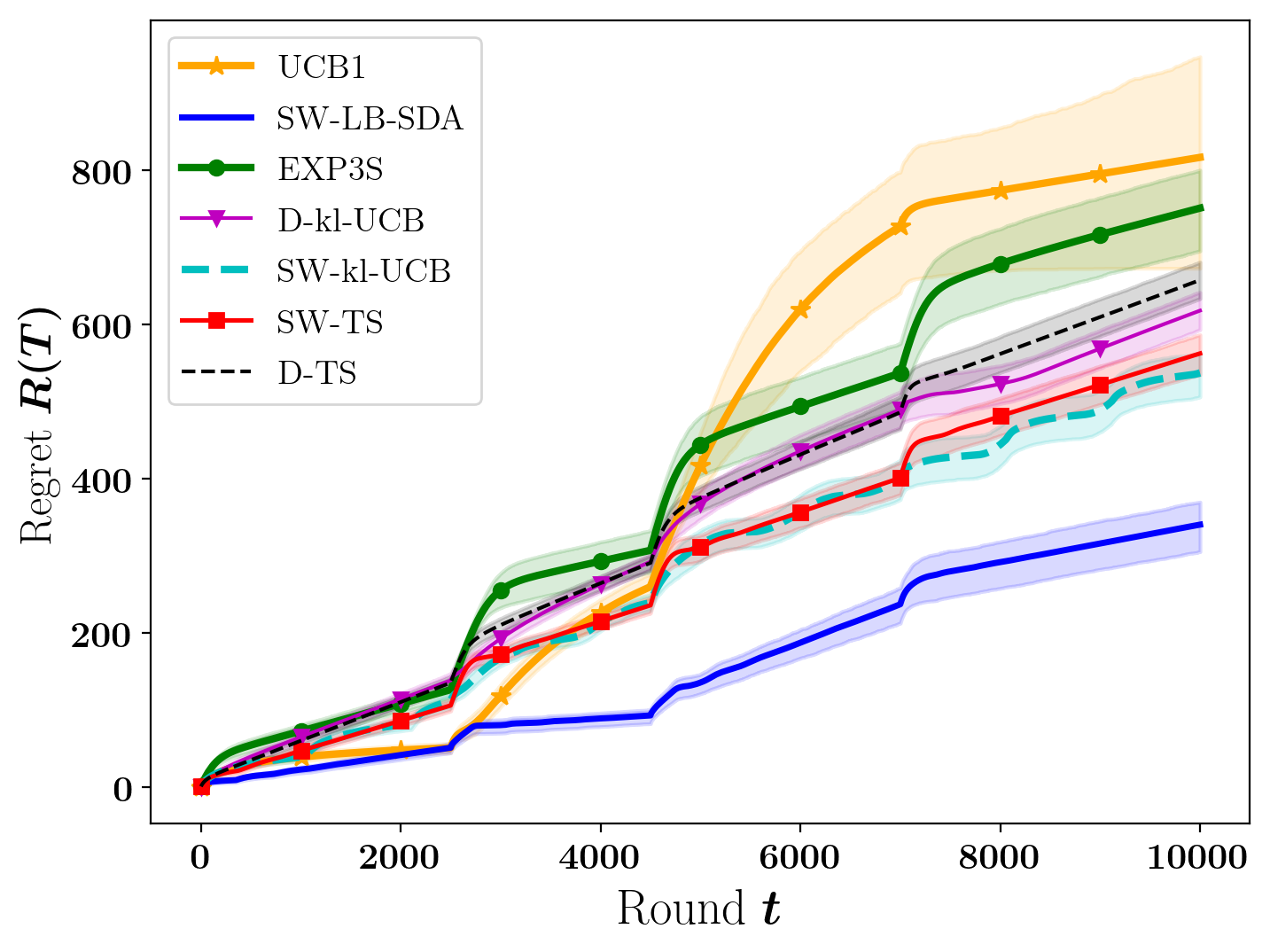}
        \caption{Performance on a Gaussian instance with time dependent standard deviations
        averaged on 2000 independent replications.}
        \label{fig::100_rep_gauss_variance_evolving}
\end{figure}

\paragraph{Changes affecting the variance.}
The last experiment features the same Gaussian means but with
different standard errors. The standard error takes the values
$0.5, 0.25, 1$ and $0.25$, respectively, in the four stationary phases.
The algorithms based on upper confidence bound are given the
maximum standard error $\sigma=1$, whereas
SW-LB-SDA is not provided with any information of this sort.
Figure~\ref{fig::100_rep_gauss_variance_evolving} shows that
the non-parametric nature of SW-LB-SDA is effective, with a
significant improvement over state-of-the-art methods in such settings.


\section*{Acknowledgements}
The PhD of Dorian Baudry is funded by a CNRS80 grant. 

\clearpage

\bibliography{biblio}
\bibliographystyle{icml2021}

\onecolumn
\newpage
\appendix

\appendix
\appendixtrue

\section*{Organization of the appendix}
The appendix is organized as follows:
\begin{itemize}
	\item In Section~\ref{app::proof_s} we provide some details on our analysis for the vanilla 
	LB-SDA algorithm.
	\item In Section~\ref{app::lbsda_limited_memory} explain how to adapt LB-SDA when a limited memory
	is used and derive an upper-bound for the regret of this variant of LB-SDA.
	\item In Section~\ref{app::NS_lbsda} a detailed analysis of LB-SDA with a sliding window in any abruptly 
	changing environment is proposed.
\end{itemize}

\section{Analysis of LB-SDA}
\label{app::proof_s}

\subsection{Proof of Lemma \ref{lem::control_pull_lbsda_s}}
Before establishing our main result for LB-SDA, 
we introduce the balance function of an arm, 
which was first defined in \cite{baransi2014sub}.

Assume that the $K$ arm are characterized by the reward distributions 
$(\nu_1,..., \nu_K)$. Assume that there is a unique optimal arm associated to the 
arm $k^\star$. 

\begin{definition}
\label{def::balance}
Letting $\nu_{k,j}$ denote the distribution of the sum of $j$ 
independent variables drawn from $\nu_k$, 
and $F_{\nu_{k,j}}$ its corresponding CDF.
the balance function of arm $k$ is 
	$$
	\alpha_k(M,j) = \bE_{X \sim \nu_{k^\star, j}}\left(\left(1-F_{\nu_{k,j}}(X)\right)^M \right) \;.
	$$
\end{definition}
If we draw one sample from a distribution $\nu_{k^\star, j}$, 
and $M$ independent samples from another distribution $\nu_{k,j}$, 
the balance function $\alpha_k(M,j)$ quantifies the probability 
that each sample from $\nu_{k, j}$ is \textit{larger} than the sample from 
$\nu_{k^\star, j}$. 
The index $j$ represents itself the fact
 that these variables are built as the sum of $j$ 
 independent random variables from the same distribution 
 (respectively $\nu_{k^\star}$ and $\nu_k$).
 This function has been studied in detail in 
 \cite{baudry2020sub} (Appendix G and H), 
 and we will use its properties to prove the following result.

\lemmacontrolpulllbsda*

\begin{proof}
The main problem with the last block sampling is that if both the leader
and a given challenger are not played for some time, the index used in 
their duels remain the same due to the deterministic nature
of the sampler.
 As a consequence this challenger is never played as long 
 as the leader remains the same. 
If this situation occur too often, this would
limit the diversity for the duels played by the optimal arm $k^\star$ 
against  suboptimal leaders. 
We show that this is not possible by proving 
that the leader will be played a large number of times, which necessarily
brings some diversity. 
To measure this, we define the quantity of duels won by the leaders
at the different rounds as
\[W_r = 1+ \sum_{s=1}^{r-1} \ind(\cA_{s+1}=\{\ell(s)\}) \;,\]

where we added $1$ to consider the first round 
where every arm is pulled once. 
For any trajectory this quantity is linear in $r$.

\begin{restatable}{lemma}{lemmaWr}
\label{lem::WT}
	With $W_r = 1 + \sum_{s=1}^{r-1} \ind(\cA_{s+1}=\{\ell(s)\})$,
	for any round $r$ under LB-SDA it holds that
	\[W_r = N_{\ell(r)}(r) \geq r/K\;.\]
\end{restatable}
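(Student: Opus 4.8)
The plan is to split the statement into the identity $W_r = N_{\ell(r)}(r)$ and the inequality $N_{\ell(r)}(r) \geq r/K$, and to phrase both in terms of the running maximum of the pull counts. Since $\ell(r)$ is by definition an arm with the largest number of pulls, we have $N_{\ell(r)}(r) = M(r)$, where I write $M(r) := \max_{k} N_k(r)$.

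The inequality is the easy half and is independent of the identity. I would just count pulls: the initialization round draws all $K$ arms, and every subsequent round pulls at least one arm, since $\cA$ is never empty (it defaults to $\{\ell\}$ when no challenger is added). Hence $\sum_{k} N_k(r) \geq K + (r-2) \geq r$ for $K \geq 2$, and averaging gives $M(r) \geq \frac{1}{K}\sum_{k} N_k(r) \geq r/K$.

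For the identity I would induct on $r$, proving $M(r) = W_r$ by showing that $M(r)$ increases by exactly one on a solo-leader round (a round $s$ with $\cA_{s+1} = \{\ell(s)\}$) and is unchanged on every other round; these increments match those of $W_r$ exactly, and the base case is furnished by the initialization, after which $M = 1 = W$. On a solo round only the leader is pulled, and since it already attained the maximum its count becomes $M+1$ and strictly dominates, so $M$ grows by one and the leader is retained. The delicate case is a non-solo round: there the leader is not pulled (in LB-SDA the leader is drawn only when no challenger enters $\cA$), so its count stays at $M$, and the point to establish is that no challenger that is pulled reaches $M+1$, i.e. that every drawn challenger had count strictly below $M$.

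The hard part will be this last claim, because a challenger may be pulled for two different reasons. When it is pulled for winning its duel, I would invoke the tie-breaking rule: among arms tied for the maximal number of pulls the leader is the one with the largest reward sum, so a challenger tied with the leader at $M$ compares its full average against the leader's full average and cannot win, forcing winning challengers to have count $< M$. When it is pulled by the forced-exploration flag $N_k \leq \sqrt{\log r}$, I would rule out that a maximal-count arm triggers it, by showing that the running maximum always stays above the exploration threshold, $M(r) > \sqrt{\log r}$; this makes under-exploration harmless, as it only brings lagging arms up to the current maximum, never beyond it. I expect proving $M(r) > \sqrt{\log r}$ to be the most delicate bookkeeping: one has to track how solo rounds keep pushing the maximum ahead of the slowly growing threshold, while catch-up rounds merely equalize counts, so that by the time every arm has reached the current maximum a solo round has already raised it. This is precisely where the interplay between the deterministic sampler, the tie-break, and the $\sqrt{\log r}$ obligation must be controlled.
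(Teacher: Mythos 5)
Your decomposition is essentially the paper's proof in a different costume. The paper inducts over the rounds at which the leadership changes and uses two facts: (i) between two such rounds the increments of $W_r$ and of the leader's count coincide, because in LB-SDA the leader is pulled exactly when $\cA_{r+1}=\{\ell(r)\}$; and (ii) at a takeover the new leader has the same count as the old one. Your round-by-round induction on $M(r)=\max_k N_k(r)$ packages the same two facts as ``$M$ increases by one on solo rounds and is unchanged otherwise.'' The inequality half is handled identically in both (your $K+(r-2)$ should be $K+(r-1)$, which changes nothing). The tie-break argument you give for duel-winning challengers is precisely the paper's justification of (ii), and it carries the same unstated caveat about exact ties of empirical means: with the $\geq$ in the duel, a challenger whose full-history mean exactly equals the leader's is pulled past it. The paper lives with this, so I will not hold it against you.

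The one place where you have not closed the argument is the forced-exploration case. You correctly observe that you need $M(r)>\sqrt{\log r}$ so that an arm at the running maximum can never be pulled by the sampling obligation, and you defer this as ``delicate bookkeeping.'' Be aware that the bound you already have, $M(r)\geq r/K$, does \emph{not} settle it: for large $K$ and moderate $r$ one can have $\lceil r/K\rceil \leq \sqrt{\log r}$, so you genuinely need the catch-up dynamics you sketch --- the threshold $\sqrt{\log r}$ gains one unit only over exponentially long stretches of rounds, while every arm below it gains one sample per round, so all arms reach the current maximum well before the threshold does, at which point a (generic) solo round pushes the maximum up again. The paper's own proof never confronts this step: it simply asserts that the only way to take the leadership is to reach the leader's sample count, which implicitly rules out forced exploration overshooting the maximum. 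Your proposal is therefore, if anything, more explicit than the paper about where the real content of the lemma lies; it is just not finished at exactly that point.
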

Before using Lemma \ref{lem::WT}, we recall 
the sampling obligation rule introduced in 
Section~\ref{sec::stationnary_lbsda}.
 and that we use to consider rounds where the optimal arm 
 has enough samples.
At any round $r$ each arm with less than 
$f(r)=\sqrt{\log r}$ samples is pulled. 
We focus on rounds where we are sure that arm $k^\star$ 
has been pulled "enough", 
and compute the probability that it has lost a lot of duels after this moment. 
In particular, we consider $a_r$ as the smallest round 
satisfying $f(a_r)\geq f(r)-1$, ensuring $N_{k^\star}(a_r) \geq \lfloor f(r)-1 \rfloor$. 
This round is exactly $\lceil f^{-1}(f(r)-1) \rceil$, that can be computed as
\begin{align*}
	f^{-1}(f(r)-1)& = \exp\left((f(r)-1)^2\right)\\
	& = \exp \left(f(r)^2 + 1 - 2 f(r)\right) \\
	& = f^{-1}(f(r))\exp(-2f(r)+1) \\
	& = r \times \exp(-2f(r)+1) \;.
\end{align*}
This means that for any $\gamma \in (0, 1)$, if $r$ 
is large enough to satisfy $f(r) \geq \frac{1-\log \gamma}{2}$ 
then $a_r \leq \gamma r$. 
For the rest of the proof we consider the
number of duels lost by the arm $k^\star$ 
after the round $a_r$ against unique subsamples
of a suboptimal leader. 
The number of duels won by the leader 
between the rounds $a_r$ and $r$ is equal to
$W_r - W_{a_r}$. 
Out of those duels, at most $(\log r)^2$ of them can concern the optimal
arm $k^\star$ because $N_{k^\star}(r) \leq \log(r)^2$.
Consequently, there is at least $W_r - W_{a_r}- (\log r)^2$ duels won by
a suboptimal leader between rounds $a_r$ and $r$.
Using Lemma \ref{lem::WT} and $W_{a_r} \leq a_r$ one has,
\begin{align*}
W_r - W_{a_r}- (\log r)^2 &\geq \frac{r}{K} - a_r - (\log r)^2 \\
& \geq \frac{r}{K} - \gamma r - (\log r)^2 \;.
\end{align*}
To simplify the expression we just write that 
for any $\beta \in (0,1)$ there exists a constant 
$r(\beta, K)$ satisfying 
$\forall r \geq r(\beta, K)$, 
\begin{equation}
\label{eq:lower_W_t}
W_r - W_{a_r} - (\log r)^2 \geq \beta \frac{r}{K} \;. 
\end{equation}
Under $N_{k^\star}(r) \leq (\log r)^2$ 
we are sure that there exists some 
$j \in \{1,...,\lfloor (\log r)^2 \rfloor \}$ 
such that a fraction $1/(\log r)^2$ 
of the duels counted above have been 
played with $N_{k^\star}(r)=j$.
Let us denote $\widetilde{W}_r = W_r - W_{a_r} - (\log r)^2$
and show this by contradiction. 
Out of those duels, we denote $\widetilde{W}_{r,j}$ the number of duels
played with $N_{k^\star}(r) = j$.
If we assume that for all $j \leq \lfloor (\log r)^2 \rfloor$,
there is strictly less than 
$\frac{\beta}{(\log r)^2} \frac{r}{K}$ 
duels played with $N_{k^\star}(r) = j$.
The following would hold,
\begin{align*}
W_r - W_{a_r} - (\log r)^2 =
\widetilde{W}_r = \sum_{j = 1}^{\lfloor (\log r)^2 \rfloor} \widetilde{W}_{r,j}
 < \sum_{j = 1}^{\lfloor (\log r)^2 \rfloor}
 \frac{\beta}{(\log r)^2} \frac{r}{K} < \beta \frac{r}{K} \;.
\end{align*}
There is a contradiction with Equation \eqref{eq:lower_W_t} and
means there is a $j  \leq \lfloor (\log r)^2 \rfloor$ 
and $\beta r/((\log r)^2K)$ duels such that 
$k^\star$ 
competes using its same block of observations of size $j$.

Furthermore, with the same argument 
we are sure that a fraction $1/(K-1)$ 
of these duels is played against the 
same leader $k \in \{2,\dots, K\}$. 
We would now like to obtain duels with non-overlapping blocks.
Even if the blocks are all consecutive, waiting for $j$ steps is enough
to ensure that they are not overlapping.
Taking a fraction $1/j$ 
of the duels from the previous subsets is hence enough to guarantee this. 

Finally, we conclude that for any $\beta \in (0,1)$ 
there exists a constant $r(\beta, K)$ such that for 
any round $r>r(\beta, K)$, under the event 
$\{N_1(r)\leq (\log r)^2 \}$  
there exists some $k \in \{2,\dots, K\}$ and 
some $j \in \{\lfloor f(r)-1 \rfloor, \lfloor (\log r)^2 \rfloor \}$ 
such as arm $k^\star$ lost at least 
$\beta \frac{r}{K(K-1)(\log r)^2 j}$ 
duels against non-overlapping blocks of arm 
$k$ while $k$ is the leader and $k^\star$ 
has exactly $j$ observations. 
This term correspond exactly to the balance function
$\alpha_k(M, j)$ from Definition~\ref{def::balance}, 
with $M=\beta\frac{r}{K(K-1)(\log r)^2j}$, hence we can upper bound 
$$
 \sum_{r=1}^T \bP\left(N_{k^\star}(r) \leq (\log r)^2 \right) 
 \leq r(\beta, K) + \sum_{k=2}^K \sum_{r=r(\beta, K)}^T 
 \sum_{j=\lfloor \log(r)-1\rfloor}^{\lfloor (\log r)^2 \rfloor} 
 \alpha_k\left(\beta\frac{r}{K(K-1)(\log r)^2 j}, j\right) \;. 
 $$
\begin{remark}
The fact that the duels concern non-overlapping blocks of arm $k$ is necessary to
obtain \textit{independent} samples. It is also important that those duels
are based on exactly $j$ observations in order to introduce the balance function.
\end{remark}
We conclude the proof using the following lemma which is proved in the next section.
\begin{restatable}{lemma}{lembalancestatio}
\label{lem::balance_statio} If the arms $k$ and $k^\star$ come from the same one-parameter exponential family of distributions it holds that
\[\sum_{r=r(\beta, K)}^T 
\sum_{j=\lfloor \log(r)-1\rfloor }^{\lfloor(\log r)^2 \rfloor} 
\alpha_k\left( \beta \frac{r}{K(K-1)(\log r)^2 j} , j\right) = O(1)\;.\]
\end{restatable}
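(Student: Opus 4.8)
The plan is to import the fine estimates of the balance function established by \citet{baudry2020sub} (Appendices G and H) and to reduce the double sum to an elementary convergence check. The conceptual starting point is that, by Definition~\ref{def::balance}, $\alpha_k(M,j)$ is the probability that a single block of $j$ samples from the optimal arm is beaten by each of $M$ independent blocks of $j$ samples from arm $k$. For this to occur the optimal block must have an atypically low empirical mean --- an event whose probability is exponentially small in $j$, at a rate $\gamma_k>0$ governed by $\text{kl}(\mu_k,\mu^\star)$ --- while the requirement that \emph{all} $M$ competing blocks exceed it amounts to the minimum of the $M$ suboptimal block-means lying above the optimal one, and the order-statistics cost of pushing this minimum up supplies an additional polynomial factor of order $1/M$. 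The key input I would extract from \citet{baudry2020sub} is therefore a bound of the form
\[
\alpha_k(M,j)\le \frac{C}{M}\,e^{-\gamma_k j}\,,
\]
valid in the regime where $M$ is large, which is exactly the regime at hand since $M_{r,j}=\beta r/\big(K(K-1)(\log r)^2 j\big)$ is polynomially large in $r$ once $r\ge r(\beta,K)$.

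I would then substitute this value of $M_{r,j}$. The crucial structural observation is that $M_{r,j}\,j=\beta r/\big(K(K-1)(\log r)^2\big)$ does not depend on $j$, so the $1/M$ factor turns into $K(K-1)(\log r)^2 j/(\beta r)$ and produces the essential $1/r$ decay. Summing over $j$ between $j_{\min}(r)=\lfloor\log r-1\rfloor$ and $\lfloor(\log r)^2\rfloor$ yields
\[
\sum_{j\ge j_{\min}(r)} \alpha_k(M_{r,j},j)
\le \frac{C\,K(K-1)(\log r)^2}{\beta r}\sum_{j\ge j_{\min}(r)} j\,e^{-\gamma_k j}
= O\!\left(\frac{(\log r)^3}{r^{\,1+\gamma_k}}\right),
\]
where the tail $\sum_{j\ge j_{\min}(r)} j\,e^{-\gamma_k j}$ is dominated by its first term and is thus of order $\log r\cdot e^{-\gamma_k\log r}=\log r\cdot r^{-\gamma_k}$. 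Summing the resulting bound over $r$ converges because $1+\gamma_k>1$, which gives the claimed $O(1)$.

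The main obstacle is not the final summation but securing the right balance-function estimate, and the reason is instructive. A naive Chernoff bound delivers only the exponential factor $\alpha_k(M,j)\le e^{-\gamma_k j}$; but then $\sum_r e^{-\gamma_k j_{\min}(r)}=\sum_r r^{-\gamma_k}$ diverges as soon as $\gamma_k\le 1$, so the pure large-deviation estimate is insufficient for small gaps. Symmetrically, the distribution-free bound $\alpha_k(M,j)\le 1/(M+1)$ captures the $1/M$ decay but loses all dependence on $j$, and summing it diverges as well. What makes the argument go through is that \emph{both} effects must be exploited at once, together with the fact that the $\sqrt{\log r}$ sampling obligation forces $j_{\min}(r)\to\infty$: it is precisely the interaction of the $1/M$ factor (supplying the extra $1/r$) with the growing lower limit $j_{\min}(r)$ (supplying the sub-polynomial decay $r^{-\gamma_k}$) that pushes the summand past the convergence threshold. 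I would therefore devote most of the effort to verifying that the estimates of \citet{baudry2020sub} genuinely yield the combined bound above over the relevant range of $(M,j)$, since once that bound is in hand the remainder is a routine geometric summation.
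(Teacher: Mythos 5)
Your proposal is correct and follows essentially the same route as the paper: the key input $\alpha_k(M,j)\lesssim \frac{1}{M}e^{-j\,\mathrm{kl}(\mu_k,\mu^\star)}$ is exactly what the paper establishes in Lemma~\ref{lem::balance_UB} together with Remark~\ref{remark::tuning_balance} (via the two-term bound $F_{1,j}(u)+(1-F_{2,j}(u))^M$ and the choice $u\approx(j\,\mathrm{kl}+\log M)/M$), after which the substitution of $M_{r,j}$ and the summation proceed as you describe. The only cosmetic differences are that the paper's numerator is $j\,\mathrm{kl}+\log M+C_1$ rather than a constant (an extra polylog factor that does not affect convergence), and that the paper's own proof uses the lower limit $\lfloor\sqrt{\log r}\rfloor-1$ consistent with the forced-exploration rate, so the decay in $j$ yields only $e^{-\gamma_k\sqrt{\log r}}$ rather than $r^{-\gamma_k}$; the series still converges because $(\log r)^{k}e^{-\gamma_k\sqrt{\log r}}=o((\log r)^{-2})$.
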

\end{proof}

\subsection{Proof of Auxiliary Results}
\lemmaWr*

\begin{proof}
We consider any trajectory of the bandit algorithm. 
For this trajectory we consider the sequence of the 
rounds where a change of leader occurred and write
 them as the (potentially infinite) set 
 $\mathcal{Y}=[r_0, r_1, r_2, \dots]$. 
 These are basically all the rounds $r$ satisfying 
 $\ell(r)\neq \ell(r-1)$. $r_0=1$ as it is the first round
  where we start defining the leader in the 
  algorithm, and it holds that $N_{\ell(1)}(1)=1$ 
  as every arm is drawn once at the first round. 
  As the leader was not defined before it holds that $W_{1}=1=N_{\ell(1)}(1)$
   so the property holds in $r_0$. 
As a first step, we show that the property   
is valid for all $r_i$ when $i \in \N$. 
Let $i \in \N$, we assume that the property holds in $r_i$
 and we consider the round $r_{i+1}$. It holds that 
	\[W_{r_{i+1}} = W_{r_i} + \sum_{s=r_i}^{r_{i+1}-1}\ind(\cA_{s+1}=\ell(s)) \;.\] 
The sum is exactly the number of duels won
 by the arm that is leader during the interval
  $[r_i, r_{i+1}-1]$ and it holds that 
  $\sum_{s=r_i}^{r_{i+1}-1}\ind(\cA_{s+1}=
  \ell(s))= N_{\ell(r_i)}(r_{i+1})-N_{\ell(r_i)}(r_i)$. 
Furthermore, when a change of leader happens 
the number of elements of the new and former 
leader are the same, i.e. 
$N_{\ell(r_{i+1})}(r_{i+1}) = N_{\ell(r_i)}(r_{i+1})$. 
This is due to the fact that when a challenger reaches 
the history size of the leader then the arm with the largest
mean is chosen as the leader. 
In particular, if the challenger has a lower index 
than the leader at this round it cannot take
the leadership at the next round as it will otherwise lose
its duel against the leader. For this reason, 
the only possibility for a challenger to take 
the leadership is to reach to number of samples of the leader
and to have a better index at this moment. We can write
	\begin{align*}
		W_{r_{i+1}} &= W_{r_i} + \sum_{s=r_i}^{r_{i+1}-1}\ind(\cA_{s+1}=\{\ell(s)\}) \\
		&= W_{r_i} + N_{\ell(r_i)}(r_{i+1})-N_{\ell(r_i)}(r_i) \\
		&=  W_{r_i} + N_{\ell(r_{i+1})}(r_{i+1})-N_{\ell(r_i)}(r_i) \\
		&= N_{\ell(r_i)}(r_i)  + N_{\ell(r_{i+1})}(r_{i+1})-N_{\ell(r_i)}(r_i) \quad (\text{Inductive step})
		\\
		&= N_{\ell(r_{i+1})}(r_{i+1}) \;. \\
	\end{align*}
Therefore, if the property holds in $r_i$ 
then it holds in $r_{i+1}$ which gives the result. 
The extension to any round is obtained with similar arguments: 
$\forall r \notin \mathcal{Y}$, 
$\exists i: r_i < r < r_{i+1}$. 
Then we write 
	\begin{align*}
	W_r =& W_{r_i} + \sum_{s=r_i}^{r-1}\ind(\cA_{s+1}=\ell(s)) \\ 
	=& N_{\ell(r_i)}(r_i)  + (N_{\ell(r_i)}(r) - N_{\ell(r_i)}(r_i)) \\
	= & N_{\ell(r_i)}(r) 	 = N_{\ell(r)}(r) \;,
\end{align*}
where the last inequality comes from the fact 
that the leader is unchanged between the rounds $r_i$ and 
$r$.
We conclude the proof by using the property
 that as the leader always has a number of samples
  larger than $r/K$, as it is the arm with
   the largest number of pulls at each round. 
\end{proof}

\lembalancestatio*

Before proving this result we prove an intermediary
result that will also be useful to handle the balance
function in the proof for switching bandits in Appendix~\ref{app::NS_lbsda}. 
This result was already presented in \cite{chan2020multi}, but
 we provide its proof for completeness. 

\begin{lemma}
\label{lem::balance_UB}
Let $F_1$ and $F_2$ be the cdf of two distributions with
respective means $\mu_1$ and $\mu_2$, $\mu_1>\mu_2$. For any integer $j\geq 1$
 we denote $F_{1,j}$ and $F_{2, j}$ the cdf of the
 sum of $j$ independent random variables drawn
  respectively from $F_1$ and $F_2$, and 
  $\alpha(M, j) = \bE_{X\sim F_{1, j}} \left((1- F_{2,j}(X))^M\right)$ 
  the balance function of these two distributions. 
  For any $u \in \R$ it holds that 
$$
\alpha(M, j) \leq F_{1, j}(u) + (1-F_{2, j}(u))^M \;.
$$
Furthermore, 
if we assume that $F_1$ and $F_2$ come
 from the same one-parameter exponential family of distributions, 
 for any $u \in [0, 1]$ satisfying $F_2(u) \leq F_2(\mu_2)$
  the following result holds
\[\alpha(M, j) \leq e^{-j \kl(\theta_2, \theta_1)} u + (1-u)^M \;,\]
where $\kl(\theta_2, \theta_1)$ is the Kullback-Leibler divergence between $F_2$ and $F_1$, expressed with their canonical parameters $\theta_1$ and $\theta_2$.

\end{lemma}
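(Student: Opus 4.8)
The plan is to prove the two displayed inequalities in turn, the first by an elementary decomposition of the expectation and the second by combining the first with a change-of-measure argument specific to one-parameter exponential families.

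For the general bound I would start directly from the definition $\alpha(M,j) = \bE_{X \sim F_{1,j}}\left[(1 - F_{2,j}(X))^M\right]$ and split the expectation according to whether $X \le u$ or $X > u$. On the event $\{X \le u\}$ I bound the integrand crudely by $(1-F_{2,j}(X))^M \le 1$, which contributes at most $\bP(X \le u) = F_{1,j}(u)$. On the event $\{X > u\}$, monotonicity of $F_{2,j}$ gives $F_{2,j}(X) \ge F_{2,j}(u)$, hence $(1-F_{2,j}(X))^M \le (1-F_{2,j}(u))^M$, and bounding the remaining probability by $1$ yields the term $(1-F_{2,j}(u))^M$. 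Summing the two contributions gives $\alpha(M,j) \le F_{1,j}(u) + (1-F_{2,j}(u))^M$, valid for every $u \in \R$.

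For the exponential-family refinement, the idea is to apply the first inequality at a value threshold $x_0$ and to replace the crude estimate $F_{1,j}(x_0) \le 1$ by a sharp control of the lower tail of $F_{1,j}$ \emph{relative to} that of $F_{2,j}$. Writing $g_i(y) = e^{\theta_i y - \Psi(\theta_i)} g(y,0)$, the joint density of $j$ i.i.d.\ draws from arm $i$ carries the likelihood ratio $e^{(\theta_1 - \theta_2)\sum_n y_n - j(\Psi(\theta_1) - \Psi(\theta_2))}$, and since $\mu_1 > \mu_2$ forces $\theta_1 > \theta_2$, this ratio is at most $e^{(\theta_1-\theta_2)x_0 - j(\Psi(\theta_1)-\Psi(\theta_2))}$ on the event $\{\sum_n y_n \le x_0\}$. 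Integrating over this event gives $F_{1,j}(x_0) \le e^{(\theta_1-\theta_2)x_0 - j(\Psi(\theta_1)-\Psi(\theta_2))}\, F_{2,j}(x_0)$, and for any threshold with $x_0 \le j\mu_2$ the exponent is at most $-j\,\kl(\theta_2,\theta_1)$, using the identity $\kl(\theta_2,\theta_1) = \Psi(\theta_1)-\Psi(\theta_2) - (\theta_1-\theta_2)\mu_2$. Setting $u = F_{2,j}(x_0)$, which ranges over $[0,1]$ as $x_0$ varies (the admissible thresholds $x_0 \le j\mu_2$ playing the role of the stated restriction on $u$), then turns the first inequality into $\alpha(M,j) \le e^{-j\,\kl(\theta_2,\theta_1)}\, u + (1-u)^M$, as claimed.

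The one genuinely delicate step is the second one: the naive route of bounding $F_{1,j}(x_0)$ by a pure Chernoff estimate $e^{-j\,\kl(x_0/j,\mu_1)} \le e^{-j\,\kl(\theta_2,\theta_1)}$ loses the multiplicative factor $u$ and would not produce a bound in which the \emph{same} $u$ appears in both summands. Recognizing that the pointwise likelihood-ratio inequality controls $F_{1,j}(x_0)$ in proportion to $F_{2,j}(x_0)$ rather than in absolute terms is what couples the two terms through a single parameter and makes the estimate sharp when $u$ is small; this is the key idea to get right, the rest being routine.
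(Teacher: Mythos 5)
Your proposal is correct and follows essentially the same route as the paper: the first inequality via the split of the integral at $u$ with the crude bound $(1-F_{2,j})^M\leq 1$ on one side and monotonicity on the other, and the second via the pointwise likelihood-ratio bound $\prod_n g(y_n,\theta_1) \leq e^{-j\,\kl(\theta_2,\theta_1)}\prod_n g(y_n,\theta_2)$ on the event $\{\sum_n y_n \leq j\mu_2\}$, yielding $F_{1,j}(x_0)\leq e^{-j\,\kl(\theta_2,\theta_1)}F_{2,j}(x_0)$ and hence the coupled bound with $u=F_{2,j}(x_0)$. Your closing remark correctly identifies why the relative (rather than absolute) control of the lower tail is the essential point, and your reading of the admissible range of $u$ matches the paper's intent.
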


\begin{proof}

We prove the first result, that is valid for any distribution 
$F_1$ and $F_2$ and is a direct property
 of the definition of the balance function. 
 For $u \in \mathbb{R}$, it holds that
\begin{align*} 
\alpha(M, j) &= 
\int_{-\infty}^{+\infty} (1-F_{2, j}(x))^M dF_{1, j}(x)
\\
&\leq \int_{-\infty}^{u} (1-F_{2, j}(x))^MdF_{1, j}(x)
+ \int_{u}^{+\infty} (1-F_{2, j}(x))^M dF_{1, j}(x)
\\
&\leq F_{1,j}(u)+(1-F_{2, j}(u))^M \;. \\
\end{align*}	
We now assume that $F_1$ and $F_2$ come 
from the same one-parameter exponential family of distributions. 
In this case they admit a density 
$f_\theta(y)= f(y,0)e^{\eta(\theta)y-\psi(\theta)}$ 
for some natural parameter $\theta \in \R$. 
We write $\theta_1$ the parameter of $F_1$, and $\theta_2$ 
the parameter of $F_2$. We then define some $y_1,...,y_j \in \mathbb{R}^j$. 
If the sequence $y_1, \dots, y_j$ 
satisfies $\sum_{u=1}^j y_u \leq j \mu_2$, it holds that
\begin{align*}
	\prod_{u=1}^j f_{\theta_1}(y_u) &=  
	\prod_{u=1}^j e^{(\eta(\theta_1)- \eta(\theta_2)) 
	y_u-(\psi(\theta_1)- \psi(\theta_2))} f_{\theta_2}(y_u) 
	\leq e^{-j\kl(\theta_2, \theta_1)} \prod_{u=1}^j f_{\theta_2}(y_u) \;.
\end{align*}

where we write $\kl(\theta_2, \theta_1)$ 
for the Kullback-Leibler divergence between $F_1$ and $F_2$. 
This inequality first ensures that for all $ x \leq \mu_2$
\begin{align*}
	F_{1,j}(x) \leq e^{-j \kl(\theta_2, \theta_1)} F_{2,j}(x) \;.
\end{align*}

If we insert this expression in the first result, 
we have that for any $u \in [0, 1]$ satisfying $F_2(u) \leq F_2(\mu_2)$ the following result holds
\[\alpha(M, j) \leq e^{-j \kl(\theta_2, \theta_1)} u + (1-u)^M \;.\]
\end{proof}

\begin{remark}\label{remark::tuning_balance} 
The second result is particularly interesting because 
there is a trade-off in the choice of $u$. 
If we want to upper bound $\alpha(M, j)$ by a relatively small quantity we need to choose small values for $u$, however if $u$ is too small then the second term may become too large. In particular, making the approximation $(1-u)^M \approx e^{-Mu}$ provides an optimal scaling of $u$ of the form
\[u^* = \frac{j \kl(\theta_2, \theta_1)+\log M}{M}\;,\]

and as a consequence 

\begin{align*}
	\alpha(M, j) &\leq e^{-j \kl(\theta_2, \theta_1)} u^* + (1-u^*)^M \\
& \leq \frac{j \kl(\theta_2, \theta_1)+\log M}{M} e^{-j \kl(\theta_2, \theta_1)} 
+ e^{M \log\left(1-\frac{j \kl(\theta_2, \theta_1)+\log M}{M}\right)} 
\\
& \leq \frac{j \kl(\theta_2, \theta_1)+\log M}{M} e^{-j \kl(\theta_2, \theta_1)} 
+ C_1 \frac{e^{-j \kl(\theta_2, \theta_1)}}{M} \\
& = \frac{j \kl(\theta_2, \theta_1)+\log M + C_1}{M} e^{-j \kl(\theta_2, \theta_1)} \;, 
\end{align*}
for some constant $C_1$. 
\end{remark}

With these technical results we can now finish the proof of Lemma~\ref{lem::balance_statio}
by simply replacing $M$ by its value in the double sum.

\begin{proof} 
We denote $\alpha_k$ the balance function between 
the arm $k^\star$ and an arm $k$ and want to upper bound  
\begin{align*}
\sum_{r=r(\beta, K)}^T 
\sum_{j=\lfloor \sqrt{\log r}-1\rfloor }^{\lfloor(\log r)^2 \rfloor} 
\alpha_k\left( \beta \frac{r}{K(K-1)(\log r)^2 j} , j\right) \;.
\end{align*}

We directly use the second result of Lemma~\ref{lem::balance_UB},
 and choose the tuning of $u$ from Remark~\ref{remark::tuning_balance}. 
 If we write $a_{r, j}= \alpha_k\left( \beta \frac{r}{K(K-1)(\log r)^2 j} , j\right)$ 
 and try to extract the order of $a_{r, j}$ just in terms of $r$ and $j$ we obtain 
$$
a_{k, j} = O_{r, j}\left(\frac{j^2 (\log r)^2}{r} e^{-j \kl(\theta_k, \theta_{k^\star})}
\right) \;.
$$

We then upper bound the term in $j^2$ 
by another $(\log r)^4$ using the upper limit on the sum on $j$, 
hence the only term left in $j$ is $e^{-j \kl(\theta_2, \theta_1)}$, 
which sums in a term of order $\exp(- \sqrt{\log r})$. So we then obtain a term of the form

\begin{align*} \sum_{r=r(\beta, K)}^T 
\sum_{j=\lfloor \sqrt{\log r}-1\rfloor }^{\lfloor(\log r)^2 \rfloor} 
\alpha_k\left( \beta \frac{r}{K(K-1)(\log r)^2 j} , j\right) = O\left(\sum_{r=1}^T \frac{(\log r)^6 e^{-\sqrt{\log r}}}{r} \right) \;. \end{align*}

We conclude, using that for any integer 
$k>1$, $(\log r)^k = o(e^{\sqrt{\log r}})$. 
Hence 
\[\frac{(\log r)^6 e^{-\sqrt{\log r}}}{r}= o\left(\frac{1}{r (\log r)^2}\right) \;,\]
which is the general term of a convergent series. Hence we finally obtain

\[\sum_{r=r(\beta, K)}^T 
\sum_{j=\lfloor \sqrt{\log r}-1\rfloor }^{\lfloor(\log r)^2 \rfloor} 
\alpha_k\left( \beta \frac{r}{K(K-1)(\log r)^2 j} , j\right) = O(1) \;.
\]
\end{proof}
\clearpage

\section{$\LBSDA$ with a limited memory}
\label{app::lbsda_limited_memory}
In this section the variant of LB-SDA using a limited storage 
memory introduced in Section~\ref{subsection:lb_sda_memory} is analyzed. 
After introducing a few notations, 
we present a detailed version of the algorithm. 
We then provide a detailed proof of Theorem~\ref{th::memory_limited_regret}. 

\subsection{Notation for the Proof of Theorem~\ref{th::memory_limited_regret}}
\label{app::notations}

General notations for the stationary case:
\begin{itemize}
	\item $K$ number of arms
	\item $\nu_k$ distribution of the arm $k$, 
	with mean $\mu_k$. 
	We assume that $\forall k$, $\nu_k \in \cP_{\Theta}$, 
	a one-parameter exponential family.
	\item We assume that $\mu_1 = \max_{k \in [K]} \mu_k$ 
	so we call the (unique) optimal arm "arm 1".
	\item $I_k(x)$ some large deviation rate function of the arm $k$, 
	evaluated in $x$. For one-parameter exponential families this 
	function will always be the KL-divergence between $\nu_k$ 
	and the distribution from the same family with mean $x$.
	\item $N_k(r)$ number of pull of arm $k$ up to (and including) round $r$.
	\item $Y_{k, i}$ reward obtained at the $i$-th pull of arm $k$.
	\item $\bar Y_{k, i}$ mean of the $i$-th first reward 
	of arm $k$, $\bar Y_{k, n:m}$ mean of the rewards 
	of $k$ on a subset of indices $n<m$:  
	$\bar Y_{k, n:m}= \frac{1}{m-n+1} \sum_{i=n}^m Y_{k, i}$. 
	If $m-n=s$, then $\bar{Y}_{k, s}$ and $\bar{Y}_{k, n:m}$ have the same distribution.
	\item $\ell(r)$ leader at round $r$, $\ell(r)=\argmax{k \in \K} N_k(r)$.
	\item $\cA_r$ set of arms pulled at a round $r$.
	\item $\mathcal{R}_T$ regret up to (and including) round $T$.
\end{itemize}
Notations for the regret analysis, part relying on concentration:
\begin{itemize}
\item $\cZ^r = \{\ell(r) \neq 1\}$, 
the leader used at round  $r+1$ is suboptimal.
	
	\item $\cD^r = \{\exists u \in  \{ \lfloor r/4 \rfloor,..., r \} 
	\text{ such that } \ell(u-1) = 1 \}$, the optimal arm 
	has been leader at least once between $\lfloor r/4 \rfloor$ and $r$.
	\item $\cB^{u} = \{\ell(u)=1,
	 k \in \mathcal{A}_{u+1}, N_k(u)=N_1(u)-1 \text{ for some arm } k \}$, 
	 the optimal arm is leader in $u$ but loses its duel 
	 against arm $k$, that have been pulled enough to possibly take over the leadership at next round.
	\item $\cC^u = \{\exists k \neq 1, N_k(u)\geq N_1(u), 
	\hat Y_{k,S_1^u(N_k(u),N_{1}(u))} \geq \hat Y_{1,N_1(u)}\}$, the 
	optimal arm is not the leader and has lost its duel against the 
	suboptimal leader.
	\item $\cL^r= \sum_{u=\lfloor r/4 \rfloor}^r \ind_{\cC^u}$.
\end{itemize}

\subsection{The algorithm}
Before giving the algorithm, we introduce additional notations that are used in the statement of the 
algorithm. The stored history for the arm $k$ at round $r$ is denoted 
$\mathcal{H}_k(r)$. At round $r$ when comparing the leader $\ell(r)$ 
and the arm $ k \neq \ell(r)$ the last block of the history 
of $\ell(r)$ is used and is denoted $\mathcal{S}(\mathcal{H}_k(r), \mathcal{H}_{\ell}(r))$. 
In particular, when both arms are saturated their entire history of length $m_r$ is used for the duel. 
The Last Block Subsampling Duelling Algorithm with Limited Memory is reported in Algorithm 
\ref{alg:lb-sda-LM}
\begin{algorithm}[H]
\SetKwInput{KwData}{Input}
\KwData{$K$ arms, horizon $T$, $m_r$ storage limitation}
\SetKwInput{KwResult}{Initialization}
\KwResult{$t \leftarrow 1$, $r=1$ $\forall k \in \{1, ..., K \}, N_k \leftarrow 0$, $\mathcal{H}_k = \{\}$}
\While{$t < T$}{
$\mathcal{A} \leftarrow \{\}$,  $\ell \leftarrow \text{leader}(N,  t)$ \\
\If{$r=1$}{
$\cA \leftarrow \{1, \dots, K\}$ (Draw each arm once)}
\Else{
\For{$k \neq \ell \in \{1,...,K\}$}{
\If{$N_{k} \leq \sqrt{\log r}$ \text{or} $\bar Y_{k, \mathcal{H}_k} > \bar Y_{\ell, 
\mathcal{S}(\mathcal{H}_k, \mathcal{H}_{\ell} )}$ }{$\mathcal{A} \leftarrow \mathcal{A} \cup \{ k \}$}
\If{$| \mathcal{A} | = 0$}{$\mathcal{A} \leftarrow \{l \}$}
}
}
\For{$k \in \mathcal{A}$}{
\If{ $\textnormal{card}(\mathcal{H}_k) \geq m_r$}
 {$\text{pop}(\mathcal{H}_k)$ {\color{purple} \small \tcp{Removing the oldest observation}}}
Pull arm $k$, observe reward $Y_{k,N_k +1}$, $N_k \leftarrow N_k + 1$, $t \leftarrow t+1$ \\
$\mathcal{H}_k = \mathcal{H}_k \cup \{Y_{k,N_k +1} \}$ {\color{purple} \small \tcp{Append the new observation}}
}
$r \leftarrow r+1$
}
\caption{LB-SDA with Limited Memory}
\label{alg:lb-sda-LM}
\end{algorithm}

\subsection{Proof of Theorem~\ref{th::memory_limited_regret}}	
The beginning of the proof of ~\citet{baudry2020sub} is valid for \LBSDA{}, however 
it has to be rewritten completely to introduce the storage limitation. 
We use the same notation as in Section~\ref{subsection:lb_sda_memory} 
and introduce a sequence $m_r$ of allowed memory for each arm at a round $r$. 
In the beginning of the proof we do not make any assumption on the sequence $m_r$ except 
that $m_r/\log(r) \rightarrow +\infty$, 
which is required in the statement of Theorem~\ref{th::memory_limited_regret}. 
We further assume that $m_r$ is an integer 
for any round $r$, which does not change anything
for the algorithm but simplifies the notations for the proof.
In this section, without loss of generality, we assume that the arm $1$ is the unique optimal arm
$\mu_1=\max_{k \in [K]} \mu_k$. 
We also recall that the arms are assumed to come from the same one-parameter 
exponential family of distributions.

In terms of notation, 
we remark that if $N_k(r) \geq m_r$ and 
$\ell(r)\neq k$ then the duel between $k$ and $\ell(r)$ 
is the comparison between $\bar Y_{k, N_k(r)-m_r:N_k(r)}$ 
and $\bar Y_{\ell(r), N_{\ell(r)}(r)-m_r:N_{\ell(r)}(r)}$. 
Otherwise, if $N_k(r)\leq m_r$ and $\ell(r) \neq k$
then the duel is the comparison between $\bar Y_{k, N_k(r)}$ 
and $\bar Y_{\ell(r), N_{\ell(r)}(r)-N_k(r):N_{\ell(r)}(r)}$, 
which is the same as for the vanilla LB-SDA.

We recall that the set of \textit{saturated arms} at round $r$ is 
defined as 
\begin{equation}
\label{eq:def_S_u}
\cS_r=\{k \in \K: N_k(r)\geq m_r\}\;.
\end{equation}
However, we do not change the definition of the leader that is still
defined as $\ell(r)=\aargmax_{k \leq K} N_k(r)$ nor the corresponding tie-breaking rules.
All along the proof we will use the Chernoff inequality, 
that states that for any exponential family of distribution and any 
$x, y$ 
satisfying $x < \mu_k < y$, 
then $\bP(\bar Y_{k, n} \leq x) \leq e^{- \kl(x, \mu_k)}$ 
and $\bP(\bar Y_{k, n} \geq y) \leq e^{- \kl(y, \mu_k)}$. 
To simplify the notation for each arm $k$ 
we  define the real number 
$x_k=\frac{\mu_1+\mu_k}{2}\in (\mu_k, \mu_1)$, 
and write $\omega_k=\min(\kl(x_k, \mu_1), \kl(x_k, \mu_k))$. 
Hence, we will write most of our results
using concentration with this value $\omega_k$ for arm $k$. 

We write $N_k(T)$ as $N_k(T)=1 + \sum_{r=1}^{T-1} \ind(k \in \cA_{r+1}) $. The first step of the proof is to decompose the number of pulls according to the events $\{\ell(r)=1\}$ and $k \in \cS_r$,
\begin{align*}
	\bE[N_k(T)] &= 1
	+\bE\left[\sum_{r=1}^{T-1} \ind(k \in \cA_{r+1}, \ell(r)\neq 1)\right] 
	+ \bE\left[\sum_{r=1}^{T-1}\ind(k \in \cA_{r+1}, k \notin \cS_r, \ell(r)=1)\right] \\
	& \quad + \bE\left[\sum_{r=1}^{T-1}\ind(k \in \cA_{r+1}, k \in \cS_r, \ell(r)=1)\right] \\
	& \leq 1+\bE\left[\sum_{r=1}^{T-1} \ind(\ell(r)\neq 1)\right] 
	+ \bE\left[\sum_{r=1}^{T-1}\ind(k \in \cA_{r+1}, k \notin \cS_r, \ell(r)=1)\right] \\
	& \quad + \bE\left[\sum_{r=1}^{T-1}\ind(k \in \cA_{r+1}, k \in \cS_r, \ell(r)=1)\right] \;.
\end{align*}
We first study the term 
$E_1 = \bE\left[\sum_{r=1}^{T-1}\ind(k \in \cA_{r+1}, k \in \cS_r, \ell(r)=1)\right]$ 
and use that under $k \in \cS_r$ the index of both arms will be a subsample 
of size $m_r$ of their history. 
We start the sum on the rounds at $2m_1$ because two arms
cannot be saturated before this round is reached, so it holds that
\begin{align*}
	E_1 & \leq  \sum_{r=2m_1}^{T-1} \bP\left(\ell(r) = 1, 
	k \in \cA_{r+1}, N_k(r) \geq m_r, N_1(r)\geq m_r \right) \\
	& \leq \sum_{r=2m_1}^{T-1} \bP\left(\ell(r) = 1, k \in \cA_{r+1}, N_k(r) \geq m_r, N_1(r)
	\geq m_r, \bar Y_{k, N_k(r)-m_r+1: N_k(r)} \geq \bar Y_{1,N_1(r)-m_r+1: N_1(r)} \right) 
	\\	
	& \leq \sum_{r=2m_1}^{T-1} \bP\left(N_k(r) \geq m_r, \bar Y_{k, N_k(r)-m_r+1: N_k(r)} \geq x_k \right) 
	+ \sum_{r=2 m_1}^{T-1} \bP\left(N_1(r) \geq m_r, \bar Y_{1,N_1(r)-m_r+1: N_1(r)} \leq x_k \right) 
	\\
	& \leq \sum_{r=2m_1}^{T-1} \sum_{n_k=m_r}^{r}  \bP\left(\bar Y_{k, n_k-m_r+1:n_k} 
	\geq x_k, N_k(r)=n_k\right)
	+ \sum_{r=2m_1}^{T-1} \sum_{n_1=m_r}^{r}\bP\left(\bar Y_{1, n_1-m_r+1:n_1} 
	\leq x_k, N_1(r)=n_1\right)  
	\\
	& \leq \sum_{r=2m_1}^{T-1} \sum_{n_k=m_r}^{r}  \bP\left(\bar Y_{k, n_k-m_r+1:n_k} \geq x_k\right)
	+ \sum_{r=2m_1}^{T-1} \sum_{n_1=m_r}^{r}\bP\left(\bar Y_{1, n_1-m_r+1:n_1} \leq x_k\right)
	\\
	& \leq 2 \sum_{r=2m_1}^{T-1} r e^{-m_r \omega_k} \;,
\end{align*}
where we used two main elements: 
1) if two random variables $X$ and $Y$ satisfy $X\geq Y$ 
then for any threshold $\eta$ it holds that either 
$X\geq \eta$ or $Y \leq \eta$ (third line), and 
2) the empirical averages of the fixed blocks of observations 
satisfy the Chernoff concentration inequality. 
Using the notation,  we introduced 
\[\bP(\bar Y_{1, n_1-m_r+1:n_1} \leq x_k)
= \bP(\bar Y_{1, m_r} \leq x_k) \leq e^{-m_r \omega_k}
\] 
and 
\[\bP(\bar Y_{k, n_k-m_r+1:n_k} \geq x_k) =
\bP(\bar Y_{k, m_r} \geq x_k) \leq e^{-m_r \omega_k} \;.\]

Therefore, the following holds
\begin{equation}
\label{eq:saturated}
\sum_{r=1}^{T-1}\mathbb{P}(k \in \cA_{r+1}, k \in \cS_r, \ell(r)=1) \leq 2 
\sum_{r=2m_1}^{T-1} r e^{-m_r \omega_k} \;.
\end{equation}
We then study $E_2=\bE\left[\sum_{r=1}^{T-1}\ind(k \in \cA_{r+1}, k \notin \cS_r, \ell(r)=1)\right]$. 
We further distinguish two cases, whenever $N_k(r) \leq n_0(T)$ holds or not at each round, 
for some $n_0(T)$ that will be specified later.

\begin{align*}
E_2 \leq  n_0(T) 
+ \bE\left[ \sum_{r=1}^{T-1} \ind(k \in \cA_{r+1}, k 
\notin \cS_r, \ell(r)=1, N_k(r) \geq n_0(T))\right]\;.
\end{align*}

We then use that on the event $k \notin \cS_r$  the duels played between $k$ and $1$ 
will be the classical duel with the last block: 
$k$ will compete with its empirical mean and $1$ with the mean 
of its last block of size $N_k(r)$. We define some $\eta_k \in (\mu_k, \mu_1)$ and write

\begin{align*}
	E_2 & \leq   n_0(T)
	+ \bE\left[ \sum_{r=1}^{T-1} \ind(k \in \cA_{r+1}, 
	k \notin \cS_r, \ell(r)=1, N_k(r) \geq n_0(T))\right] 
	\\
	& \leq  n_0(T) 
	+ \bE\left[ \sum_{r=1}^{T-1} 
	\ind(k \in \cA_{r+1}, \bar Y_{k, N_k(r)} 
	\geq \bar Y_{1, N_1(r)-N_k(r)+1: N_1(r)}, \ell(r)=1, N_k(r) \geq n_0(T))\right] 
	\\
	& \leq n_0(T) 
	+ \sum_{r=1}^{T-1} \bP\left(k \in \cA_{r+1}, 
	\bar Y_{k, N_k(r)} \geq \eta_k, N_k(r) \geq n_0(T)\right) 
	\\
	& + \sum_{r=1}^{T-1} \bP\left(k \in \cA_{r+1}, 
	\bar Y_{1, N_1(r)-N_k(r)+1: N_1(r)}\leq \eta_k, \ell(r)=1, N_k(r) \geq n_0(T), N_1(r) \geq n_0(T) \right) \;,
\end{align*}
where we used the same trick as for $E_1$ to obtain the last result.

We then use a union bound on the values of $N_k(r)$ for the first sum 
and on both $N_k(r)$ and $N_1(r)$ for the second sum, leading to
\begin{align*}
	E_2 & \leq n_0(T)+ \sum_{r=1}^{T-1} \sum_{n_k = n_0(T)}^{T-1} 
	\bP\left(k \in \cA_{r+1}, \bar Y_{k, n_k} \geq \eta_k, N_k(r) =n_k \right) \\
	& + \sum_{r=1}^{T-1} \sum_{n_1=n_0(T)}^{T-1} \sum_{n_k=n_0(T)}^{n_1} 
	\bP\left(k \in \cA_{r+1}, \bar Y_{1, n_1-n_k+1: n_1}\leq \eta_k, N_k(r) =n_k, N_1(r)=n_1 \right) 
	\\
	& \leq n_0(T) + \sum_{n_k = n_0(T)}^{T-1} \bP\left(\bar Y_{k, n_k} \geq \eta_k \right) 
	+ \sum_{n_k = n_0(T)}^{T-1}\sum_{n_1 = n_0(T)}^{T-1} \bP\left( \bar Y_{1, n_1-n_k+1: n_1}\leq \eta_k\right)\;,
\end{align*}
where we used that $\sum_{r=1}^{T-1} \ind(k \in \cA_{r+1}, N_k(r) = n_k) \leq 1$ 
to remove the sums in $r$ (simply ignoring the event $N_1(r)=n_1$ 
in the second term). Using the Chernoff inequality, we write

\begin{align*}
	E_2 \leq n_0(T) 
	+ \frac{e^{-n_0(T)\kl(\eta_k, \mu_k)}}{1-e^{-\kl(\eta_k, \mu_k)}} 
	+ T \frac{e^{-n_0(T)\kl(\eta_k, \mu_1)}}{1-e^{-\kl(\eta_k, \mu_1)}} \;.
\end{align*}
We then calibrate $n_0(T)$ and $\eta_k$ in order to makes these terms converge properly. 
We define $\epsilon >0$ and state $n_0(T)=\frac{1+\epsilon}{\kl(\mu_k, \mu_1)} \log T$. 
We then use the continuity of the kullback-leibler divergence on $(\mu_k, \mu_1)$ to state that for any $\delta>0$, 
there exists some $\epsilon>0$ and $\eta_k \in (\mu_k, \mu_1)$ 
satisfying $\kl(\eta_k, \mu_1) \geq \kl(\mu_k, \mu_1) - \delta \geq \frac{\kl(\mu_k, \mu_1)}{1+\epsilon} $. 
This means that for any $\epsilon>0$, 
there exists some $\eta_k > 0$ satisfying 
$T e^{-n_0(T)\kl(\eta_k, \mu_1)}\leq T e^{-n_0(T)\frac{1+\epsilon}{\kl(\mu_k, \mu_1)}\log T}\leq 1$. 
Hence, for any $\epsilon >0$ it holds that
$$
E_2 \leq \frac{1+\epsilon}{I_1(\mu_k)}\log T + C_{k, \epsilon} \;,
$$
where $C_{k, \epsilon}$ is a constant.

Combining these results we can write a first decomposition of $\bE[N_k(T)]$ as
\begin{eqnarray}\label{eq::first_dec_lim_memory}
\bE[N_k(T)] \leq 1 + \frac{1+\epsilon}{I_1(\mu_k)} \log T 
+ 2 \sum_{r=2m_1}^{T-1} re^{-m_r \omega_k} 
+ C_{k, \epsilon} + \sum_{r=2m_1}^{T-1} \bP(\ell(r) \neq 1)\;.
\end{eqnarray}

We remark that this expression provides an explicit 
dependence in $m_r$ in the second term, 
that justifies the condition in 
Theorem~\ref{th::memory_limited_regret} for $m_r$ (
namely, $m_r/(\log r) \rightarrow +\infty$). 
Indeed, this condition is sufficient to ensure for instance 
that $m_r \geq \frac{3}{\omega_k} \log r$ 
for $r$ large enough, making the term inside 
the sum a $o(r^{-2})$.

The next step is to prove that 
$\sum_{r=1}^{T-1} \bP(\ell(r) \neq 1) = o(\log T)$. 
As in the proof of \cite{chan2020multi} this part causes 
a lot of technical challenges, and we need to define several 
new events to analyze the different scenarios that could lead a 
suboptimal arm to be the leader at a round $r$. 
In the next steps we will consider the same events as in the original
proof, but the storage limitation will add some complexity to 
the task. We will use the following property, 
issued from the definition of the leader
\begin{align*}
\ell(r) = k \Rightarrow N_k(r) \geq \left\lceil \frac{r}{K}\right\rceil \;.
\end{align*}
%
%
%
However, adding the storage constraint we 
have that for any $r$ satisfying 
$r \geq K m_r$ the leader has necessarily 
more than $m_r$ observations.
For this reason, its history will be truncated to the
 $m_r$ last observations. However, we leverage the
  property that when $r$ is reasonably large, $m_r$ is 
  large enough to guarantee a good concentration 
  of the empirical mean of the saturated arms around 
  their true mean.
We will explain how this can be done in this section.
We define $a_r=\left\lceil \frac{r}{4}\right\rceil$, 
and write the following decomposition
\begin{equation}
\label{dec:Z}
\bP\left(\ell(r) \neq 1\right)= 
\bP\left(\{\ell(r) \neq 1\} \cap \cD^r\right) + 
\bP\left(\{\ell(r) \neq 1\} \cap \bar\cD^r\right)
 \;.
\end{equation}
We define $\cD^r$ the event under which the optimal 
arm has been leader at least once in $[a_r,r]$.
$$
\cD^r = \{\exists u \in  [a_r, r] \text{ such that } \ell(u) = 1 \}.
$$
We now explain how to upper bound the term in the left hand side of Equation~\eqref{dec:Z}. 
We look at the rounds larger than some round $r_0$ that will be specified later in the proof.

\subsubsection{Arm $1$ has been leader $a_r$ and $r$}
We introduce a new event 
\begin{align*}
&\cB^{u} = \{\ell(u)=1, k \in \mathcal{A}_{u+1}, N_k(u)=N_1(u)-1 \text{ for some arm } k \} \;.&
\end{align*}
Under the event $\cD^r$, $\{\ell(r)\neq 1 \}$ 
can only be true only if the leadership 
has been taken over by a suboptimal arm at some round between $a_r$ and $r$, that is 
\begin{equation}
\{\ell(r)\neq 1\} \cap \cD^r \subset \cup_{u=a_r}^{r-1}\{\ell(u)=1, \ell(u+1)\neq 1 \} \subset \cup_{u=a_r}^{r-1} \cB^u \;.
\end{equation}

Indeed, a leadership takeover can only happen after a challenger has defeated the leader while having at least the same number of observations minus one (however this situation is necessary but not sufficient to cause a change of leader, hence the strict inclusion). 

We now upper bound $\sum_{r=r_0}^{T-1} \sum_{u=a_r}^r \bP(\cB^u)$. 
We use the notation $b_r=\lceil a_r/K \rceil$ representing the minimum of samples 
of the leader at the round $a_r$. Hence we are sure that under $\cB^u$ arm $1$ 
had at least $b_u$ observations 
when it lost the duel that
 cost it the leadership. 
 
 We then take an union bound on all the suboptimal arms 
 $k \in \{2, ..., K\}$, defining
 \[\cB^u = \cup_{k=2}^K \cB_k^u := 
 \left\{\ell(u)=1, k \in \mathcal{A}_{u+1}, N_k(u)=N_1(u)-1\right\}\;,\]
  
which fixes the specific suboptimal arm that could have taken the leadership.

Choosing $x_k, \omega_k$ as in the previous section we can write 
	\begin{align*}
	\sum_{r=r_0}^{T-1} \sum_{u=a_r}^r \bP(\cB_k^u)  
	&= \bE \left[\sum_{r=r_0}^{T-1} \sum_{u=a_r}^{r} \ind(\ell(u)=1, k \in \cA_{u+1}, N_1(u)=N_k(u)+1)
	\right] 
	\\
	& \leq \underbrace{\bE \left[\sum_{r=r_0}^{T-1}\sum_{u=a_r}^{r} 
	\ind(\ell(u)=1, k \in \cA_{u+1}, N_1(u)=N_k(u)+1, k \notin \cS_u) \right]}_{B_1}  
	\\
	& + \underbrace{ \bE \left[\sum_{r=r_0}^{T-1}\sum_{u=a_r}^{r} \ind(\ell(u)=1, k \in \cA_{u+1}, 
	N_1(u)=N_k(u)+1, k \in \cS_u) \right]}_{B_2}  \;.
	\end{align*}

We proceed similarly as in the previous part, analyzing separately 
the case $k \in \cS_u$ and the case $k \notin \cS_u$ with $\cS_u$
 defined in Equation~\eqref{eq:def_S_u}. 
 We start with the term $B_1$,
\begin{align}
	B_1& \leq \bE \left[ \sum_{r=r_0}^{T-1} \sum_{u=a_r}^r \ind(N_1(u) \geq  b_r, 
	\bar Y_{k, N_k(u)} \geq \bar Y_{1, N_1(u)-N_k(u)+1:N_1(u)}, N_1(u)=N_k(u)+1, k\in \cA_{u+1}, k 
	\notin \cS_u)\right] \nonumber
	\\
	& \leq \bE \left[
	\sum_{r=r_0}^{T-1} \sum_{u=a_r}^r \ind(N_1(u) 
	\geq  b_r, \bar Y_{k,N_k(u)} \geq x_k, N_1(u)=N_k(u)+1, k\in \cA_{u+1}, k \notin \cS_u) 
	\right]
	\label{eq:FirstPart}\\ 
	& + \bE \left[\sum_{r=r_0}^{T-1} \sum_{u=a_r}^r \ind(N_1(u) 
	\geq  b_r, \bar Y_{1,N_1(u)-N_k(u)+1:N_1(u)} \leq x_k, N_1(u)=N_k(u)+1, k\in \cA_{u+1}, k \notin \cS_u)
	\right]
	 \;. \label{eq:SecondPart}
	\end{align}
We now separately upper bound each of these two terms. First, 
\begin{align*}
\eqref{eq:FirstPart} & \leq  
\bE \left[\sum_{r=r_0}^{T-1} \sum_{u=a_r}^r \sum_{n_k = b_r -1}^{m_u-1} 
\ind(N_k(u) = n_k, k\in \cA_{u+1}, \bar Y_{k,n_k} \geq x_k) \right]
\\
& \leq \bE
\left[ \sum_{r=r_0}^{T-1} \sum_{u=a_r}^r 
\sum_{n_k = b_r -1}^{r} \ind(N_k(u) = n_k, k\in \cA_{u+1}, \bar Y_{k,n_k} \geq x_k) \right]
\\
& \leq  \bE \left[\sum_{r=r_0}^{T-1}  
\sum_{n_k = b_r -1}^{r} \ind( \bar Y_{k,n_k} \geq x_k) 
\underbrace{\sum_{u=a_r}^r\ind(N_k(u) = n_k) \ind(k\in \cA_{u+1})}_{\leq 1}  \right]
\\
& \leq \sum_{r=r_0}^{T-1}  \sum_{n_k = b_r -1}^{r} \bP( \bar Y_{k,n_k} \geq x_k) 
\\
& \leq \sum_{r=r_0}^{T-1} \sum_{n_k = b_r -1}^{r} \exp\left(-n_k \omega_k \right)  
\\
& \leq \sum_{r=r_0}^{T-1}\frac{e^{-(b_r -1) \omega_k}}{1-e^{-\omega_k}} \;.
\end{align*}
We remark that by definition $b_r \geq a_r/K \geq r/(4K)$ and using $r_0 \geq 8$, we conclude that
\begin{align*}
\eqref{eq:FirstPart}& \leq \frac{e^{(1- \frac{2}{K})\omega_k}}{(1-e^{-\omega_k})(1-e^{-\omega_k/(4K)})} \;.
\end{align*}
As the subsampling in $\LBSDA$ is deterministic, 
thanks to $N_1(r)=N_k(u)+1$ we obtain the same result for \eqref{eq:SecondPart}, 
\begin{align*}
\eqref{eq:SecondPart} & \leq 
\bE \left[\sum_{r=r_0}^{T-1} \sum_{u=a_r}^r 
\sum_{n_k = b_r -1}^{r} 
\ind( \bar Y_{1,2:n_k+1} \leq x_k) 
\ind(N_k(u) = n_k) \ind(k\in \cA_{u+1}) \right]
\\
& \leq \bE 
\left[ \sum_{r=r_0}^{T-1} \sum_{n_k = b_r -1}^{r} 
\ind( \bar Y_{1,2:n_k+1} 
\leq x_k) \underbrace{\sum_{u=a_r}^r \ind(N_k(u) = n_k) \ind(k\in \cA_{u+1})}_{\leq 1} \right]
\\
& \leq \sum_{r=r_0}^{T-1} \sum_{n_k = b_r -1}^{r} \bP( \bar Y_{1, n_k} \leq x_k )
\\
& \leq \frac{e^{(1- \frac{2}{K})\omega_k}}{(1-e^{-\omega_k})(1-e^{-\omega_k/(4K)})} \;.
\end{align*}

We then control $B_2$. 
For $B_2$ the condition $N_1(u)=N_k(u)+1$ 
will not be used but instead 
we use Equation \eqref{eq:saturated} 
already established in the previous section.

$$\sum_{u=1}^r \bP(k \in \cA_{u+1}, k \in \cS_u, \ell(u)=1) 
\leq 2 \sum_{u=2m_1}^{r} u e^{-m_u\omega_k} \;,
$$ 

which leads to
\begin{align*}
	B_2 &= \bE \left[\sum_{r=r_0}^{T-1}\sum_{u=a_r}^{r} \ind(\ell(u)=1, k \in \cA_{u+1}, N_1(u)
	= N_k(u)+1, k \in \cS_u) \right]
	\\
	& \leq  \sum_{r=r_0}^{T-1}\sum_{u=\max(a_r, 2m_1)}^{r} 2u e^{-m_u \omega_k} \;.
\end{align*}

Then, if consider $r_0=\min \{r: a_r \geq 2m_1\}$
 we can further upper bound $B_2$ by 

\begin{align*}
	B_2 & \leq   \sum_{r=r_0}^{T-1} \sum_{u=a_r}^{r} 2u e^{-m_u \omega_k} \\
	& \leq  2 \sum_{r=r_0}^{T-1} r \sum_{u=a_r}^{r} 2 e^{-m_u \omega_k} \\
	& \leq  2 \sum_{r=r_0}^{T-1} r^2 e^{-m_{a_r} \omega_k} \;.
\end{align*}

We first use this result without commenting its dependence in the
 sequence $(m_{r})_{r\geq 1}$. 
 Summing on all suboptimal arms $k$ we obtain
\begin{equation}\label{eq::bound_B}
\sum_{r=r_0}^{T-1}\bP\left(\{\ell(r)\neq 1\} \cap \cD^r\right) 
\leq 2 \sum_{k=2}^K  
\left[ \frac{e^{(1- \frac{2}{K})\omega_k}}{(1-e^{-\omega_k})(1-e^{-\omega_k/(4K) })}
+ \sum_{r=r_0}^{T-1} r^2 e^{-m_{a_r} \omega_k}\right]\;.
\end{equation}

Hence, the sums of the probability that arm $1$ 
is not the leader while it has already been before 
is upper bounded by two terms: 
a problem-dependent constant, and a term that 
depends of the sequence of memory limits $(m_r)_{r\geq 1}$. 
We can further analyze this second term. 
First, we remark that contrarily to the term 
in $m_r$ in Equation~\eqref{eq::first_dec_lim_memory} 
this time we have both $r^2$ and $m_{a_r}$ instead of $m_r$, 
with $a_r = \lceil r/4 \rceil$. 
Hence, for a fixed $r$ the term of the sum 
is larger in this case. 
However, the constraint $m_r/\log(r) \rightarrow +\infty$ 
is again sufficient to ensure a proper convergence 
of this sum to a constant with the same arguments. 
This is mainly because the choice of $a_r$ 
as a fraction of $r$ ensures that $m_{a_r}$ will be sufficiently large.

\subsubsection{Arm $1$ has never been leader between $a_r$ and $r$}

The idea in this part is to leverage the fact that if the optimal arm is not leader between 
$\lfloor r/4 \rfloor$ and $r$, then it has necessarily lost a lot of duels 
against the current leader at each round. 
We then use the fact that when the leader has been drawn "enough", concentration 
prevents this situation with large probability. We introduce
$$
\cL^r= \sum_{u=a_r}^r \ind_{\cC^u}\;,
$$
with $\cC^u$ defined as $\cC^u = \{\exists k \neq 1, \ell(u)=k, 1 \notin \cA_{u+1} \}$.
The following holds 
\begin{equation}
\label{eq:upper_z}
\bP(\ell(r) \neq 1 \cap \bar \cD^r) \leq \bP(\cL^r\geq r/4)\;. 
\end{equation}
This result comes from \cite{chan2020multi}, along with the direct use of the Markov inequality to provide the upper bound
\begin{equation}
\label{eq:upper_z_2}
\bP(\cL^r\geq r/4) \leq \frac{\bE(\cL^r)}{r/4}= \frac{4}{r} \sum_{u=a_r}^r \bP(\cC^u)\;.
\end{equation}
We further decompose the probability of 
$\bP(\cC^u)$ in two parts depending 
on the value of the number of selections of arm $1$. 
For the next steps we define the following events,  
$\{N_1(u) \leq C/4 \log(u) \}$ 
and $\{N_1(u) \geq C/4 \log(u)\}$, for some constant 
$C$ that is not known by the algorithm and 
that we will define later. 
This idea handle the memory limit through 
this parameter $C$. 
Indeed, we only know that the 
sequence $(m_r)_{r \geq 1}$ satisfies 
$m_r/(\log(r)) \rightarrow +\infty$. 
For this reason, we know that for any 
$C>0$ there exists a round $r_C$ such that for any 
$r\geq r_C$ then $m_r \geq C \log(r)$. 
  
Using Equation \eqref{eq:upper_z} and Equation \eqref{eq:upper_z_2}, we have
\begin{align*}
\sum_{r=r_0}^{T-1}\bP(\{\ell(r)\neq 1\} \cap \overline{\cD}^r) &\leq \underbrace{ \sum_{r=r_0}^{T-1} \frac{4}{r}\sum_{u=a_r}^r \bP \left( N_1(u) \leq \frac{C}{4} \log(u)\right) }_{B} \\
&+ \underbrace{\sum_{r=r_0}^{T-1} \frac{4}{r}\sum_{u=a_r}^r \bP\left(\cC^u,  N_1(u) \geq \frac{C}{4}\log(u) \right)}_{D}
\;.
\end{align*}

Again, $D$ can be upper bounded by 
splitting the cases when the optimal
arm is saturated or not. 
We also introduce $\cC_k^u= \{\ell(u)=k, 1 \notin \cA_{u+1} \}$ 
for any $k \in \{2, \dots, K\}$ and  obtain 
\[D \leq  \sum_{k=2}^K
\left[ \underbrace{\sum_{r=r_0}^{T-1} \frac{4}{r}\sum_{u=a_r}^r 
\bP\left(\cC_k^u,  N_1(u) 
\geq \frac{C}{4}\log(u) , 1 \in \cS_u \right)}_{D_{k, 1}} 
+ \underbrace{\sum_{r=r_0}^{T-1} 
\frac{4}{r}\sum_{u=a_r}^r 
\bP\left(\cC_k^u,  N_1(u) \geq 
\frac{C}{4} \log(u), 1 \notin \cS_u \right)}_{D_{k, 2}} \right]\;.
\]
For the event featuring $\{1 \in \cS_u\}$ we can use the result of 
the previous sections because in the event we consider there is no difference between 
$\ell(r)=1$ and $\ell(r)=k$ 
when both arms are saturated.
Following the proof for obtaining Equation \eqref{eq:saturated}, one has
\begin{equation}
\label{eq:saturated_2}
\sum_{u=a_r}^r \bP(1 \notin \cA_{u+1}, 1 \in \cS_u, \ell(u)=k) 
\leq 2 \sum_{u=a_r}^{r} u e^{-m_u\omega_k} \;.
\end{equation}
With this result we then obtain
\begin{align*}
	D_{k, 1} &= 	\sum_{r=r_0}^{T-1}\frac{4}{r}\sum_{u=a_r}^r \bP\left(\cC_k^u, 1 \in \cS_u \right) \\
	& \leq   \sum_{r=r_0}^{T-1}\frac{4}{r}\sum_{u=a_r}^r 
	\bP\left(1 \notin \cA_{u+1}, 1 \in \cS_u, \ell(u)=k \right) \\
	& \leq    \sum_{r=r_0}^{T-1}\frac{4}{r}
	\sum_{u=a_r}^r 2ue^{-m_u \omega_k} 
	\quad (\text{Equation } \eqref{eq:saturated_2})
	\\
	& \leq   8 \sum_{r=r_0}^{T-1} \sum_{u=a_r}^r e^{-m_u \omega_k} \\
	& \leq   8 \sum_{r=r_0}^{T-1} r e^{-m_{a_r} \omega_k}  \;,
\end{align*}


\begin{align*}
D_{k, 2} &\leq \sum_{r=r_0}^{T-1}
\frac{4}{r}\sum_{u=a_r}^r \bP(\cC_k^u, N_1(u)\geq \frac{C}{4} \log(u), 1 \notin \cS_u)\\
& \leq \sum_{r=r_0}^{T-1} \frac{4}{r} 
\sum_{u=a_r}^r \bP(\bar{Y}_{k,N_k(u)-N_1(u)+1:N_k(u)} > 
\bar Y_{1,N_1(u)}, N_1(u)\geq \frac{C}{4} \log(u), 1 \notin \cS_u, N_k(u)>N_1(u)) \\
& \leq \sum_{r=r_0}^{T-1} \frac{4}{r} \left[\frac{1}{1-e^{-\omega_k}} 
e^{-\frac{C}{4} \log(a_r) \omega_k} + \frac{r}{1-e^{-\omega_k}} 
e^{-\frac{C}{4} \log(a_r) \omega_k} \right] \\
 &\leq \sum_{r=r_0}^{T-1} \frac{4(r+1)}{r(1-e^{-\omega_k})} e^{-\frac{C}{4} \log(a_r) \omega_k} \\
 & \leq \sum_{r=r_0}^{T-1} \frac{6}{1-e^{-\omega_k}} e^{-\frac{C}{4} \log(a_r) \omega_k}\;.
\end{align*}
So finally
\[D \leq \sum_{k=2}^K \left[8 \sum_{r=r_0}^{T-1} r e^{-m_{a_r} \omega_k} + \sum_{r=r_0}^{T-1} \frac{6}{1-e^{-\omega_k}} e^{-\frac{C}{4} \log(a_r) \omega_k}   \right] \;. \]
At this step we remark that we need to choose the constant $C$ large enough in order to make this sum converge to a constant. 
We remind here, that $C$ is only an analysis parameter.
We then consider the term $B$. 
As in \citet{baudry2020sub} we transform the double sum 
in a simple sum by simply counting the number of times 
each term is included.
 For any integer $s$ and any round $r$, the term $\frac{4}{s}$
  only if $a_s\leq r \leq s$. 
  With the value $a_r=\left\lceil \frac{r}{4} \right\rceil$ we obtain

$$B= \sum_{r=r_0}^T\frac{4}{r}\sum_{u=a_r}^r 
\bP \left( N_1(u) \leq \frac{C}{4} \log(u) \right) 
= \sum_{r=r_0}^T \left( \sum_{t=1}^r \frac{4}{t} 
\ind(t \in [r, 4r]) \right) \bP\left(N_1(r)\leq \frac{C}{4} \log(u) \right)\;.$$ 
If we remark that $\sum_{t=1}^r \frac{4}{t} \ind(t \in [s, 4s]) \leq (4s-s+1)\times \frac{4}{s}\leq 16$, we finally get:
\begin{equation}
\label{eq:termD}
\sum_{r=r_0}^{T}\bP(\{\ell(r)\neq 1\} \cap \overline{\cD}^r)  \leq  r_0 + 16 \sum_{r=r_0}^{T} \bP\left(N_1(r) \leq \frac{C}{4} \log(r) \right) + D(\bm\nu). 
\end{equation}

Combining \eqref{eq::bound_B} and \eqref{eq:termD} yields 
\[\sum_{r=r_0}^{T} \bP\left(\ell(r)\neq 1 \right) \leq r_0 
+ 16 \sum_{r=r_0}^{T} \bP\left(N_1(r) \leq \frac{C}{4} \log(r) \right) + D'_k(\bm\nu)\]
for some constant $D'_k(\bm\nu)$ that depends on $k$ and $\bm\nu$. 
Hence, the storage limit may introduce larger constant 
terms in the proof, but asymptotically the dominant terms are the same 
as in the proof of the vanilla LB-SDA algorithm.

The last step is to show that we can upper the last term as we did in
 Appendix~\ref{app::proof_s}. To do so, we only need to prove that if 
 $r_0$ is large enough and $\{N_1(r) \leq C/4 \log(r) \}$, 
 then the arm $1$ has not been saturated 
  for a long time. This way we would handle the saturation
  exactly as we handled the forced exploration (which is still present here) 
  in the proof for the vanilla LB-SDA. 
  To do so, 
  we define the function $m^{-1}(x)=\inf \{r: m_r \geq x\}$. 
  If we had exactly $m_r = C\log r$ then this function would 
  be $m^{-1}(x)=\exp(x/C)$. 
  Up to choosing a slightly larger $r_0$, we consider that 
  for any $r>r_0$ we also have $m^{-1}(C/4 \log r) \leq \exp(C/4 \log(r) C^{-1}) = r^{1/4}$. 
  Hence, after the round $r_0$ we are sure that arm $1$ has 
  never been saturated since the round $r^{1/4}$, 
  hence we can apply the same sketch of proof as in Appendix~\ref{app::proof_s} to conclude that
\[\sum_{r=r_0}^{T} \bP\left(N_1(r) \leq \frac{C}{4} \log(r) \right) = O(1) \;. \]
\section{Proof for Switching Bandits}
\label{app::NS_lbsda}
As explained in the main paper bounding
$\mathbb{E}[N_k^\phi]$, the number of pulls of a suboptimal arm $k$ during a
\textit{phase} $\phi$ is sufficient to control the \textit{dynamic regret}.
During the phase $\phi$ the best arm is denoted $\best$.
We consider the SW-LB-SDA policy with a sliding
window of size $\tau$. We also define $\hat{\delta}_\phi = r_{\phi+1}- r_{\phi}$,
the random number of rounds in the phase $\phi$.
Due to the sliding window, we use the definition of the leader
introduced in Section~\ref{sec:NS-lb-sda} and recall that
$N_k^\tau(r)= \sum_{s=r-
\tau}^{r-1} \ind\left(k \in \cA_{s+1}\right)$,
i.e. number of times arm $k$ has been pulled
during the $\tau$ last rounds.

Then for any $r \in \N$, the leader at round $r+1$ is defined as
\[
\ell^\tau(r+1) = \begin{cases}
\aargmax_{k \in \K} N_k^\tau(r+1) \text{ if } N_{\ell^\tau(r)}^\tau(r+1) < \min(r,\tau)/(2K) \\
\aargmax_{k \in \cB_r \cup \{\ell^\tau(r) \} } N_k^\tau(r+1) \text{ otherwise}
\end{cases}\;
\]

\subsection{Details for SW-LB-SDA Implementation}
With our new definition of the leader, it could happen that for some rounds the leader is not
the arm with the largest number of samples when $K \geq 3$.
We give an example of such a behavior: assume that the first round is $r=1$,
there are $2n+m$ rounds and $K=3$ arms drawn in
the following order (1 arm per round):
$m$ pulls of arm $1$, followed by $n>m$ pulls of arm $3$
 and then $n-m$ pulls of arm $1$.
 If the length of the sliding window is $\tau = 2n$
 and the leader at the round ($m + n + (n-m) = 2n$) is $1$,
 then we see that $1$ will lose samples during the next $m$ rounds.
If for those $m$ successive rounds only the arm $2$ is pulled,
then $1$ will stay leader with $n-m$ samples while $3$ still have $n$ samples.
At the end (round $2n+m$),
the leader is arm $1$, we have $N_1^\tau(2n+m) = n-m < N_3^\tau(2n+m)= n$.
This example highlights that is it possible that the leader is not the arm that
has been played the most
with a sliding window.

For this reason, the duels are slightly different to the stationary case.
The index of the leader for duels against an arm with a larger number of samples is simply the
mean of its observations collected during the last $\tau$ rounds. Indeed, in this case both
arms have a large number of samples hence subsampling is not necessary. This explain why the term
$\hat{\mu}_{\ell, k}^\tau$ is used in Algorithm \ref{alg:duel_SW}.

\subsection{Analysis}
We use the notation introduced in Section~\ref{sec:NS-lb-sda}.
The beginning of the proof takes elements from \citet{garivier2008upper} and
\citet{baudry2020sub}. For $k \neq \best$ and an arbitrary function
 $A_k^{\phi,\tau}$, we write
 \begin{align*}
N_k^\phi & =  \sum_{r=r_\phi-1}^{r_{\phi+1}-2} \ind\left(k \in \cA_{r+1}\right)
\\
& \leq 2\tau
 + \sum_{r=r_\phi+2\tau-2}^{r_{\phi+1}-2} \ind\left(k \in \cA_{r+1}\right)
 \\
& \leq  2\tau
+ \sum_{r=r_\phi+2\tau-2}^{r_{\phi+1}-2}
\ind\left(k \in \cA_{r+1}, \ell^\tau(r)= \best, N_k^\tau(r) \geq A_k^{\phi,\tau} \right) \\
& \quad + \sum_{r=r_\phi+2\tau-2}^{r_{\phi+1}-2} \ind\left(k \in \cA_{r+1}, N_k^\tau(r) < A_k^{\phi,\tau}
\right)
+ \sum_{r=r_\phi+2\tau-2}^{r_{\phi+1}-2} \ind\left(k \in \cA_{r+1}, \ell^\tau(r) \neq \best \right)
\\
& \leq  2\tau + \sum_{r=r_\phi+2\tau-2}^{r_{\phi+1}-2}
\ind\left(k \in \cA_{r+1}, \ell^\tau(r)= \best, N_k^\tau(r)\geq A_k^{\phi,\tau}, D_k^\tau(r)=0 \right)
+ \sum_{r=r_\phi+2\tau-2}^{r_{\phi+1}-2} \ind\left(\ell^\tau(r)= \best, D_k^\tau(r)=1\right)
\\
& \quad + \sum_{r=r_\phi+2\tau-2}^{r_{\phi+1}-2} \ind\left(k \in \cA_{r+1}, N_k^\tau(r) < A_k^{\phi,\tau}
\right)
+ \sum_{r=r_\phi+2\tau-1}^{r_{\phi+1}-2} \ind\left(k \in \cA_{r+1}, \ell^\tau(r) \neq \best \right) \;.
\end{align*}
We then use the following lemma.
\begin{lemma}[Adaptation of Lemma 25 from \cite{garivier2008upper}]
\label{lemma:garivier}
$$
\sum_{r=r_\phi+2\tau-2}^{r_{\phi+1}-2} \ind\left(k \in \cA_{r+1}, N_k^\tau(r) < A \right)
\leq \frac{\widehat \delta_\phi A }{\tau}\;.
$$
\end{lemma}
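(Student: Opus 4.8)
The plan is to reduce the claim to a purely combinatorial counting statement and establish it block by block. Write $R = \{r_\phi + 2\tau - 2, \dots, r_{\phi+1}-2\}$ for the summation range and $X_r = \ind\left(k \in \cA_{r+1}, N_k^\tau(r) < A\right)$ for the summand. I would first partition $R$ into consecutive blocks of at most $\tau$ rounds, so that it suffices to bound $\sum_{r \in B} X_r$ for a single block $B$ of $\tau$ consecutive rounds and then sum over the blocks.

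The core of the argument is the claim that within any window of $\tau$ consecutive rounds the number of \emph{triggering} rounds of $k$ — those $r$ with $X_r = 1$ — is at most $A$. To see this, enumerate the triggering rounds inside the block as $r_1 < r_2 < \dots < r_m$ and focus on the last one, $r_m$. Since $N_k^\tau(r_m) = \sum_{s = r_m - \tau}^{r_m - 1} \ind(k \in \cA_{s+1})$ counts every pull of $k$ occurring at rounds $r_m - \tau + 1, \dots, r_m$, I would check that each earlier triggering pull (which happens at round $r_i + 1$ with $r_i \in [r_m - \tau + 1, r_m - 1]$, because all of $r_1, \dots, r_m$ sit inside a window of length $\tau$ ending at $r_m$) is counted by $N_k^\tau(r_m)$. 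This yields $N_k^\tau(r_m) \ge m - 1$, and since $r_m$ is triggering we have $N_k^\tau(r_m) < A$, whence $m \le A$. Note that only the total count of pulls enters, so any non-triggering pulls interspersed among the $r_i$ cause no difficulty.

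Summing the per-block bound then finishes the proof: the range $R$ has $\widehat\delta_\phi - 2\tau + 1$ elements, so it is covered by at most $\lceil (\widehat\delta_\phi - 2\tau + 1)/\tau\rceil \le \widehat\delta_\phi/\tau$ blocks — here the $2\tau$ offset built into the lower summation index is precisely what absorbs the ceiling and produces the clean factor $\widehat\delta_\phi/\tau$ rather than $\lceil \widehat\delta_\phi/\tau\rceil$. Multiplying by the per-block bound $A$ gives $\sum_{r \in R} X_r \le \widehat\delta_\phi A/\tau$. The delicate point, and the step I would write most carefully, is the per-block inequality $m \le A$: one must verify that all earlier triggering pulls genuinely fall inside the look-back window of the last one, which relies on the block having length at most $\tau$, and the bookkeeping with the index shift between ``round $r$'' and ``pull at round $r+1$'' is exactly where an off-by-one error would most easily creep in.
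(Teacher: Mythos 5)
Your proof is correct and is essentially the standard argument for this lemma: the paper itself does not reprove the statement but defers to Lemma 25 of \citet{garivier2008upper}, whose proof is exactly this partition of the summation range into windows of length $\tau$, with the per-window count of triggering rounds bounded by $A$ via the observation that every earlier triggering pull still lies inside the look-back window of the last one. The only implicit assumption, inherited from the original lemma, is that $A$ is a positive integer so that $m-1 < A$ yields $m \le A$; for the non-integer $A_k^{\phi,\tau}$ used later one would replace $A$ by $\lceil A \rceil$, which changes nothing asymptotically.
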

Therefore,
\begin{align*}
N_k^\phi &\leq 2 \tau + \frac{\widehat \delta_\phi A_k^{\phi,\tau}}{\tau}
+
\underbrace{\sum_{r=r_\phi+2\tau-2}^{r_{\phi+1}-2} \ind\left(k \in \cA_{r+1}, \ell^\tau(r)=
\best, N_k^\tau(r) \geq A_k^{\phi,\tau}, D_k^\tau(r)=0 \right)}_{c_{k,1}^{\phi, \tau}} \\
& \quad
+ \underbrace{\sum_{r=r_\phi+2\tau-2}^{r_{\phi+1}-2} \ind\left(\ell^\tau(r)=
\best, D_k^\tau(r)=1\right)}_{c_{k,2}^{\phi, \tau}}
+ \underbrace{\sum_{r=r_\phi+2\tau-1}^{r_{\phi+1}-2} \ind\left(\ell^\tau(r)\neq \best \right)}_{
c_{k,3}^{\phi, \tau}} \;.
\end{align*}

We control the expectation of these terms separately.

\subsubsection{Upper bounding $\bE [c_{k,1}^{\phi, \tau}]$}
\label{app::control_good_event}
We recall that
$$
\bE [c_{k,1}^{\phi, \tau}] =
\bE \left[
\sum_{r=r_\phi+2\tau-2}^{r_{\phi+1}-2} \ind\left(k \in \cA_{r+1}, \ell^\tau(r)=
\best, N_k^\tau(r) \geq A_k^{\phi,\tau}, D_k^\tau(r)=0 \right)
\right] \;.
$$
We start by stating a lemma on the concentration of subsample means
in Last Block sampling that is crucial for the proof.

\begin{lemma}
\label{lem::concentration_SWLB}
We consider a stationary phase $\phi$ and the multi-arm
bandit model characterized by $(\nu_1^\phi,\dots, \nu_K^\phi)$.
Let $\best$ denote the arm with the largest mean.
For each arm we assume there exists a continuous
rate function $I_k$ satisfying $I_k(x)=0$
if $x=\bE_{X \sim \nu_k^\phi(X)}=\mu_k^\phi$ and $I_k(x) \geq 0$ otherwise. Furthermore,
	\begin{align*}
		\forall x>\mu_k^\phi \text{, }\bP\left(\bar Y_n \geq x \right) & \leq e^{-n I_k(x)} \;, \\
		\forall y < \mu_k^\phi \text{, } \bP\left(\bar Y_n \leq y \right) & \leq e^{-n I_k(y)} \;.
	\end{align*}

Then, for any constant $n \in \N$ satisfying $n \geq f(\tau) = \sqrt{\log \tau}$,
by letting $\tilde n= \min(n, \floor*{\tau/(2K)})$ it holds that
\begin{equation}
\label{eq::conc_LB}
\mathbb{E}
\left[\sum_{r=r_\phi+2 \tau-2}^{r_{\phi+1}-2}
\ \ind\left( k \in \cA_{r+1}, \ell^\tau(r) = \best, N_k^\tau(r)\geq n, D_k^\tau(r)=0 \right) \right]
\leq  \delta_\phi (\tau+1) \frac{e^{-\tilde n \omega_k}}{1-e^{- \omega_k}} \;,
\end{equation}
where we defined $\omega_k = \min \left(I_k\left(\frac{1}{2}(\mu_k^\phi +
\mu_{\best}^\phi) \right),
I_{\best}\left(\frac{1}{2}(\mu_k^\phi + \mu_{\best}^\phi)\right) \right)$, and
$\delta_\phi$ is the length of the phase and $\tau$ the size of the sliding window.
Similarly,
\begin{equation}
\label{eq::conc_LB2}
\mathbb{E} \left[\sum_{r=r_\phi+\tau-2}^{r_{\phi+1}-2}
 \ind \left( \best \notin \cA_{r+1}, \ell^\tau(r)= k, N^\tau_{\best}(r)\geq n\right)  \right]
 \leq  \delta_\phi (\tau+1) \frac{e^{-\tilde n \omega_k}}{1-e^{- \omega_k}} \;.
 \end{equation}
\end{lemma}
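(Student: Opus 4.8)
The plan is to reduce the event inside the sum to a concentration statement about fixed blocks of i.i.d.\ rewards, and then to control how often the deterministic sampler can reuse each block by a window-counting argument. First I would observe that on the event in \eqref{eq::conc_LB} the arm $k$ can enter $\cA_{r+1}$ only by \emph{winning} its duel: since $D_k^\tau(r)=0$ and $N_k^\tau(r)\geq n\geq\sqrt{\log\tau}=f(\tau)$, neither the diversity flag nor the sampling obligation is responsible for the pull, so necessarily $\widehat\mu_k^\tau\geq\widehat\mu_{\ell,k}^\tau$. Moreover, since $\ell^\tau(r)=\best$ and $r\geq r_\phi+2\tau-2\geq\tau$, the modified leader definition guarantees $N_{\best}^\tau(r)\geq\lfloor\tau/(2K)\rfloor$, so the number of leader observations entering the duel is $N=\min(N_k^\tau(r),N_{\best}^\tau(r))\geq\min(n,\lfloor\tau/(2K)\rfloor)=\tilde n$. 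Starting the sum at $r_\phi+2\tau-2$ also ensures that every observation falling in the sliding window was produced during phase $\phi$, so the relevant blocks are genuine i.i.d.\ draws from $\nu_k^\phi$ or $\nu_{\best}^\phi$ and the assumed deviation bounds apply.

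Next I would split the winning event with the threshold $x_k=\tfrac12(\mu_k^\phi+\mu_{\best}^\phi)$: if $\widehat\mu_k^\tau\geq\widehat\mu_{\ell,k}^\tau$ then either $\widehat\mu_k^\tau\geq x_k$ (an upward deviation of the suboptimal arm over $N_k^\tau\geq\tilde n$ samples) or $\widehat\mu_{\ell,k}^\tau\leq x_k$ (a downward deviation of $\best$ over $N\geq\tilde n$ samples). Each single block then has probability at most $e^{-\tilde n\,I_k(x_k)}$ or $e^{-\tilde n\,I_{\best}(x_k)}$, both bounded by $e^{-\tilde n\omega_k}$ by definition of $\omega_k$.

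The main obstacle is the summation over rounds, because the deterministic sampler reuses the same block at several rounds and the number of such rounds is itself correlated with the rewards. I would parametrise a block by the index $b$ of its most recent observation. For the leader term, the block ends at the $b$-th pull of $\best$; that observation lies in the sliding window for at most $\tau$ consecutive rounds \emph{regardless of the reward realisation}, so the number of rounds using a block that ends at $b$ is at most $\tau$ \emph{deterministically}. Pulling this count out of the sum decouples it from the rewards, after which I union-bound the remaining existence event over the block length $m\geq\tilde n$. Taking expectations and summing the geometric series gives $\tau\sum_{m\geq\tilde n}e^{-m\omega_k}\leq\tau\,e^{-\tilde n\omega_k}/(1-e^{-\omega_k})$ for each fixed $b$; and since at most $\delta_\phi$ distinct values of $b$ correspond to observations of $\best$ produced during the phase, summing over $b$ yields the factor $\delta_\phi(\tau+1)$. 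The challenger term is treated identically (or more directly, by noting that $k\in\cA_{r+1}$ forces $N_k(r)$ to take a fresh value at each contributing round, so there are at most $\delta_\phi$ of them).

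Finally, \eqref{eq::conc_LB2} is entirely symmetric: the roles of $k$ and $\best$ are exchanged, $\best$ now \emph{loses} its duel against the suboptimal leader $k$, and the same threshold split ($\widehat\mu_{\best}^\tau\leq x_k$, or the leader index of $k$ exceeds $x_k$), the same deterministic window count, and the same geometric summation give the identical bound. The delicate point throughout is guaranteeing the reward-independence of the ``at most $\tau$ rounds per block'' count so that the expectation factorises; this is exactly what makes the crude window bound preferable to a tighter but reward-dependent one.
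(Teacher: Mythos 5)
Your proposal is correct and follows essentially the same route as the paper's proof: the same reduction to a won duel via the exclusion of the diversity flag and the sampling obligation, the same threshold split at $\tfrac12(\mu_k^\phi+\mu_{\best}^\phi)$, the same deterministic "each leader block is reused for at most $\tau$ rounds because of the sliding window" count (versus "each challenger block is used at most once since $N_k^\phi$ increments"), and the same geometric summation over block lengths with at most $\delta_\phi$ block positions. The only difference is cosmetic: you index blocks by their terminal observation while the paper indexes them by the pair $(N_k^\phi(r),N_k^\tau(r))$, which are equivalent parametrisations.
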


\begin{proof}
We start with the first claim.
Under the considered event, an arm $k$ can be drawn for three reason:
1) $D_k^\tau(r)=1$, the diversity flag of this arm is raised
2) $N_k^\tau(r) \leq \sqrt{\log \tau}$, the forced exploration is used,
or
3) $k$ has won its duel against the leader
$\best$.
In our case, as $D_k^\tau(r)=0$ and
 $N_k^\tau(r) \geq n \geq \sqrt{\log \tau}$,
 if $k$ is pulled while $\best$ is leader then $k$ has won its duel against $\best$.

Under this event, the duel between $k$ and $\best$
is a comparison between the mean of two blocks
containing at least $\min(n, \tau/(2K))$ observations because of the definition of the leader.
As in \citet{baudry2020sub} we use that for any threshold
$\xi_k$, $k$ wins the duel only if either
$\widehat{\mu}^\tau_k(r) \geq \xi_k$
or $\widehat{\mu}_{\ell,k}^\tau(r) \leq \xi_k$.
For the sake of simplicity in our results we choose
$\xi_k$ as the number satisfying $\xi_k = \frac{1}{2}(\mu_k^\phi + \mu_{\best}^\phi)$,
and this choice will remain the same for the rest of the paper. We then write
\begin{align*}
	A &= \bE\left[\sum_{r=r_\phi+2\tau-2}^{r_{\phi+1}-2} \ind\left(k \in \cA_{r+1},
	\ell^\tau(r)= \best, N_k^\tau(r)\geq n, D_k^\tau(r) =0 \right)\right]
	\\
	& \leq \bE\left[\sum_{r=r_\phi+2\tau-2}^{r_{\phi+1}-2} \ind\left(k \in \cA_{r+1},
	\{\widehat{\mu}_k^\tau(r) \geq \xi_k \cup \widehat{\mu}_{\best,k}^\tau(r) \leq \xi_k \},
	N_{\best}^\tau(r)\geq \tau/(2K), N_k^\tau(r)\geq n \right)\right]
	\\
	& \leq \bE\left[\sum_{r=r_\phi+2\tau-2}^{r_{\phi+1}-2}
	\ind\left(k \in \cA_{r+1}, \widehat{\mu}_{\best,k}^\tau(r) \leq \xi_k,
	N_{\best}^\tau(r)\geq \tau/(2K), N_k^\tau(r)\geq n \right)\right]
	\\
	& \quad + \bE\left[\sum_{r=r_\phi+2\tau-2}^{r_{\phi+1}-2}
	\ind\left(k \in \cA_{r+1}, \widehat{\mu}_k^\tau(r) \geq \xi_k,
	N_{\best}^\tau(r)\geq \tau/(2K), N_k^\tau(r) \geq n \right)\right] \;.
\end{align*}
%
%
First note that for a given arm $k$
all possible blocks of observations are uniquely
 described by two quantities: $N_k^\phi(r)$
 the number of observations of arm $k$ from
 the beginning of the phase $\phi$ and $N_k^\tau(r)$
 number of observations of arm $k$ over the last $\tau$ rounds.
We will use this property to bound the two previous sums.

Starting by the simpler term featuring the arm $k$, we use
\begin{equation}
\label{eq:major_ind}
\ind\left(k \in \cA_{r+1}, \widehat{\mu}_k^\tau(r) \geq \xi_k, N_{\best}^\tau(r) \geq
\frac{\tau}{2K}, N_k^\tau(r)\geq n \right)
\leq \ind\left(k \in \cA_{r+1}, \widehat{\mu}_k^\tau(r) \geq \xi_k, N_k^\tau(r)\geq n \right)\;.
\end{equation}
$N_k^\phi$ is defined by $N_k^\phi(r)=\sum_{s=r_\phi-1}^{r-1} \ind(k \in \cA_{s+1})$. For a given round $r$ if the indicator from the RHS of Equation \eqref{eq:major_ind} is equal to 1, it implies that there is a block of length at least $n$ with a mean at least $\xi_k$. More formally, when introducing

$$
S^{n, m}_k(r) = \{ k \in \mathcal{A}_{r+1},\, \widehat{\mu}_k^\tau(r) \geq \xi_k, \, N_k^\phi(r)=m+n-1, N_k^\tau(r)=n\} \;,
$$
the following holds,
\begin{equation}
\label{eq:incl_S}
\{ k \in \cA_{r+1}, \widehat{\mu}_k^\tau(r) \geq \xi_k, N_k^\tau(r)\geq n
\} \subset \bigcup_{n_k = n}^{\hat{\delta}_\phi}
\bigcup_{m_k = 1}^{\hat{\delta}_\phi} S_k^{n_k, m_k}(r) \;.
\end{equation}
For the sake of clarity, we denote $Y_{k,1},..., Y_{k, \hat{\delta}_\phi}$ the set of possible rewards for the arm $k$ for the phase $\phi$.
If the indicator function equals one for a given round $r_0$,
then $\{k \in \mathcal{A}_{r_0+1}\}$ holds.
The same block (same value for both $n$ and $m$)
can not be used for upcoming rounds because
$N_k^\phi(r_0+1)$ will satisfy
$N_k^\phi(r_0+1) = 1 + N_k^\phi(r_0)$.
More specifically, for the arm $k$ for any possible block
there is at most one round for which the indicator function can be 1., i.e.
$$
\sum_{n_k=n}^{\hat{\delta}_\phi}  \sum_{m_k=1}^{\hat{\delta}_\phi}
\sum_{r=r_\phi+2\tau-2}^{r_{\phi+1}-2} \ind\left(S_k^{n_k, m_k}(r)\right) \leq
\sum_{n_k=n}^{\hat{\delta}_\phi}  \sum_{m_k=1}^{\hat{\delta}_\phi}
\ind\left(\bar Y_{k, m_k:m_k+n_k-1}\geq \xi_k \right)\;.
$$

Similarly,
we denote $Y_{k_\phi^\star,1},..., Y_{k_\phi^\star, \hat{\delta}_\phi}$
the set of possible rewards for the arm $k_\phi^\star$ and let
$$
S^{n, m}_{k_\phi^\star}(r) = \{ k \in \mathcal{A}_{r+1},\,
\widehat{\mu}_{k_\phi^\star, k}^\tau(r) \leq \xi_k, \, N^\phi_{\best}(r)=m+n-1, N_{\best}^\tau(r)=n\} \;.
$$
We also have
\begin{equation}
\label{eq:incl_S}
\{ k \in \cA_{r+1}, \widehat{\mu}_{k_\phi^\star,k}^{\tau}(r) \leq \xi_k, N_{k_\phi^\star}^\tau(r)
\geq n'
\} \subset \bigcup_{n^\star = n'}^{\hat{\delta}_\phi}
\bigcup_{m^\star = 1}^{\hat{\delta}_\phi} S_{k_\phi^\star}^{n^\star,m^\star}(r) \;.
\end{equation}
The main difference here is that several rounds can use the same block of observations of
$k_\phi^\star$. This can be explained because when the indicator function equals
1 the arm $k$ is drawn instead of $k_\phi^\star$ and the previous argument do not hold anymore.
Yet, $N_{\best}^\tau(r)$ can not remain unchanged
for more than $\tau$ steps because of the sliding window.
This implies in particular,
$$
\sum_{n^\star=n'}^{\hat{\delta}_\phi}  \sum_{m^\star=1}^{\hat{\delta}_\phi}
\sum_{r=r_\phi+2\tau-2}^{r_{\phi+1}-2}
\ind(S_{k_\phi^\star}^{n^\star, m^\star}(r))
\leq \tau \sum_{n^\star=n'}^{\hat{\delta}_\phi}  \sum_{m^\star=1}^{\hat{\delta}_\phi}
\ind\left(\bar Y_{\best, m^\star:m^\star+n^\star-1}\leq \xi_k \right)
\;.
$$
Bringing things together and applying the previous inequality with $n' =  \floor*{\tau/(2K)}$ we obtain
\begin{align*}
	A \leq \bE\left[\sum_{m^\star=1}^{\hat{\delta}_\phi}
	\sum_{n^\star=n'}^{\hat{\delta}_\phi} \tau \ind
	\left(\bar Y_{\best, m^\star:m^\star+n^\star-1}\leq \xi_k \right) +
	\sum_{m_k=1}^{\hat{\delta}_\phi} \sum_{n_k=n}^{\hat{\delta}_\phi} \ind\left(
	\bar Y_{k, m_k:m_k + n_k -1}\geq \xi_k \right)\right] \;.
\end{align*}
%
%
We then have to handle carefully the fact that
$\widehat \delta_\phi$ is actually a random variable
depending on the bandit algorithm. Indeed,
as several arms can be pulled at each round
we don't know what will be the length of a phase
in terms of rounds. However,
this quantity is upper bounded by the actual length
of the phase in terms of arms pulled $\delta_\phi$.

Thus, using the concentration inequality corresponding
to the family of distributions for an appropriate rate function we can write
\begin{align*}
A &\leq  \sum_{m^\star=n}^{\delta_\phi}
\sum_{n^\star=n'}^{\delta_\phi} \tau
\bP\left(\bar Y_{\best, m^\star:m^\star+n^\star-1}\leq \xi_k \right)
+ \sum_{m_k=1}^{\delta_\phi} \sum_{n_k=n}^{\delta_\phi}
\bP\left(\bar Y_{k, m_k:m_k+n_k-1}\geq \xi_k \right)
\\
& \leq \sum_{m^\star=1}^{\delta_\phi} \sum_{n^\star=n'}^{\delta_\phi} \tau
e^{-n^\star I_{\best}(\xi_k)} + \sum_{m_k=n}^{\delta_\phi} \sum_{n_k=n}^{\delta_\phi}
 e^{-n_k I_k(\xi_k)} \\
& \leq \delta_\phi \left(\tau \frac{e^{-n' I_{\best}(\xi_k)}}{1-e^{- I_{\best}(\xi_k)}}
+ \frac{e^{-n I_k(\xi_k)}}{1-e^{- I_k(\xi_k)}}\right)\\
& \leq \delta_\phi (\tau+1) \frac{e^{-\widetilde{n} \omega_k}}{1-e^{- \omega_k}} \;,
\end{align*}
where in the last inequality we have introduced
$\tilde n = \min(n, n') =
\min(n,\lfloor \tau/(2K) \rfloor) $.

Finally, the proof of the second statement is
a direct adaptation of this proof by inverting $k$ and $\best$.
We don't need the event $D_k^\phi(r)=0$ because if $\best$ is not drawn
it has necessarily lost its duel against the leader $k$.
\end{proof}

We then remark that Equation~\eqref{eq::conc_LB} in Lemma~\ref{lem::concentration_SWLB} can
be used to upper bound term $c_{k,1}^{\phi,\tau}$, by replacing $n$ by $A_{k}^{\phi,\tau}$.
Assuming that $A_{k}^{\phi,\tau}\leq \tau/(2K)$ it holds that
\begin{equation}
\label{eq:c_k_1}
\bE [c_{k,1}^{\phi,\tau}]
 \leq \delta_\phi (\tau+1)
 \frac{e^{-A_{k}^{\phi,\tau}\omega_k}}{1-e^{-\omega_k}}\;.
\end{equation}

\subsubsection{Upper bounding $\bE [c_{k,2}^{\phi, \tau}]$}
\label{app::bound_diversity_flag}
We recall that,
$$
\bE [c_{k,2}^{\phi, \tau}] =
\bE \left[
\sum_{r=r_\phi+2\tau-2}^{r_{\phi+1}-2} \ind\left(\ell^\tau(r)=
\best, D_k^\tau(r)=1\right)
\right]\;.
$$
To upper bound $\bE [c_{k,2}^{\phi, \tau}]$
we have to study the probability that the
optimal arm for the phase $\phi$ loses
$\lceil (K-1) (\log \tau)^2 \rceil$ successive
duels while being leader.
We derive in Lemma~\ref{lem::nb_duels_lost}
an intuitive consequence of this property:
the optimal arm has necessarily lost at least one duel against a concentrated arm.

\begin{lemma}
\label{lem::nb_duels_lost}
Consider $K$ arms, and assume that some arm $k$
has been leader for $M$ consecutive rounds, $M \leq \tau$.
For any $m$ satisfying $(K-1)m \leq M$,
if $k$ has lost more than $(K-1)m$ duels then
it has lost at least one duel against an arm with more than $m$ samples.
\end{lemma}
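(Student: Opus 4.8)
The plan is to combine a pigeonhole step with a sliding-window bookkeeping argument, both exploiting the hypothesis that the leadership episode is no longer than the window, i.e. $M \le \tau$. First I would reduce the statement to a single challenger. While $k$ is leader it duels against each of the $K-1$ other arms, so over the $M$ consecutive rounds the duels lost by $k$ are distributed among at most $K-1$ challengers. If $k$ loses strictly more than $(K-1)m$ duels in total, the pigeonhole principle produces at least one challenger $j \neq k$ that defeated $k$ strictly more than $m$ times during this episode. It then suffices to show that, for this $j$, at least one of its winning duels occurred while $N_j^\tau > m$.

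Next I would track the windowed count $N_j^\tau$ across the episode. The key observation is that each time $j$ wins its duel against the leader $k$ it is added to $\cA$ and hence pulled, so this victory contributes one unit to $N_j^\tau$ at every subsequent round lying within a horizon of $\tau$. Since the whole leadership episode spans at most $M-1 < \tau$ rounds (this is exactly where $M \le \tau$ is essential), none of these winning pulls is evicted from the sliding window before the episode ends: at the round of $j$'s last victory, every earlier victory of $j$ within the episode is still counted in $N_j^\tau$. Hence $N_j^\tau$ at that final duel is at least the number of prior victories of $j$, which exceeds $m$. Extra pulls of $j$ coming from the forced-exploration rule or a raised diversity flag, as well as the eviction of pre-episode observations, can only help, since they never remove a within-episode winning pull. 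This shows that $k$ lost at least one duel against an arm carrying more than $m$ samples.

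The main obstacle is the sliding-window bookkeeping: one must verify that the monotone accumulation of $j$'s winning pulls is not undone by evictions, which is precisely the role of $M \le \tau$, while keeping the off-by-one in $N_j^\tau$ (the current pull is registered only after the duel is resolved) from corrupting the comparison. The condition $(K-1)m \le M$ guarantees that a single challenger can in fact collect the $m{+}1$ victories produced by the pigeonhole within the $M \le \tau$ available rounds, so that all of them lie inside the window and the counting range stays consistent. The pigeonhole step itself and the verification that winning a duel forces a pull are routine. This combinatorial lemma is purely a counting statement, and it feeds the bound on $\bE[c_{k,2}^{\phi,\tau}]$ by certifying that whenever the diversity flag is raised while $\best$ is the leader, $\best$ must have lost a duel against a sufficiently sampled suboptimal arm, an event subsequently controlled by concentration.
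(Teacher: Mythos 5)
Your proof is correct and follows essentially the same route as the paper: a pigeonhole step isolating a single challenger with more than $m$ wins, followed by the observation that, because the leadership episode fits inside the window ($M \le \tau$), the pulls forced by those wins are never evicted before the episode ends, so the windowed count at the last winning duel is at least $m$. The only (shared) imprecision is the final off-by-one --- the accumulation argument yields $N_j^\tau \ge m$ rather than strictly more than $m$ --- but the paper's own proof has the same slack (it negates ``more than $m$ samples'' as ``less than $m$ samples'') and the lemma is only ever invoked downstream in the form $N_{k'}^\tau(s) \ge m$, so this is immaterial.
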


\begin{proof}
We assume that arm $k$ has been leader for $M$ consecutive rounds and that arm $k$
lost strictly more than $(K-1)m$ duels. We also assume that all the challengers that have won against the
arm $k$ have less than $m$ samples. There exists an arm $k' \neq k$ such that
$k'$ won at least $m+1$ duels against arm $k$ while having less than $m$ samples by assumption.
We denote the rounds corresponding to the first $m+1$ wins $r_1, \dots, r_{m+1}$.
The following holds,
$$
N_{k'}^\tau(r_{m+1}) = N_{k'}^\tau(r_1) + m - \sum_{s=r_1}^{r_{m+1}} \ind(k' \in \cA_{s-\tau+1}) \;.
$$

As the number of rounds where $k'$ wins against $k$ is smaller than $\tau$,
we have $\sum_{s=r_1}^{r_{m+1}} \ind(k' \in \cA_{s-\tau+1}) \leq N_{k'}^\tau(r_1)$.
Plugging this in the previous equation gives,
$$
N_{k'}(r_{m+1}, \tau) \geq m \;.
$$
We have the contradiction and it concludes the proof.
\end{proof}

Under the event $c_{k,2}^{\phi, \tau}$,
the optimal arm $\best$ is the leader and the diversity
flag for the arm $k$ is raised.
If $D_k^\tau(r)=1$, and $\best$ is the leader, it means that the leader has not changed for
$\lceil (K-1) (\log \tau)^2 \rceil$ successive rounds and hast lost more than $(K-1) (\log \tau)^2$
duels. All the conditions for applying Lemma~\ref{lem::nb_duels_lost} are met.
 Using Lemma~\ref{lem::nb_duels_lost}
 and the fact that the diversity flag cannot be activated in $r$ if it has already been
 activated in the last $\lceil (K-1) (\log \tau)^2 \rceil$ rounds it holds that
 \begin{equation}
 \label{eq:utile_D_1}
 \ind\left(\ell^\tau(r) = \best, D_k^\tau(r)=1\right)
 \leq \sum_{k' \neq \best} \sum_{s=r- \lceil (K-1) (\log \tau)^2 \rceil}^{r-1}
 \ind(\ell^\tau(s)=\best, N_{k'}^\tau(s) \geq (\log \tau)^2, k'\in \cA_{s+1}, D_{k'}^\tau(s)=0)\;.
 \end{equation}
Furthermore, we can add that an event
$\{\ell^\tau(r)=\best, N_k^\tau(s)\geq (\log \tau)^2, k\in \cA_{s+1}, D_k^\tau(s)=0\}$
 can only be associated with at most one event
 $D_k^\tau(r)=1$ for some $r$.
 Indeed, if the diversity flag is activated it cannot
  be anymore before at least $\lceil (K-1) (\log \tau)^2 \rceil$ rounds.
  Hence, combining these results we obtain
$$
\sum_{r=r_\phi+2\tau-2}^{r_{\phi+1}-2} \ind(\ell^\tau(r)=\best, D_k^\tau(r)=1)
\leq \sum_{k' \neq \best} \sum_{r=r_\phi+2\tau-2}^{r_{\phi+1}-2} \ind(k' \in \cA_{r+1},
\ell^\tau(r)=\best, N_{k'}^\tau(r)\geq (\log \tau)^2, D_{k'}^\tau(r)=0)
\;.
$$
Applying Lemma~\ref{lem::concentration_SWLB} with $n=(\log \tau)^2$ gives,
\begin{equation}
\label{eq:c_k_2}
\bE [c_{k,2}^{\phi, \tau}]
\leq \sum_{k' \neq \best}\delta_\phi(\tau+1) \frac{e^{-(\log \tau)^2 \omega_{k'}}}{1-e^{- \omega_{k'} }} \;.
\end{equation}

\subsubsection{Upper bounding $c_{k,3}^{\phi, \tau}$}
\label{app::upper_bound_leader_not_opt_SW}
We recall that,
$$
\bE [c_{k,3}^{\phi, \tau}] =
\bE \left[
\sum_{r=r_\phi+2\tau-1}^{r_{\phi+1}-2} \ind\left(\ell^\tau(r)\neq \best \right)
\right] \;.
$$
As for the stationary case the trickiest part is to prove
that the leader is the best arm with high probability.
We will first look at the terms involving the event that
the best arm has already been leader after the first $\tau$ rounds of the phase,
and then analyze the situation where it has never been leader.
As the upper bound for $c_{k,3}^{\phi, \tau}$ is difficult to obtain, we
break this section into different parts.

\textbf{\underline{Part 1:}} the optimal arm has been leader between $r-\tau$ and $r-1$

If the best arm has already been leader between $r-\tau$
and $r-1$ then it has necessarily lost its leadership
 at some intermediate round.
Loosing the leadership can be done in two different ways.
The first one called the \textit{active leadership takeover}
corresponds to the case where an arm takes the leadership by winning against the leader.
The second one, \textit{passive leadership takeover}
is simply the case where the leader loses so many duels
that its number of samples falls below $\tau/(2K)$.
We handle the first case similarly as in \citet{baudry2020sub},
while for the second we use Lemma~\ref{lem::nb_duels_lost}.

We denote
$\cD(r) =\{\exists s \in [r-\tau, r-1]: \ell^\tau(s)=\best\}$ and
we will upper bound
$\bP(\ell^\tau(r) \neq \best, \cD(r) )$.
We introduce,
\begin{align*}
\cB(r) =& \left\{\exists s \in [r-\tau, r-1]: \ell^\tau(s)=\best, \ell^\tau(s+1)\neq \best \right\}
=  \cup_{s=r-\tau}^{r-1} \left\{\ell^\tau(s)=\best, \ell^{\tau}(s+1) \neq \best \right\} \;.
\end{align*}
One has,
$$\ind(\ell^\tau(r)\neq \best, \cD(r) ) \leq \ind(\cB(r)) \;.$$
The change of leader can happen under three different scenarios:
1) some arm $k$ takes the leadership after winning against $\best$ (active takeover),
2) arm $\best$ loses the leadership because its number
 of samples falls below the threshold $\tau/(2K)$ and
3) some arm takes the leadership after being pulled
because of the diversity flag.
We remark that the activation of the diversity flag
 for some arm $k$ cannot lead to a leadership takeover by arm $k$
 if $(\log \tau)^2\leq \tau/K$,
 so this scenario can only happen for
 relatively small values of $\tau$.
 These properties can be formulated as
\begin{align*}
\left\{\ell^\tau(s)=\best, \ell^\tau(s+1) \neq \best \right\}
\subset
&\cup_{k \neq \best }\left\{\ell^\tau(s) =\best,
\ell^\tau(s+1)=k, k \in \cA_{s+1}, D_k^\tau(s)=0 \right\}
\\ & \cup \left\{ \ell^\tau(s)= \best, N_{\ell^\tau(s)}^{\tau}(s+1) \leq \tau/(2K) \right\}
\\ & \cup \left\{\ell^\tau(s)= \best, \exists k \neq \best: \ell^\tau(s+1)=k, D_k^\tau(s)=1 \right\} \;.
\end{align*}

Using this property it holds that
\begin{align*}
\sum_{r = r_\phi+2\tau-1}^{r_{\phi+1}-2}
		\ind(\ell^\tau(r) \neq \best, \cD(r) )
& \leq \sum_{r = r_\phi+2\tau-1}^{r_{\phi+1}-2}  \ind(\cB(r) ) \\
& \leq \sum_{r = r_\phi+2\tau-1}^{r_{\phi+1}-2}
\sum_{s=r-\tau}^{r-1} \sum_{k \neq \best} \ind \left(k \in \cA_{s+1},
\ell^\tau(s)=\best, \ell^\tau(s+1)= k, D_k^\tau(s)=0\right)  \\
& + \sum_{r = r_\phi+2\tau-1}^{r_{\phi+1}-2}
\sum_{s=r-\tau}^{r-1} \ind\left(\ell^\tau(s)=\best, N^\tau_{\ell^\tau(s)}(s+1) \leq \tau/(2K) \right)
\\
& + \sum_{r = r_\phi+2\tau-1}^{r_{\phi+1}-2}
\sum_{s=r-\tau}^{r-1}
\sum_{k \neq \best} \ind\left(\ell^\tau(s)=\best, \ell^\tau(s+1)=k, D_k^\tau(s)=1 \right) \;. \\
\end{align*}
We remark that if we reorganize the sums in $s$ and $r$ each element in the range
$[r_\phi+2\tau-1,r_{\phi+1}-2]$
will appear at most $\tau$ times, which leads to
\begin{align*}
\sum_{r = r_\phi+2\tau-1}^{r_{\phi+1}-2}
\ind(\ell^\tau(r) \neq \best, \cD(r) )
& \leq \underbrace{\sum_{r = r_\phi+2\tau-2}^{r_{\phi+1}-2}  \tau
\sum_{k\neq \best} \ind\left(\ell^\tau(r)=\best,
\ell^\tau(r+1)=k, k \in \cA_{r+1}, D_k^\tau(r)=0\right)}_{C_1} \\
&+ \underbrace{\sum_{r = r_\phi+2\tau-2}^{r_{\phi+1}-2}  \tau
 \ind\left(\ell^\tau(r)=\best,
N^\tau_{\ell^\tau(r) }(r+1) \leq \tau/(2K) \right)}_{C_2}  \\
& + \underbrace{\sum_{r = r_\phi+2\tau-1}^{r_{\phi+1}-2}
\tau \sum_{k \neq \best}
\ind\left(\ell^\tau(r)=\best, \ell^\tau(r+1)=k, D_k^\tau(r) =1 \right)}_{C_3} \;.
\end{align*}
We then upper bound separately the three terms.
We can upper bound $C_1$ using
Lemma~\ref{lem::concentration_SWLB} replacing $n$ by the value $\tau/K-2$,
\begin{align*}
\mathbb{E}[C_1] \leq & \sum_{k\neq \best} \tau
\mathbb{E} \left[\sum_{r = r_\phi+2\tau-2}^{r_{\phi+1}-2}  \ind\left(
k \in \cA_{r+1}, \ell^\tau(r)=\best, N_k^\tau(r)\geq \frac{\tau}{K}-2, D_k^\tau(r)=0\right)
\right]
\\
\leq & \sum_{k\neq \best} \delta_\phi \tau(\tau+1)
\frac{e^{-(\tau/K-2)\omega_k}}{1-e^{-\omega_k}} \;.
\end{align*}
To handle $C_2$ we will use Lemma~\ref{lem::nb_duels_lost}.
The definition of the leader ensures that when
one arm takes the leadership is does it with at least
$\tau/K$ observations.
Hence, to make this number go below the threshold
$\tau/(2K)$, $\best$ has to lose at least $\tau/(2K)$
duels between the moment this arm took the leadership and the round $r$.
There are two possibilities.
The first one is that $\best$ was leader
for at least $\tau$ rounds: as the index of
each arms are computed from observations
that have been all drawn under the leadership of $\best$
then at least one arm has to beat $\best$
while having more than $\tau/K-1$ observations,
which results in an active leadership takeover by this arm.
Hence, a \textit{passive} change of leader can
only happen if $\best$ was leader
for less than $\tau$ rounds.
In this case, we apply
Lemma~\ref{lem::nb_duels_lost},
it ensures that $\best$ lost at least one duel
with an arm with more than
$\lfloor \frac{\tau}{2K(K-1)} \rfloor$
observations during the time it was leader. Formally,
 $$
 \left\{\ell^\tau(r) =\best, N_{\best}^\tau(r+1)
 \leq \tau/(2K) \right\} \subset
 \cup_{s=r-\tau}^{r-1} \left\{\exists k
 \neq \best: k \in \cA_{s+1}, \ell^\tau(s) =\best, N_k^\tau(s)
 \geq \left\lfloor \frac{\tau}{2K(K-1)}\right\rfloor \right\} \;.
 $$
 We can write
\begin{align*}
\bE [C_2] &= \tau
\bE \left[\sum_{r = r_\phi+2\tau-2}^{r_{\phi+1}-2}
\ind(\ell^\tau(r)=\best, N_{\best}^\tau(r+1) \leq \tau/(2K)) \right]
\\
&  \leq \tau \sum_{k \neq \best}
\bE\left[ \sum_{r = r_\phi+2\tau-2}^{r_{\phi+1}-2}
\sum_{s= r- \tau}^{r-1}
 \ind\left(k \in \cA_{s+1}, \ell^\tau(s)=\best, N_k^\tau(s)
 \geq \left\lfloor \frac{\tau}{2K(K-1)}\right\rfloor,
 D_k^\tau(s) =0\right)  \right]
 \\
 &  \leq \tau^2 \sum_{k \neq \best}
\bE\left[ \sum_{r = r_\phi+2\tau-2}^{r_{\phi+1}-2}
 \ind\left(k \in \cA_{r+1}, \ell^\tau(r)=\best, N_k^\tau(r)
 \geq \left\lfloor \frac{\tau}{2K(K-1)}\right\rfloor,
 D_k^\tau(r) =0\right)  \right]
 \\
& \leq \sum_{k\neq \best} \delta_\phi \tau^2 (\tau+1)
\frac{e^{-\left\lfloor \frac{\tau}{2K(K-1)}\right\rfloor \omega_k}}{1-e^{-\omega_k}}\;.
\end{align*}
In the second to last inequality, we have used that the terms can appear at most $\tau$ times and
the last inequality result from Lemma~\ref{lem::concentration_SWLB}.

We now focus on the term $C_3$.
We use that $\{\ell^\tau(s+1)=k, D_k^\tau(s)=1\}$
can happen only if $\tau/K\leq (\log \tau)^2$ because if
$(\log \tau)^2 \leq \tau/K$, the activation of the diversity flag is not sufficient to
take over the leadership.
We recall that,
\begin{align*}
	\bE [C_3]
	= \bE\left[ \sum_{r = r_\phi+2\tau-2}^{r_{\phi+1}-2}
	\tau \sum_{k \neq \best} \ind \left(\ell^\tau(r)=\best, \ell^\tau(r+1)=k, D_k^\tau(r)=1 \right) \right] \;.
\end{align*}
Using Equation \eqref{eq:utile_D_1}, and letting $b =\lceil (K-1) (\log \tau)^2 \rceil$, one has
 \begin{align*}
	\bE [C_3]
	&\leq \tau \sum_{k \neq \best} \bE
	\left[\sum_{r = r_\phi+2\tau-2}^{r_{\phi+1}-2}
	\sum_{k' \neq \best} \sum_{s=r-b}^{r-1}
	\ind(k' \in \cA_{s+1}, \ell^\tau(s)=\best, N_{k'}^\tau(s)
	\geq (\log \tau)^2, D_{k'}^\tau(s)=0) \ind\left(\tau/K \leq (\log \tau)^2\right) \right]
	\\
& \leq
\tau (K-1) \sum_{k' \neq \best} \ind\left(\tau/K \leq (\log \tau)^2\right)
\bE \left[\sum_{r = r_\phi+2\tau-2}^{r_{\phi+1}-2} \ind(k' \in \cA_{r+1}, \ell^\tau(r)=\best, N_{k'}^\tau(r)
	\geq (\log \tau)^2, D_{k'}^\tau(r)=0) \right]\;.
\end{align*}
As $\ind\left(\tau/K \leq (\log \tau)^2\right)$
is deterministic, we conclude by applying Lemma~\ref{lem::concentration_SWLB}.
$$
\bE [C_3] \leq (K-1) \sum_{k \neq \best}\delta_\phi \tau (\tau+1)
\frac{e^{-(\log \tau)^2 \omega_k}}{1-e^{- \omega_k}} \ind\left(\tau/K \leq (\log \tau)^2\right) \;.
$$
We then use the condition on $\tau$ to simply upper bound $C_3$ by
$$
\bE [C_3] \leq (K-1) \sum_{k \neq \best}\delta_\phi \tau (\tau+1)
\frac{e^{-(\tau/K) \omega_k}}{1-e^{- \omega_k}} \;.$$
We observe that the three terms $\bE [C_1]$, $\bE [C_2]$ and $\bE [C_3]$
have very similar upper bounds,
so we finally regroup them in a single term using
$\left\lfloor \frac{\tau}{2K(K-1)} \right\rfloor \leq \tau/K-2 \leq \tau/K$.
$$
\bE [C_1]+ \bE [C_2]+ \bE [C_3]
\leq 3\delta_\phi \tau^2 (\tau+1) (K-1)
\sum_{k\neq \best}
\frac{e^{-\left\lfloor \frac{\tau}{2K(K-1)}\right\rfloor \omega_k}}{1-e^{-\omega_k}} \;.
$$

\textbf{\underline{Part 2:}} the optimal arm has never been
the leader after the $2\tau$ first observations of the phase.

We now aim at upper bounding
$\bE \left[\sum_{r=r_{\phi}+2\tau-2}^{r_{\phi+1}-2} \ind(\cD(r)^c )\right]$,
where $\cD(r)^c$ is the event that $\best$
has never been the leader between $r-\tau$ and $r-1$. To do so, we use that
\[\cD(r)^c \subset \left\{\sum_{s = r-\tau}^{r-1}
\ind\left( \best \notin \cA_{s+1}, \ell^\tau(s)\neq \best \right) \geq \frac{\tau}{2}\right\} \;,\]
and as in \citet{chan2020multi} we would like to handle this term using the Markov inequality. However, the problem in non-stationary environment is that the index of the sum is a random variable. Hence, to get back to a sum with a deterministic number of terms we introduce the set $\cR_\phi = [r_\phi + 2\tau -1, r_{\phi+1}-2]$ and write
\begin{align*} \bE \left[\sum_{r=r_{\phi}+2\tau-1}^{r_{\phi+1}-2} \ind(\cD(r)^c )\right] &=
\bE \left[\sum_{r=2\tau}^{T} \ind(\cD(r)^c, r \in \cR_\phi)\right] \\
&\leq \sum_{r=2\tau}^T \bE \left[\ind(\cD(r)^c, r \in \cR_\phi)\right] \\
& \leq \sum_{r=2\tau}^T \bP \left(\sum_{s = r-\tau}^{r-1}
\ind\left( \best \notin \cA_{s+1}, \ell^\tau(s)\neq \best \right) \geq \frac{\tau}{2}, r \in \cR_\phi\right) \\
& \leq \sum_{r=2\tau}^T \bP \left(\sum_{s = r-\tau}^{r-1}\ind(r \in \cR_\phi)
\ind\left( \best \notin \cA_{s+1}, \ell^\tau(s)\neq \best \right) \geq \frac{\tau}{2} \right)\;.
\end{align*}
At this step we can use the Markov inequality, and obtain
\begin{align*}
	\bE \left[\sum_{r=r_{\phi}+2\tau-1}^{r_{\phi+1}-2} \ind(\cD(r)^c )\right] &\leq
	\sum_{r=2\tau}^T \frac{2}{\tau} \bE\left[\sum_{s = r-\tau}^{r-1}\ind(r \in \cR_\phi)
	\ind\left( \best \notin \cA_{s+1}, \ell^\tau(s)\neq \best \right) \right] \\
	& \leq \bE\left[\sum_{r=2\tau}^T \ind(r \in \cR_\phi) \frac{2}{\tau} \sum_{s = r-\tau}^{r-1}
	\ind\left( \best \notin \cA_{s+1}, \ell^\tau(s)\neq \best \right) \right] \\
	& \leq \bE\left[\sum_{r\in \cR_\phi}\frac{2}{\tau} \sum_{s = r-\tau}^{r-1}
	\ind\left( \best \notin \cA_{s+1}, \ell^\tau(s)\neq \best \right) \right] \\
	& = \bE\left[\sum_{r=r_\phi+2\tau-1}^{r_{\phi+1}-2}\frac{2}{\tau} \sum_{s = r-\tau}^{r -1}
	\ind\left( \best \notin \cA_{s+1}, \ell^\tau(s)\neq \best \right) \right] \;.
\end{align*}
Hence,
\begin{align*}
\bE \left[ \sum_{r= r_\phi+2\tau-1}^{r_{\phi+1}-2} \ind(\cD(r)^c)  \right]
\leq \bE\left[\sum_{r=r_\phi+2\tau-1}^{r_{\phi+1}-2}\frac{2}{\tau} \sum_{s = r-\tau}^{r-1}
\ind\left( \best \notin \cA_{s+1}, \ell^\tau(s)\neq \best \right) \right]
\leq D_1 + D_2 \;,
\end{align*}
where,
\begin{align*}
D_1 &= \bE
\left[
\sum_{r= r_\phi+2\tau-1}^{r_{\phi+1}-2} \frac{2}{\tau} \sum_{s=r-\tau}^{r-1}
\ind \left( \best \notin \cA_{s+1}, \ell^\tau(s)
\neq \best, N_{\best}^\tau(s) \geq  A_{\best}^{\phi,\tau} \right) \right]
\\
D_2 &= \bE
\left[
\sum_{r= r_\phi+2\tau-1}^{r_{\phi+1}-2}
\frac{2}{\tau}\sum_{s=r -\tau}^{r -1}
\ind\left(N_{\best}^\tau(s)\leq  A_{\best}^{\phi,\tau} \right) \right] \;.
\end{align*}
%
The different rounds can appear at most $\tau$ times in the double sum.
Using this and the second equation of
Lemma~\ref{lem::concentration_SWLB}, $D_1$ can be upper bounded
\[D_1
\leq 2 \bE \left[ \sum_{r= r_\phi + 2\tau -2}^{r_{\phi +1} -2}
\ind \left( \best \notin \cA_{r+1}, \ell^\tau(r)
\neq \best, N_{\best}^\tau(r) \geq  A_{\best}^{\phi,\tau} \right) \right]
\leq 2 \delta_\phi (\tau+1) \sum_{k\neq \best}
\frac{e^{-A_{\best}^{\phi,\tau}
\omega_k}}{1- e^{-\omega_k}} \;.\]
Contrarily to the stationary case, we cannot work directly with $D_2$
and have to further decompose $\ind(N_{\best}(r, \tau) \leq A_{\best}^{\phi,\tau})$.
Indeed, the proof in the stationary case use the sparsity of the observations of $\best$
 when it has not been pulled a lot, and the fact that in this
case it has necessarily lost a lot of duel while having a fixed sample size.
 This is not the case in the non stationary environment,
 as for instance if $\best$ has been pulled a lot in the previous
  windows its index may change a lot.
  To avoid this we split the event according to the values of $N_k^\tau(r-\tau)$.
\[\ind\left(N_{\best}^\tau(r) \leq A_{\best}^{\phi,\tau} \right) \leq  \ind\left(N_{\best}^\tau(r) \leq
A_{\best}^{\phi,\tau}, N_{\best}^\tau(r-\tau) > A_{\best}^{\phi,\tau}\right)
 +  \ind\left(N_{\best}^\tau(r) \leq
A_{\best}^{\phi,\tau}, N_{\best}^\tau(r-\tau) \leq A_{\best}^{\phi,\tau}
\right)\;.\]
We then write $D_2= 2 (D_3+D_4)$,
with
\begin{align*}
D_3 & = \bE \left[ \sum_{r=r_{\phi}+2\tau-1}^{r_{\phi+1}-1}
\ind
\left(N_{\best}^\tau(r) \leq A_{\best}^{\phi,\tau}, N_{\best}^\tau(r- \tau)>
A_{\best}^{\phi,\tau}\right)\right] \;,
\\
D_4 &= \bE \left[ \sum_{r=r_{\phi}+2\tau-1}^{r_{\phi+1}-1}
\ind \left(N_{\best}^\tau(r) \leq
A_{\best}^{\phi,\tau}, N_{\best}^\tau(r- \tau)
\leq A_{\best}^{\phi,\tau} \right)\right] \;.
\end{align*}

$D_3$ can be upper bounded using Equation~\eqref{eq::conc_LB2}
in Lemma~\ref{lem::concentration_SWLB}. Indeed, if $N_{\best}^\tau(r) \leq A_{\best}^{\phi,\tau}$ and
$N_{\best}^\tau(r-\tau, \tau) > A_{\best}^{\phi,\tau}$, for large enough values of $\tau$, $\best$
can not be the leader and lost at least one duel against a suboptimal
leader
while having exactly $A_{\best}^{\phi,\tau}$ samples between
round $r-\tau$ and round $r-1$, thus

\[
\left\{N_{\best}^\tau(r) \leq A_{\best}^{\phi,\tau},
N_{\best}^\tau(r-\tau) > A_{\best}^{\phi,\tau} \right\}
\subset \cup_{s=r-\tau}^{r-1} \left\{\best \notin \cA_{s+1},
\ell^\tau(s)\neq \best, N_{\best}^\tau(s) = A_{\best}^{\phi,\tau} \right\}
\;.
\]
We use the same trick as for $D_1$ and $D_2$ to handle the sums and write
\begin{align*}
D_3 & \leq \bE \left[
\sum_{r=r_{\phi}+2\tau-1}^{r_{\phi+1}-1} \sum_{s=r-\tau}^{r-1} \ind \left(\best
\notin \cA_{s+1},
\ell^\tau(s) \neq \best,N_{\best}^\tau(s)= A_{\best}^{\phi,\tau} \right)  \right]
\\
& \leq \tau \bE \left[\sum_{r=r_{\phi}+2\tau-1}^{r_{\phi+1}-1} \ind \left(\best \notin \cA_{r+1},
\ell^\tau(r) \neq \best, N_{\best}^\tau(r)= A_{\best}^{\phi,\tau} \right) \right]\;.
\end{align*}
We can directly use Lemma~\ref{lem::concentration_SWLB},
however we remark that as we do not have to use an union bound
 on the values of $N^\tau_{\best}$ we can remove the factor
 $1/(1-e^{-\omega_k})$. Hence, we finally get
\[D_3 \leq \delta_\phi \tau(\tau+1) e^{- A_{\best}^{\phi,\tau} \omega_k}
\;.\]

We then handle $D_4$ by using the arguments introduced by \citet{baransi2014sub}
with some novelty due to the sliding window.
Indeed, we remark that if both $N_{\best}^\tau(r-\tau) \leq A_{\best}^{\phi,\tau}$
and $N_{\best}^\tau(r) \leq A_{\best}^{\phi,\tau}$, then $\best$ competes with at most
$2 A_{\best}^{\phi,\tau}$
\textit{different index} in the entire window $[r-\tau, r-1]$.
This is due to the fact that the index change only if $\best$
is pulled (can happen at most $A_{\best}^{\phi,\tau}$ times)
or if $\best$ loses one observation from the window $[r-2\tau, r-\tau-1]$ due to the sliding window
(which can also happen at most $A_{\best}^{\phi,\tau}$ times).
Thanks to these properties we know that during the interval
$[r-\tau, r-1]$ we are sure that $\best$ lost at least
$\tau - A_{\best}^{\phi,\tau}$ duels, and that a fraction
$1/2 A_{\best}^{\phi,\tau}$
of them occurred while the index of $\best$ remained the same.

Our objective is to highlight a property similar to the balance condition.
To do so we need to identify the fraction of the duels played by $\best$
with the \textit{same index} and
against \textit{non-overlapping} blocks
(i.e of mutually independent means)
of any suboptimal arm $k \in \K, k \neq \best$.
To avoid cumbersome notations
we summarize the elements that allow this conclusion,
first recalling the arguments of the previous paragraph:
\begin{itemize}
	\item $\best$ lost at least $\tau - A_{\best}^{\phi,\tau}$ duels in the window $[r- \tau, r-1]$
	\item A fraction $1/(2A_{\best}^{\phi,\tau})$
	of them has been played with a fixed index for $\best$,
	i.e with the subsample mean of the same block.
	With a forced exploration $B(\tau) = \sqrt{\log \tau}$ this block
	can have any size between $\sqrt{\log \tau}$ and $A_{\best}^{\phi,\tau}$.
	\item Among those duels,  a fraction of at least $1/(K-1)$ of them
	has been played against the same suboptimal arm $k \neq \best$.
\end{itemize}

The next step is to identify the proportion of these duels that have been played
 against non-overlapping blocks of $k$.
 As in the proof for the stationary case 
we proceed in $2$ steps. First we identify
the number of \textit{different} duels
(i.e the index of $k$ is not based on the same block of observations of $k$)
played by $\best$ against $k$.
However, thanks to the \textit{diversity flag} we know a new duel
happens after at most each $(K-1) (\log \tau)^2$ rounds.
So we further process the set of duels previously identified stating that:

\begin{itemize}
	\item A fraction of $\frac{1}{(K-1)(\log \tau)^2}$
	has been played against different index of $k$ based
	on different blocks of observations from the history of $k$, thanks to the diversity flag.
	\item As the blocks are of maximum size $A_{\best}^{\phi,\tau}$ a fraction
	at least $1/A_{\best}^{\phi,\tau}$ of them are \textit{non-overlapping}.
\end{itemize}

We put all these elements together to state that there exist some
$\beta \in (0,1)$ such that for any value of $\tau$
large enough $\best$ lost at least
$C^\tau=\left\lfloor
\frac{\beta \tau}{2(K-1)^2 (\log \tau)^2 (A_{\best}^{\phi,\tau})^2} \right\rfloor$
duels against non-overlapping blocks of some challenger $k$,
with a fixed index. We write this event $E_j^\tau$. Summing on all the arms, rounds, possible interval (index $n$) and size of the history of $\best$ (index $j$), we obtain
\begin{align*}
D_4 \leq& \bE\left[\sum_{k\neq \best}
\sum_{r=r_\phi+2\tau-1}^{r_{\phi+1}-1}
\sum_{n=1}^{2\left\lfloor A_{\best}^{\phi,\tau} \right\rfloor}
\sum_{j=\sqrt{\log \tau}}^{\left\lfloor A_{\best}^{\phi,\tau} \right\rfloor} \ind(E_j^\tau)\right] \;.
\end{align*}
As these events do not depend on $r$ and on $n$ we have
\begin{align*}
D_4 \leq& 2 \delta_\phi A_{\best}^{\phi,\tau}\sum_{k\neq \best} \sum_{j=\sqrt{\log \tau}}^{\left\lfloor A_{\best}^{\phi,\tau} \right\rfloor}\bE\left[\ind(E_j^\tau)\right] \\
\leq & 2 \delta_\phi A_{\best}^{\phi,\tau}\sum_{k\neq \best} \sum_{j=\sqrt{\log \tau}}^{\left\lfloor A_{\best}^{\phi,\tau} \right\rfloor}\alpha_k^\phi(C^\tau, j) \;.
\end{align*}
Here $\alpha_k$ is the balance function, as defined in Appendix~\ref{app::proof_s}. We index these functions by $\phi$ and $k$ in order to denote the balance function between $\best$ and $k$ in the phase $\phi$.
We recall the definition of $\alpha_k$, for any integer $M$
$$
\alpha_k^\phi(M, j) = \bE_{X \sim \nu_{\best}^\phi} \left((1-F_{k, j}^\phi(X))^M\right)\;,
$$
where $\nu_{k'}^\phi$ is the distribution of the sum of $j$
random variables drawn from the distribution of an
arm $k'$ in the phase $\phi$, and
$F_{k', j}^\phi$ its cdf.
 We then use the Lemma~\ref{lem::balance_UB},
 introduced and proved in Appendix~\ref{app::proof_s}.
 We recall that this result state that for any $u \leq \mu_k^\phi$ it holds that
\[\alpha_k(C^\tau, j) \leq e^{-j \kl(\mu_k^\phi, \mu_{\best})} u + (1-u)^{C^\tau} \;.\]
We write  $\kl(\mu_k^\phi, \mu_{\best})= \omega_k^\phi$, and choose the value
$u=\frac{3\log \tau}{C^\tau}$. Thanks to this choice, there exist a constant $\gamma>1$ such that
\begin{align*}
(1-u)^{C^\tau} & = \exp\left(C^\tau \log(1-u)\right) \\
& = \exp\left(C^\tau \log\left(1-\frac{3 \log \tau}{C^\tau}\right)\right) \\
& \leq \gamma \exp\left(-3 \log \tau\right)\\
& \leq \frac{\gamma}{\tau^3} \;.
\end{align*}
If we plug this expression to upper bound the sums we obtain
\begin{align*}
	D_4 & \leq 2 \delta_\phi A_{\best}^{\phi,\tau}\sum_{k\neq \best} \sum_{j=\sqrt{\log \tau}}^{\left\lfloor A_{\best}^{\phi,\tau} \right\rfloor} \left[e^{-j \omega_k^\phi} \frac{3 \log \tau}{C^\tau} + \frac{\gamma}{\tau^3} \right] \\
	& \leq 2 \delta_\phi A_{\best}^{\phi,\tau}\sum_{k\neq \best} \left[ \frac{e^{-\sqrt{\log \tau} \omega_k^\phi}}{1-e^{-\omega_k^\phi}}\frac{3 \log \tau}{C^\tau} + \frac{\gamma A_{\best}^{\phi,\tau}}{\tau^3} \right] \\
	& \leq 2 \delta_\phi A_{\best}^{\phi,\tau} (K-1) \left[\frac{e^{-\sqrt{\log \tau} \omega^\phi}}{1-e^{-\omega^\phi}}\frac{3 \log \tau}{C^\tau} + \frac{\gamma A_{\best}^{\phi,\tau}}{\tau^3} \right]
	\;,
\end{align*}

where $\omega^\phi =\min_{k \neq \best} \omega_k^\phi$.
Even if these terms look impressive we explain
 in the next section that they are not first order terms in the regret analysis.
 Indeed, if we only look at the order of $A_{\best}^{\phi,\tau}$,
 $C^\tau$, we can use the same argument as in the proof of
 Lemma~\ref{lem::balance_statio}. Considering that for any integer
 $k>1$, $(\log \tau)^k = o\left(e^{-\sqrt{\log r} \omega}\right)$
 we obtain that asymptotically
 $D_4$ is a $o\left(\frac{\delta_\phi}{\tau \log \tau^{k'}}\right)$
 for any integer $k'\geq 1$.

\subsection{Summary: Upper Bound on the Dynamic Regret}

\paragraph{Objective} Due to the many terms introduced in the analysis
we provide in this section a clarification of the final terms in the regret.
First of all we recall the decomposition introduced in
the Section~\ref{sec:NS-lb-sda} to control the number of pulls
of a suboptimal arm during a phase $\phi \in [1, \Gamma_T]$,
\begin{align*}
	\bE[N_k^\phi] &\leq 2 \tau + \frac{\delta_\phi A_k^{\phi,\tau}}{\tau}
	+ \bE [c_{k,1}^{\phi, \tau}]
	+ \bE[c_{k,2}^{\phi, \tau}] + \bE[c_{k,3}^{\phi, \tau}] \;.
\end{align*}

\paragraph{Results of Section~\ref{app::NS_lbsda}}
We first provide the results we obtained in Appendix~\ref{app::NS_lbsda},
that are true for any value of the sliding window $\tau$ and the function
$A_k^{\phi,\tau}$, that we will properly calibrate later.
We also recall that for any sub-optimal arm $k$ in a phase $\phi$
we defined a constant $\omega_k^\phi$ (written $\omega_k$ in
the proof as the phase is explicit), satisfying
$\omega_k^\phi= \min\left(\kl\left(\mu_k^\phi,
\frac{1}{2}(\mu_k^\phi + \mu_{\best}^\phi)\right),
\kl\left(\mu_{\best}^\phi, \frac{1}{2}(\mu_k^\phi
+ \mu_{\best}^\phi)\right) \right)$.

We first obtained an upper bound on $\bE [c_{k, 1}^{\phi,\tau}]$,
which controls the probability that a "concentrated" suboptimal arm $k$ is pulled
when the best one is leader, and $\bE [c_{k, 2}^{\phi,\tau}]$,
that represents the expectation of the number of pulls
of the arm $k$ because of the diversity flag when the best arm is leader.
These upper bounds are
\begin{align*}
	 \bE[c_{k, 1}^{\phi,\tau}]
	 \leq \delta_\phi (\tau+1)\frac{e^{-A_k^{\phi,\tau} \omega_k}}{1-e^{-\omega_k}} \;,
	 \quad
	 \bE[c_{k, 2}^{\phi,\tau}]
	 \leq \delta_\phi (\tau+1) \sum_{k' \neq \best}\frac{e^{-(\log \tau)^2 \omega_{k'}}}{1-e^{-\omega_{k'}}}\;.
\end{align*}

We then provided an upper bound of $\bE [c_{k, 3}^{\phi, \tau}]$
composed of multiple terms.
This is because this term represents the expectation of the number of rounds
 when the best arm is not leader.
 To provide a general overview, this term is composed
 of two parts:
 the first one for the cases when the best arm \textit{has already}
 been leader in the last $\tau$ rounds,
 and the case when the best arm \textit{has never} been leader in the last $\tau$ round.
 The first general scenario was handled by the constants $C_1$, $C_2$ and $C_3$,
 that we upper bounded in expectation by,
\[\bE\left[C_1 + C_2 + C_3\right] \leq 3 \delta_\phi \tau^2 (\tau+1)(K-1)\sum_{k' \neq \best} \frac{e^{-\left\lfloor \frac{\tau}{2K(K-1)}\omega_{k'} \right\rfloor }}{1-e^{-\omega_{k'}}} \;.\]
We observe that this term has a larger order in $\tau$
than the previous one before the exponential, but as a larger term
in the exponential that compensates.
After that, we handled the cases when the best arm has
never been leader in . We distinguish again different cases.
The terms $D_1$ and $D_3$ provide terms that share similar
order with the ones we obtained before, namely:
\begin{align*}
	D_1 \leq 2 \delta_\phi (\tau+1) \sum_{k' \neq \best}\frac{e^{-A_{k'}^{\phi,\tau} \omega_{k'}}}{1-e^{-\omega_{k'}}}  \quad \text{ and} \quad
	D_3 \leq  \delta_\phi \tau (\tau+1) e^{-(\log \tau)^2 \omega_k}
\end{align*}
The last term is the one that corresponds to the balance condition in the stationary case. Its adaptation to the non-stationary case was non trivial but we could provide an upper bound, leveraging on the properties detailed in Appendix~\ref{app::proof_s}. We obtained
\[D_4 \leq 2 \delta_\phi A_{\best}^{\phi,\tau} (K-1) \left[\frac{e^{-\sqrt{\log \tau} \omega^\phi}}{1-e^{-\omega^\phi}}\frac{3 \log \tau}{C^\tau} + \frac{\gamma A_{\best}^{\phi,\tau}}{\tau^3} \right] \;,\]
where $C^\tau= \left\lfloor
\frac{\beta \tau}{2(K-1)^2 (\log \tau)^2 (A_{\best}^{\phi,\tau})^2} \right\rfloor$.

\paragraph{Tuning of the parameters} The previous results allow to control precisely the dynamic regret of SW-LB-SDA for general values of $\tau$ and the constants of the problem. We first remark that one could tune each of the constants $A_{\best}^{\phi, \tau}$ to optimize the term in each phase. However, in this paragraph we propose a more general asymptotic analysis that proves that an optimal tuning of $\tau$ allows the algorithm to reach optimal guarantees. To catch this generality we will simply define $A_{\best}^{\phi, \tau}= A(\tau) = B \log \tau$ for some constant $B$, and define $\omega = \min_{\phi \in [1, \Gamma_T]} \{\min_{k \neq \best} \omega_k^\phi\}$. With these new definitions we can regroup several terms together, and obtain for $\tau>K$
\begin{align*}
\bE[N_k^\phi] \leq & 2 \tau
+ \frac{\delta_\phi A(\tau)}{\tau} +
\frac{2\delta_\phi(\tau+1)K}{1-e^{-\omega}} e^{-A(\tau) \omega}
+ \frac{K\delta_\phi \tau (\tau+1)}{1-e^{-\omega}}e^{-(\log \tau)^2 \omega} \\
&
+ 3 \delta_\phi \tau^2 (\tau+1)(K-1)^2
\frac{e^{-\left\lfloor \frac{\tau}{2K(K-1)}\omega \right\rfloor }}{1-e^{-\omega}}
+ 2 \delta_\phi A(\tau) (K-1)
\left[\frac{e^{-\sqrt{\log \tau} \omega}}{1-e^{-\omega}}\frac{3 \log \tau}{C^\tau}
+ \frac{\gamma A(\tau)}{\tau^3} \right]
\end{align*}
As the only term that depends on the phase is $\delta_\phi$
it is now straightforward to sum on the phases and the arms to obtain the dynamic regret,
recalling that $\sum_{\phi=1}^{\Gamma_T} \delta_\phi = T$.
Without loss of generality, we also assume that for all $\phi$ and for all
$k \neq \best$, $\Delta_{k}^\phi \leq 1$.

%
\begin{align*}
	\cR_T =& \sum_{\phi=1}^{\Gamma_T} \sum_{k \neq \best} \bE[N_k^\phi] \Delta_k^\phi \\
	\leq &\underbrace{ 2 (K-1) \tau \Gamma_T
	+ \frac{(K-1) T A(\tau)}{\tau}}_{E_1} +
	\underbrace{\frac{2T(\tau+1)K(K-1)}{1-e^{-\omega}} e^{-A(\tau) \omega}}_{E_2} \\
	& + \underbrace{\frac{T K (K-1) \tau (\tau+1)}{1-e^{-\omega}}e^{-(\log \tau)^2 \omega}}_{E_3}
	+ \underbrace{\frac{3 T(K-1) \tau^2 (\tau+1)(K-1)^2}{1-e^{-\omega}} e^{-\left\lfloor \frac{\tau}{2K(K-1)}\omega \right\rfloor }}_{E_4} \\
	& +  \underbrace{2 T A(\tau) (K-1)^2
	\left[\frac{e^{-\sqrt{\log \tau} \omega}}{1-e^{-\omega}} \frac{3 \log \tau}{C^\tau} + (K-1)\frac{\gamma A(\tau)}{\tau^3} \right]}_{E_5}
	\end{align*}

Knowing the horizon $T$ and an order of the number of breakpoints $\Gamma_T$ we propose a tuning for $\tau$ in $\sqrt{\frac{T\log T}{\Gamma_T}}$. We then prove that the only first order terms in the decomposition are the terms in $E_1$.

First, as $\log \tau$ is of order $\log T$, choosing $A(\tau) = \frac{6}{\omega}\log \tau$ ensures that $E_2$ is upper bounded by a constant. Then, the terms $E_3$ and $E_4$ are also both upper bounded by constants as the term in the exponent dominates the polynomial in $\tau$ before it. The term $E_5$ is a bit more touchy. Indeed, its second component causes no difficulty and is upper bounded by a constant. However, for the first term we need to use the fact $C^\tau$ is of order $\tau/\log(\tau)^j$, hence there exists some integer $j'$ such that the dominant term in $E_5 $ is of order $\frac{T}{\tau}\times (\log \tau)^{j'} e^{-\sqrt{\log \tau} \omega}$. As in Appendix~\ref{app::proof_s} we use that $(\log \tau)^{j'} e^{-\sqrt{\log \tau} \omega}=o(\log(\tau)^{-1})$ (for instance). Hence, thanks to the log terms $E_5$ is of lower order than $E_1$. Finally, we obtain
$$
\cR_T =O(\sqrt{T\Gamma_T \log T}) \;.
$$
This concludes the proof of Theorem~\ref{th::regret_SWLB}.

\end{document}